\theoremstyle{plain}
\newtheorem{theorem}{Theorem}[section]
\newtheorem{proposition}[theorem]{Proposition}
\newtheorem{lemma}[theorem]{Lemma}
\theoremstyle{definition}
\newtheorem{definition}[theorem]{Definition}
\theoremstyle{remark}
\newtheorem{remark}[theorem]{Remark}
\DeclareMathOperator*{\diag}{\mathrm{diag}}
\DeclareMathOperator*{\Tr}{\mathbf{tr}}
\DeclareMathOperator*{\argmin}{\arg\min}
\DeclareMathOperator*{\argmax}{\arg\max}
\newcommand{\mb}{\mathbf}
\newcommand{\msf}{\mathsf}
\newcommand{\la}{\langle}
\newcommand{\ra}{\rangle}
\newcommand{\sig}{\sigma}
\newcommand{\R}{\mathbb{R}}
\newcommand{\ourmethod}{\texttt{HiRef}}
\NewDocumentCommand{\ourmethodFull}{s}{
\unskip
  \IfBooleanTF{#1}
    {Hierarchical refinement}
    {hierarchical refinement}
  \unskip
}
\newcommand{\ourmethodFullCap}
    {Hierarchical Refinement\xspace}
\icmltitlerunning{Hierarchical Refinement: Optimal Transport to Infinity and Beyond}
\begin{document}

\twocolumn[
\icmltitle{Hierarchical Refinement: Optimal Transport to Infinity and Beyond}



\icmlsetsymbol{equal}{*}

\begin{icmlauthorlist}
\icmlauthor{Peter Halmos}{xxx,equal}
\icmlauthor{Julian Gold}{xxx,yyy,equal}
\icmlauthor{Xinhao Liu}{xxx}
\icmlauthor{Benjamin J. Raphael}{xxx}
\end{icmlauthorlist}

\icmlaffiliation{xxx}{Department of Computer Science, Princeton University}
\icmlaffiliation{yyy}{Center for Statistics and Machine Learning, Princeton University}

\icmlcorrespondingauthor{Benjamin J. Raphael}{braphael@princeton.edu}

\icmlkeywords{Machine Learning, ICML}

\vskip 0.3in
]

\printAffiliationsAndNotice{\icmlEqualContribution}

\begin{abstract}
Optimal transport (OT) has enjoyed great success in machine learning as a principled way to align datasets via a least-cost correspondence, driven in large part by the runtime efficiency of the Sinkhorn algorithm \cite{sinkhorn}.
However, Sinkhorn has quadratic space and time complexity in the number of points, limiting scalability to larger datasets. Low-rank OT achieves linear complexity, but by definition, cannot compute a one-to-one correspondence between points. When the optimal transport problem is an assignment problem between datasets then an optimal mapping, known as the \emph{Monge map}, is guaranteed to be a bijection. In this setting, we show that the factors of an optimal low-rank coupling co-cluster each point with its image under the Monge map. We leverage this invariant to derive an algorithm, 
\emph{\ourmethodFullCap} 
(\ourmethod{}), that dynamically constructs a multiscale partition of each dataset using low-rank OT subproblems, culminating in the bijective Monge map. \ourmethodFullCap{} runs in log-linear time and linear space, retaining the advantages of low-rank OT while overcoming its limited resolution. We demonstrate the advantages of \ourmethodFullCap{} on several datasets, including ones containing over a million points, scaling full-rank OT to problems previously beyond Sinkhorn's reach.
\end{abstract}

\section{Introduction}
\label{intro}

Optimal transport (OT) is a mathematical framework for comparing probability distributions $\mu$ and $\nu$. Given a cost function $c$, the \emph{Monge problem} is to find a mapping $T$ transforming a distribution $\mu$ into $\nu$ (i.e. $T_{\sharp} \mu = \nu$) with least-cost. A relaxation of this problem, called the \emph{Kantorovich problem}, instead seeks a least-cost coupling $\gamma$ between  $\mu$ and $\nu$. 
In the Kantorovich formulation, mass splitting is allowed and thus a solution always exists; in contrast, a Monge map between $\mu$ and $\nu$ may not exist. When a Monge map $T$ does exist, the solution to the Kantorovich problem is a coupling $\gamma = \left( \mathrm{id} \times T \right)_{\sharp} \mu$ supported on its graph, and the Monge and Kantorovich problems coincide \cite{brenier1991polar}.

When $\mu$ and $\nu$ are discrete uniform measures on $n$ 
points
the optimal transport problem reduces to an assignment problem. Classical algorithms such as the Hungarian algorithm and Network Simplex \cite{Tarjan1997, Orlin1997}, solve this in cubic time. The Sinkhorn algorithm \cite{sinkhorn} solves the entropy-regularized Kantorovich problem with quadratic runtime, greatly expanding the applicability of computational OT. However, the Sinkhorn algorihtm requires quadratic space to store the coupling $\gamma$.

In recent years, OT has found numerous applications in machine learning and across science, including: 
domain adaptation 
\cite{courty2014domain, solomon2015convolutional}, 
self-attention \cite{tay20a, sander22a, geshkovski2023mathematical},
computational biology \cite{schiebinger2019optimal, yang2020predicting, zeira2022PASTE, Bunne_2023, destot, klein2023moscot},
unpaired data translation \cite{KorotinLGSFB21, bortoli2024schrodinger, tong2024improving, klein2024generative}, and alignment problems in transformers and large language models \cite{melnyk2024distributional, li2024gilot}. The \emph{least-cost} principle of optimal transport is crucial for training high-quality generative models using  Schrödinger bridges, flow-matching, diffusion models, or  neural ordinary differential equations \cite{finlay2020trainneuralodeworld, tong2024improving,bortoli2024schrodinger, kornilov2024optimal,klein2024generative}. These models typically require millions to hundreds of millions of data-points to achieve high-performance at scale \cite{dalle}, limiting the scope of OT for generative modeling.

As modern datasets grow to have tens of thousands or even millions of points, the quadratic space and time complexity of Sinkhorn becomes increasingly prohibitive. 
This limitation is widely recognized in the machine learning literature, with \cite{bortoli2024schrodinger} noting that the quadratic complexity of optimal transport renders its application to modern datasets on the order of millions of points impractical. A number of approaches have been proposed to address scaling OT to massive datasets which avoid instantiating a full coupling matrix. Mini-batch OT \cite{genevay18learning} improves scalability, but incurs significant biases \cite{JMLR:v20:18-079, KorotinLGSFB21, fatras2021jumbot} as each mini-batch alignment is a poor representation of the global coupling.
Multiple works have investigated the theoretical properties of mini-batch estimators of the coupling \cite{fatras2020learning, fatras2021minibatch}, while others have attempted to mitigate this bias using partial or unbalanced OT that allows mass variation between mini-batches \cite{nguyen2021improving, fatras2021jumbot}. However, these approaches introduce additional hyperparameters to control the degree of unbalancedness, and ultimately remain biased, local approximations of the global coupling.

Neural optimal transport methods \cite{pmlr-v119-makkuva20a, Bunne_2023, fan2023neural, korotin2023neural, buzun2024expectile}, parametrize the Monge map as a neural network instead of materializing a quadratic coupling matrix. However, these methods have noted limitations recovering faithful maps \cite{KorotinLGSFB21}. 




Another approach to improve space complexity of OT is to introduce a \emph{low-rank} constraint on the coupling matrix
in the Kantorovich problem. This has been done by parameterizing the coupling through a set of low-rank factors \cite{Scetbon2021LowRankSF,scetbon2022linear,scetbon2022lowrank,scetbon2023unbalanced,FRLC}
or by using a proxy objective for the low-rank problem, factoring the transport through a small number of anchor points \cite{forrow19a,lin2021making}. For a given rank $r$ these approaches have $O(n r)$ space complexity, enabling \emph{linear} time and space scaling. Low-rank OT has been used successfully on datasets on the order of $10^{5}$ samples with ranks on the order of $10^{1}$ \cite{scetbon2023unbalanced, FRLC,HMOT,klein2023moscot}, but computing \emph{full-rank} couplings between datasets of sizes on the order of $10^{5}$ and greater
has not yet been accomplished.

\vspace{-2mm}

\paragraph{Contributions} 
We introduce \ourmethodFullCap{} (\ourmethod{}), an algorithm to scalably compute a full-rank alignment between two equally-sized input datasets $\mathsf{X}$ and $\mathsf{Y}$ by solving a hierarchy of low-rank OT sub-problems. The success of this refinement is driven by a theoretical result, Proposition~\ref{prop:low_rank_monge_main}, stating that factors of an optimal low-rank coupling between $\mathsf{X}$ and $\mathsf{Y}$ co-cluster points $\mathsf{X}$ with their image under the Monge map. 
We use Proposition~\ref{prop:low_rank_monge_main} recursively to obtain increasingly fine partitions of $\mathsf{X}$ and $\mathsf{Y}$. At each scale, the solutions to low-rank OT sub-problems 
are bijections between the partitions of $\mathsf{X}$ and $\mathsf{Y}$. Iterating to the finest scale gives a bijection between $\mathsf{X}$ and $\mathsf{Y}$.

\ourmethodFullCap\ constructs a \emph{multiscale partition} of each dataset, and thus is related to \cite{gerber2017multiscale}, which introduced a general framework for multiscale optimal transport using such partitions, and the earlier work of \cite{merigot2011multiscale}. 
Unlike \cite{merigot2011multiscale, gerber2017multiscale}, \ourmethodFullCap\ (i) does not assume multiscale partitions for each dataset are given, instead constructing them on the fly; and (ii) operates intrinsically to the data, without a mesh or anchor points in the ambient space of the data, avoiding the curse of dimensionality.

We demonstrate that \ourmethodFullCap\ computes OT maps efficiently in high-dimensional spaces, often matching or even outperforming Sinkhorn in terms of primal cost. Moreover, \ourmethod{} has linear space complexity and time complexity scaling log-linearly in the dataset size. Unlike low-rank OT, \ourmethodFullCap\ places $\mathsf{X}$ and $\mathsf{Y}$ in bijective correspondence. \ourmethodFullCap\ scales to over a million points, enabling the use of OT on massive datasets without incurring the bias of mini-batching.
\section{Background and Related Work}
Suppose $\mathsf{X} = \{ \mathbf{x}_i \}_{i=1}^n$ and $\mathsf{Y} = \{ \mathbf{y}_j \}_{j=1}^m$ are datasets in the same metric space $(\mathcal{X}, \mathsf{d}_{\mathcal{X}})$. Let $c: \mathcal{X} \times \mathcal{X} \to \mathbb{R}_{+}$ be a cost function. This cost $c$ is often assumed to satisfy strict convexity or to be a metric. Datasets $\mathsf{X}$ and $\mathsf{Y}$ are represented as discretely supported probability measures $\mu = \sum_{i=1}^{n} \mathbf{a}_{i} \delta_{ \mathbf{x}_{i}}$ and $\nu = \sum_{j=1}^{m} \mathbf{b}_{j} \delta_{ \mathbf{y}_{j}}$ for probability vectors $\mathbf{a} \in \Delta_{n}$ and $\mathbf{b} \in \Delta_{m}$. Throughout, $\Delta_k$ denotes the \emph{$k$-simplex} 
$\{ \mathbf{p} \in \mathbb{R}_+^k : \sum_i \mathbf{p}_i = 1 \}$, 
the set of probability vectors of length $k$.

\paragraph{Monge Problem}
Optimal transport has its origin in the \emph{Monge problem} \cite{monge1781memoire}, concerned with finding an optimal map $T : \mathsf{X} \to \mathsf{Y}$ pushing $\mu$ forward to $\nu$:
\begin{align}
\label{eq:monge_prob}
\mathrm{M}_c(\mu, \nu) = 
\min_{
T : 
T_{\sharp} \mu = 
\nu
} 
\mathbb{E}_{
\mu
}c(x, T(x))\,. 
\end{align}
Above, $T_{\sharp}\mu$ is the pushforward of $\mu$ under $T$, the measure on $\mathsf{Y}$ with 
$T_{\sharp} \mu (B) := \mu(T^{-1}(B))$ for 
any (measurable) set $B \subset \mathsf{Y}$. In general, a Monge map may not exist (e.g. if $m > n$). However, when $|\mathsf{X}| = |\mathsf{Y}| = n$ and $\mathbf{a}, \mathbf{b}$ are uniform then the Monge problem becomes the \emph{assignment problem} and has a bijective solution \cite{Thorpe2017IntroductionTO}.

\paragraph{Kantorovich Problem}
The \emph{Kantorovich problem} \cite{kantorovich1942transfer} was introduced as a relaxation of the Monge problem. In contrast to the Monge problem, the Kantorovich problem allows mass-splitting and a solution is always guaranteed to exist. 
Define the \emph{transport polytope} $\Pi_{\mathbf{a}, \mathbf{b}}$ as the following set of coupling matrices
\begin{equation}
\begin{split}
& \Pi_{\mathbf{a}, \mathbf{b} } := \left\{ \mathbf{P} \in \mathbb{R}_+^{n \times m} : \mathbf{P} \mathbf{1}_m = \mathbf{a}, \mathbf{P}^\top \mathbf{1}_n = \mathbf{b} \right\},
\end{split}
\end{equation}
respectively with left (or ``source'') marginal $\mathbf{a}$ and with right (or ``target'') marginal $\mathbf{b}$. 
For the cost $c(\cdot, \cdot)$, define the cost matrix $\mathbf{C}$ by $\mathbf{C}_{ij} = c(x_{i}, y_{j})$. In this discrete setting, the Kantorovich problem seeks a least cost coupling matrix $\mathbf{P} \in \Pi_{\mathbf{a}, \mathbf{b}}$ between the probability vectors $\mathbf{a}, \mathbf{b}$ associated to each measure $\mu, \nu$:
\begin{align}
\label{eq:kantorovich_problem}
\mathrm{W}_{c}(\mu, \nu) = \min_{\mathbf{P} \in \Pi_{\mathbf{a}, \mathbf{b}}} \langle \mathbf{C}, \mathbf{P} \rangle_{F} \, . 
\end{align}
The optimal value $\mathrm{W}_{c}(\mu, \nu)$ of \eqref{eq:kantorovich_problem} is called the \emph{$c$-Wasserstein distance} between $\mu$ and $\nu$.

\paragraph{Sinkhorn Algorithm and the $\epsilon$-schedule} 
The Sinkhorn algorithm \cite{sinkhorn} relaxes the classical linear-programming formulation of optimal transport by solving an entropy regularized version of \eqref{eq:kantorovich_problem},
\begin{align}
\label{eq:ent_wasserstein}
\mathrm{W}_\epsilon ( \mu, \nu) 
:
= \min_{\mathbf{P} \in \Pi_{\mathbf{a}, \mathbf{b}}} \langle \mathbf{C}, \mathbf{P} \rangle _F - \epsilon H( \mathbf{P}),
\end{align}
where $H(\mathbf{P}) := -\sum_{ij} \mathbf{P}_{ij} ( \log \mathbf{P}_{ij} -1)$ is the Shannon entropy, and the parameter $\epsilon > 0$ is the regularization strength. The Sinkhorn algorithm improved the $O(n^3 \log n)$ time complexity of classical techniques used for OT such as the Hungarian algorithm \cite{Kuhn1955Hungarian} and Network Simplex \cite{Orlin1997, Tarjan1997} to $O(n^2 \log n)$ \cite{luo2023improved}. As $\epsilon \downarrow 0$, the optimal coupling $\mathbf{P}^{\star, \epsilon}$ for \eqref{eq:ent_wasserstein} converges to a sparse optimal coupling for \eqref{eq:kantorovich_problem} at an extremal point of the transport polytope (c.f. \cite{peyre2019computational}). 
However, the number of iterations required scales as $\mathrm{poly}(1/\epsilon)$, diverging as $\epsilon$ decreases. A technique used to improve this scaling is the $\epsilon$-schedule, an adaptive, monotone-decreasing and step-dependent set of entropy parameters $\epsilon_{1} > \epsilon_{2} > \dots > \epsilon_{t_{\mathrm{fin}}}$. This anneals Problem~\ref{eq:ent_wasserstein} from high-entropy to low-entropy, gradually driving a dense initial condition to a sparse solution with a $\log{(1 / \epsilon)}$ rate \cite{Chen2023}.

\paragraph{Low-rank Optimal Transport}


The nonnegative rank $\mathrm{rk}_+(\mathbf{M})$ of a nonnegative matrix $\mathbf{M} \succcurlyeq 0$ is the smallest number of nonnegative rank-1 matrices summing to $\mathbf{M}$; i.e. $\mathrm{rk}_+(\mathbf{M})$ is the smallest integer $z$ such that there exist nonnegative vectors $\mathbf{q}_{1}, \dots, \mathbf{q}_{z} \succcurlyeq 0$ and $\mathbf{r}_{1}, \dots, \mathbf{r}_{z} \succcurlyeq 0$ satisfying $\mathbf{M} = \sum_{i=1}^{z} \mathbf{q}_{i} \mathbf{r}_{i}^{\top}$.
Let $\Pi_{\mathbf{a}, \mathbf{b}}(r) := \{ \mathbf{P} \in \Pi_{\mathbf{a}, \mathbf{b}} : \mathrm{rk}_+(\mathbf{P})= r\}$ be the set of rank-$r$ couplings. The low-rank Wasserstein problem for general cost matrix $\mathbf{C}$ is:
\begin{align}
\label{eq:lora_wass}
    \mathbf{P}^\star 
    = \argmin_{
    \mathbf{P} \in \Pi_{\mathbf{a},\mathbf{b}}(r) }  
    \langle 
    \mathbf{C}, \mathbf{P}
    \rangle_F \,. 
\end{align}
From \cite{cohen1993nonnegative}, each $\mathbf{P} \in \Pi_{\mathbf{a}, \mathbf{b}}(r)$ may be decomposed as 
\vspace{-4mm}
\begin{equation}
\mathbf{P} = \sum_{i=1}^{r} (1 / \mathbf{g}_{i}) \mathbf{Q}_{\cdot,i} \mathbf{R}_{\cdot,i}^{\top} :=  \mathbf{Q} \mathrm{diag}(1/\mathbf{g}) \mathbf{R}^\top,
\end{equation} 
\vspace{-6mm}

where $\mathbf{g} \in \Delta_r$, $\mathbf{Q} \in \Pi_{ \mathbf{a}, \mathbf{g} }$ and $\mathbf{R} \in \Pi_{\mathbf{b}, \mathbf{g} }$. This factorization was introduced to optimal transport by \cite{Scetbon2021LowRankSF} in the context of the general low-rank problem \eqref{eq:lora_wass}. The  factors $\mathbf{Q}$ and $\mathbf{R}$  constitute co-clusterings of datasets $\mathsf{X}$ and $\mathsf{Y}$ onto the \emph{same} set of $r$ components. Other factorizations have recently been proposed \cite{FRLC}, using $\mathbf{Q}, \mathbf{R}$ and an intermediate latent coupling $\mathbf{T}$ 
to solve \eqref{eq:lora_wass} where $\mathsf{X}$ and $\mathsf{Y}$ have $r_{1}$ and $r_{2}$ components, respectively.

\paragraph{Hierarchical and Multiscale Approaches to OT}

Hierarchical optimal transport \cite{schmitzer2013hierarchical} 
is a variant of OT modeling data and transport
at two scales, using Wasserstein distances as
the 
coarse-scale 
ground costs. 
It has been applied to document representation \cite{yurochkin2019hierarchical}, domain adaptation \cite{el2022hierarchical}, sliced Wasserstein distances \cite{bonneel2015sliced, nguyen2022hierarchical} and to give a discrete formulation of transport between Gaussian mixture models \cite{chen2018optimal, delon2020wasserstein}. These works 
build interpretable,
coarse-grained structure into a single coupling, rather than 
solving for a sequence of couplings at progressively finer scales as in the present work.

Multiscale approaches to OT 
generalize hierarchical OT to a progression of scales. Building on the semidiscrete approach of \cite{aurenhammer1998minkowski},  \cite{merigot2011multiscale} uses Lloyd's algorithm to progressively coarse-grain the target measure.
More recently, 
using a regular family of multiscale partitions (Definition~\ref{def:reg_fam_msp}) on each dataset, \cite{gerber2017multiscale} formalize a general  hierarchical approach to the Kantorovich problem \eqref{eq:kantorovich_problem}. They propose: (i) solving a Kantorovich problem between the coarsest partitions of $\mathsf{X}$ and $\mathsf{Y}$ in their respective multiscale families; and (ii) propagation of the optimal coupling at scale $t \in \{1, \dots, \kappa-1\}$ to initialize the optimization at scale $t+1$. They take as input a chain of partitions and measures across scales $(\mathsf{X}^{(1)}, \mu_{1}) \to \cdots \to (\mathsf{X}^{(\kappa)}, \mu_{\kappa})$ and $(\mathsf{Y}^{(1)}, \nu_{1}) \to \cdots \to(\mathsf{Y}^{(\kappa)}, \nu_{\kappa})$ where each dataset $\mathsf{X}, \mathsf{Y}$ is identified with the trivial partitions $\mathsf{X}^{(\kappa)} = \{ \{ \mathbf{x} \} : \mathbf{x} \in \mathsf{X} \}$ and $\mathsf{Y}^{(\kappa)} = \{ \{ \mathbf{y} \} : \mathbf{y} \in \mathsf{Y} \}$. 
At the finest scale $\kappa$, \cite{gerber2017multiscale} recover the original datasets and a near optimal coupling for \eqref{eq:kantorovich_problem}.

A naive implementation of the above idea requires quadratic memory complexity, but \cite{gerber2017multiscale} propose several propagation strategies to mitigate this, following \citep{glimm2013iterative, oberman2015efficient, schmitzer2016sparse}. These strategies use the optimal coupling at scale $t$ to restrict the support of the coupling computed at the next scale using local optimality criteria.
In the next section, we give our own such criterion, Proposition~\ref{prop:low_rank_monge_main}.

\begin{figure}[tbp]
\centerline{\includegraphics[width=\linewidth]{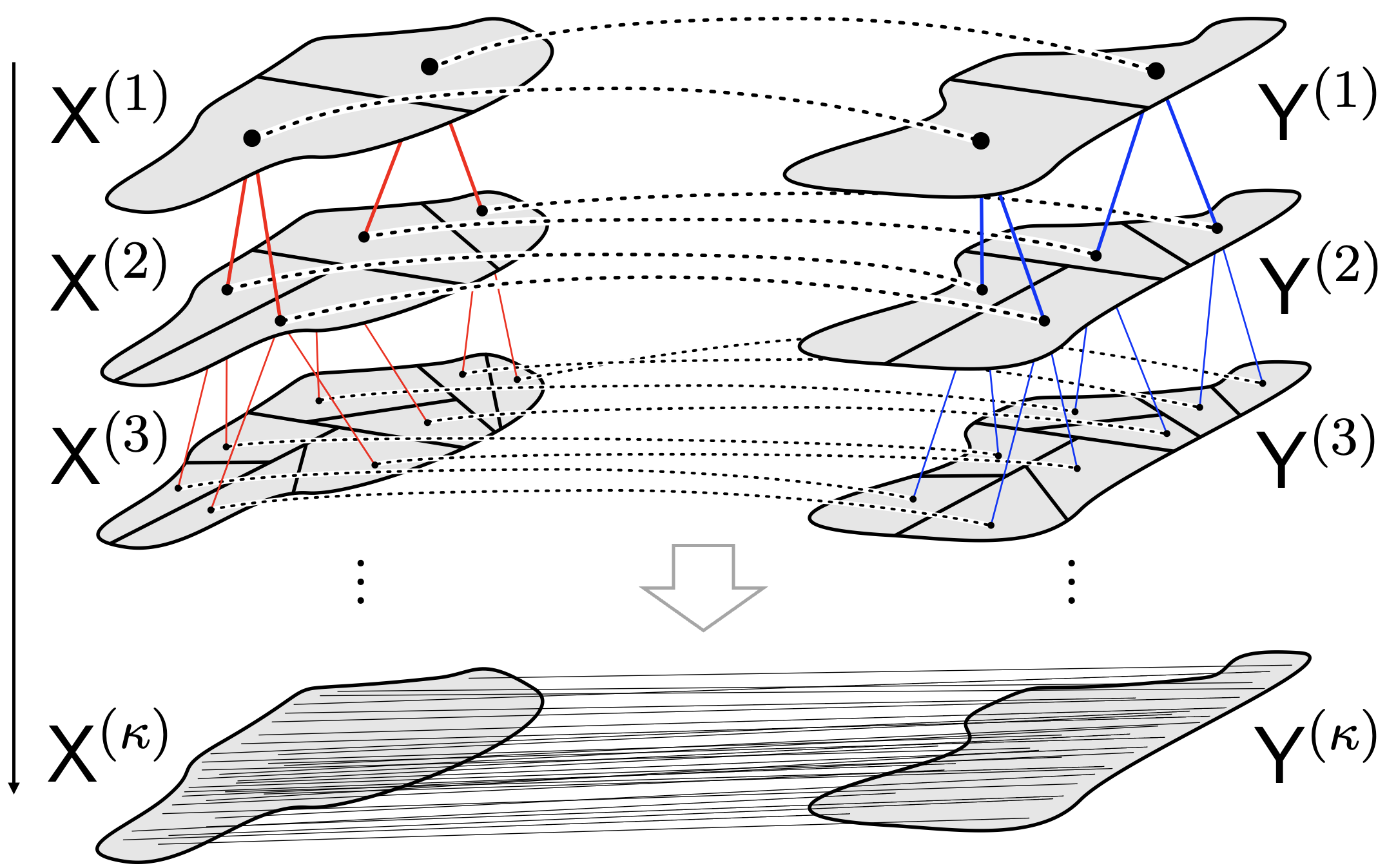}}
\caption{
\ourmethodFullCap algorithm: low-rank optimal transport is used to progressively refine partitions at the previous scale, with the coarsest scale partitions denoted $\mathsf{X}^{(1)}, \mathsf{Y}^{(1)}$, and the finest scale partitions $\mathsf{X}^{(\kappa)}, \mathsf{Y}^{(\kappa)}$ corresponding to the individual points in the datasets.}
\label{fig:fig1}
\end{figure}

\section{Hierarchical Refinement}

\subsection{Low-rank optimal transport co-clusters source-target pairs under the Monge map}

We first show that under a few assumptions,  the optimal low-rank factors $(\mathbf{Q}^\star, \mathbf{R}^\star)$ for a \emph{variant} of the low-rank Wasserstein problem \eqref{eq:lora_wass} have qualities suited to our refinement strategy. Specifically, we parameterize low-rank couplings $\mathbf{P}$ of rank-$r$ using the factorization $\mathbf{P} = \mathbf{Q} \mathrm{diag}(1/ \mathbf{g} ) \mathbf{R}^\top$ of \cite{Scetbon2021LowRankSF}, fixing $\mathbf{g} \in \Delta_r$ to be uniform. Define the following variant of \eqref{eq:lora_wass}: 
\begin{align}
(\mathbf{Q}^\star, \mathbf{R}^\star)
&= \arg\min_{(\mathbf{Q}, \mathbf{R})}
 \left\langle \mathbf{C}, \mathbf{Q} \mathrm{diag}(1/ \mathbf{g}) \mathbf{R}^\top
\right\rangle_F \, \label{eq:lora_wass_variant}  \\
\text{s.t.} \quad & \mathbf{Q} \in \Pi_{\mathbf{a}, \mathbf{g}}, \, \mathbf{R} \in \Pi_{\mathbf{b}, \mathbf{g}},  \,
  \mathbf{g} = (1/r)\mathbf{1}_r  \nonumber
\end{align}
Proposition~\ref{prop:low_rank_monge_main} below is the main structural result behind \ourmethodFullCap{}. It says that when optimal $\mathbf{Q}^\star$ and $\mathbf{R}^\star$ for \eqref{eq:lora_wass_variant} correspond to hard-clusterings (partitions) of each dataset, given by clustering functions $\mathsf{q}^\star : \mathsf{X} \to [r], \mathsf{r}^\star : \mathsf{Y} \to [r]$, one has $\msf{q}^\star = \msf{r}^\star \circ T^\star$, where $T^\star$ is a Monge map.

\begin{proposition}[Optimal low-rank factors co-cluster Monge pairs]
\label{prop:low_rank_monge_main}
Let $\mathsf{X}, \mathsf{Y} \subset \mathbb{R}^d$ with $|\mathsf{X}| = |\mathsf{Y}| = n$, with cost matrix $\mb{C}$ that is strictly $r$-Monge separable (Definition~\ref{def:r_Monge_sep}).
Let $\mathbf{a}, \mathbf{b} \in \Delta_n$ be uniform so that  a Monge map $T^\star : \mathsf{X} \to \mathsf{Y}$ exists. 
If $(\mathbf{Q}^\star, \mathbf{R}^\star)$ are minimizers of \eqref{eq:lora_wass_variant} and correspond to clustering functions $\mathsf{q}^\star: \msf{X} \to [r], \mathsf{r}^\star:\msf{Y} \to [r]$, 
then for all $\mathbf{x} \in \mathsf{X}$ one has $\mathsf{q}^\star(\mathbf{x}) = \mathsf{r}^\star \left( T^\star (\mathbf{x}) \right).$
\end{proposition}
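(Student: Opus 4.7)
The plan is to recast the hard-clustering instance of the variant low-rank problem \eqref{eq:lora_wass_variant} as a combinatorial optimization over pairs of equi-sized partitions of $\mathsf{X}$ and $\mathsf{Y}$, and then to invoke strict $r$-Monge separability to identify the minimizer with a Monge-aligned partition pair.

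The first step is to unpack the hard-clustering structure using the uniform marginal assumptions. Since $\mathbf{a}$ and $\mathbf{g}$ are both uniform and each row of $\mathbf{Q}^\star$ is one-hot scaled by $1/n$, the column-sum constraint $(\mathbf{Q}^\star)^\top\mathbf{1}_n = \mathbf{g} = (1/r)\mathbf{1}_r$ forces $|\mathsf{q}^{\star-1}(k)| = n/r$ for each $k$, and similarly $|\mathsf{r}^{\star-1}(k)| = n/r$. A direct computation then yields
\begin{equation*}
\mathbf{P}^\star_{ij} = \bigl(\mathbf{Q}^\star\,\mathrm{diag}(1/\mathbf{g})\,(\mathbf{R}^\star)^\top\bigr)_{ij} = \tfrac{r}{n^2}\,\mathbbm{1}\{\mathsf{q}^\star(i) = \mathsf{r}^\star(j)\},
\end{equation*}
so that the objective collapses to
\begin{equation*}
\langle \mathbf{C}, \mathbf{P}^\star\rangle_F = \tfrac{r}{n^2}\sum_{k=1}^{r}\sum_{i \in \mathsf{q}^{\star-1}(k)}\sum_{j \in \mathsf{r}^{\star-1}(k)} \mathbf{C}_{ij}.
\end{equation*}
Because every pair of equi-sized partitions corresponds to a feasible hard-clustering $(\mathbf{Q}, \mathbf{R})$ for \eqref{eq:lora_wass_variant}, the pair $(\mathsf{q}^\star, \mathsf{r}^\star)$ must minimize this block-sum functional over all pairs of equi-partitions of $\mathsf{X}$ and $\mathsf{Y}$ into blocks of size $n/r$.

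The next step is to invoke strict $r$-Monge separability from Definition~\ref{def:r_Monge_sep}. I would expect this assumption to state precisely that the block-sum functional above admits a \emph{unique} minimizer over equi-partition pairs, and that this minimizer is Monge-aligned in the sense that the target block assigned to label $k$ is exactly $T^\star$ applied to the source block assigned to label $k$. Granting this, we conclude $\mathsf{r}^{\star-1}(k) = T^\star\bigl(\mathsf{q}^{\star-1}(k)\bigr)$ for all $k \in [r]$; in particular, for any $\mathbf{x} \in \mathsf{X}$ setting $k = \mathsf{q}^\star(\mathbf{x})$ gives $T^\star(\mathbf{x}) \in \mathsf{r}^{\star-1}(k)$, so $\mathsf{r}^\star(T^\star(\mathbf{x})) = \mathsf{q}^\star(\mathbf{x})$, which is the desired identity.

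The main obstacle I expect is the step that uses strict $r$-Monge separability to rule out non-Monge-aligned minimizers, since the precise content of Definition~\ref{def:r_Monge_sep} is what drives the argument. The most natural route is an exchange argument: if $\mathsf{q}^\star(\mathbf{x}_0) \neq \mathsf{r}^\star(T^\star(\mathbf{x}_0))$ for some $\mathbf{x}_0$, then by the bijectivity of $T^\star$ and the equality of cluster sizes one can locate a second point $\mathbf{x}_1$ whose Monge image sits in the cluster of $\mathbf{x}_0$, and swap their cluster labels to produce a new feasible equi-partition pair. Turning the weak inequality produced by such a swap into a \emph{strict} decrease of the block-sum---yielding the required contradiction with optimality of $(\mathsf{q}^\star, \mathsf{r}^\star)$---is exactly where the word ``strict'' in strict $r$-Monge separability earns its keep, and careful verification of this against the precise definition is the most delicate part of the argument.
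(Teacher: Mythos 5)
Your proposal is correct and follows essentially the same route as the paper: the paper's proof likewise reduces to hard, equal-size blocks, re-indexes the cost by the Monge permutation (its ``Monge rotation'' $\mathbf{C}^\dagger = \mathbf{C}\mathbf{P}^{\dagger,\top}$ is exactly your re-labelling of targets by $T^\star$), and then invokes strict $r$-Monge separability to force the optimal pair of partitions to coincide in the rotated frame, i.e.\ to be Monge-aligned. Your guess at the content of Definition~\ref{def:r_Monge_sep} is accurate --- it quantifies over \emph{all} pairs of permutations $\pi_1,\pi_2$ acting on a distinguished balanced partition, with strict inequality whenever $\pi_1(I_k^\star)\neq\pi_2(I_k^\star)$ --- so the global minimality you need is available directly from the definition and the local exchange argument you sketch as a fallback is not needed.
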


The proof of Proposition~\ref{prop:low_rank_monge_main} is in two steps. First, we use the existence of a Monge map and its coupling $\mb{P}^\dagger$ to permute the cost $\mb{C}$ to cost $\mb{C}^\dagger$ (Definition~\ref{def:monge_reg})
for which the identity matrix is a Monge map. Second, supposing that strict $r$-Monge separability (Definition~\ref{def:r_Monge_sep}) holds, we show the solution to Problem~\ref{eq:lora_wass_variant} with cost $\mb{C}^\dagger$ is symmetric, so that $\min_{
\mb{Q},\mb{R} \in \Pi_{\mb{a}, \mb{g}} 
}
\la
\mb{C}^\dagger, 
\mb{Q}\mb{R}^\top \ra_F =
\min_{
\mb{Q} \in \Pi_{\mb{a}, \mb{g}}
}
\la
\mb{C}^\dagger, 
\mb{Q}\mb{Q}^\top \ra_F$. Returning to the coordinate frame of the original cost $\mb{C}$, we find that $\mb{Q} = \mb{P}^{\dagger}\mb{R}$, implying Proposition~\ref{prop:low_rank_monge_main}. We note that when $r=2$, optimal $\mathbf{Q},\mathbf{R}$ are hard-partitions (Lemma~\ref{lem:supp_reduction_lemma}) automatically satisfying one of the assumptions of Proposition~\ref{prop:low_rank_monge_main}.

\subsection{Hierarchical Refinement Algorithm}

The \ourmethodFullCap\ algorithm (Algorithm~\ref{alg:hr_ot}) uses 
Proposition~\ref{prop:low_rank_monge_main} 
to guarantee that each low-rank step co-clusters the datasets optimally, in that $\mathbf{x}$ and $T^\star(\mathbf{x})$ 
are assigned the same label by $\mathsf{q}^\star$ and $\mathsf{r}^\star$.
Using the same label set to partition $\mathsf{X}$ and $\mathsf{Y}$ automatically places the blocks of each partition in bijective correspondence. 
One then recurses on each pair of corresponding blocks (which we call a \emph{co-cluster}) at the previous scale, until all blocks have size one. This guarantee holds despite that optimal $(\mathbf{Q}^\star, \mathbf{R}^\star)$ for \eqref{eq:lora_wass_variant} may not constitute an optimal triple $(\mathbf{Q}^\star, \mathbf{R}^\star, \mathbf{g}^{\star})$ for the original low-rank problem \eqref{eq:lora_wass} under the \cite{Scetbon2021LowRankSF} factorization.

A hierarchy-depth $\kappa$ denotes the total number of times Algorithm~\ref{alg:hr_ot} refines the initial trivial partitions $\{ \mathsf{X} \} , \{ \mathsf{Y} \}$. The effective rank at scale $t$ is $\rho_t := \prod_{s=1}^t r_s$, given 
rank-annealing schedule \((r_1, r_2, \ldots, r_\kappa)\) for which $\rho_\kappa$ divides $n$. The base rank is
$r_{\text{base}} = \frac{n}{\rho_\kappa}$. Note that $n/\rho_t$ is also the size of each partition at scale $t$: $ n/\rho_t = | \mathsf{X}^{(t)}| = | \mathsf{Y}^{(t)}|$, and that any sequence of any factorization of $n$ corresponds to a rank-annealing schedule.

\begin{proposition}
For any $n$, there exists a rank-schedule $(r_{1}, \cdots, r_{\kappa})$ factorizing $n$ such that all partitions of Algorithm~\ref{alg:hr_ot} at level $t\in [0:\kappa-1]$ satisfy strict $r_{t+1}$-Monge separability (Definition~\ref{def:r_Monge_sep}). Let $\mathrm{LROT}$ denote an optimal rank-$r$ solver for (\ref{eq:lora_wass_variant}) over hard-partitions. For any satisfying rank-schedule, the map returned by Algorithm~\ref{alg:hr_ot} is optimal and supported on the graph of the Monge map $T^\star$.
\end{proposition}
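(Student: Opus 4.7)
The plan is to split the proposition into its two claims and handle the optimality assertion by induction on the refinement depth, with Proposition~\ref{prop:low_rank_monge_main} supplying the step. Throughout, the central invariant I would maintain at scale $t \in \{0, 1, \ldots, \kappa\}$ is: the partitions $\mathsf{X}^{(t)}, \mathsf{Y}^{(t)}$ are in bijective correspondence via a block-map $\pi_{t}$ such that $T^{\star}(B) = \pi_{t}(B)$ for every block $B \in \mathsf{X}^{(t)}$. The base case $t = 0$ is immediate since $T^{\star}(\mathsf{X}) = \mathsf{Y}$ by the pushforward constraint.

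For the inductive step from $t$ to $t+1$, fix a block $B$ and its image $\pi_{t}(B)$. Since $T^{\star}$ is a bijection with $T^{\star}(B) = \pi_{t}(B)$ and the marginals are uniform, the restriction $T^{\star}|_{B}: B \to \pi_{t}(B)$ is itself a Monge map for the sub-assignment-problem on $(B, \pi_{t}(B))$; moreover it is optimal there, since any strictly cheaper map on $B$ could be spliced into $T^{\star}$ on the complement to contradict global optimality. By the hypothesis of the proposition, the restricted cost on $B \times \pi_{t}(B)$ is strictly $r_{t+1}$-Monge separable, so Proposition~\ref{prop:low_rank_monge_main} applies to the sub-problem solved by $\mathrm{LROT}$ and yields clustering functions $\mathsf{q}^{\star}, \mathsf{r}^{\star}$ with $\mathsf{q}^{\star} = \mathsf{r}^{\star} \circ T^{\star}|_{B}$. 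This refines $B$ and $\pi_{t}(B)$ into $r_{t+1}$ bijectively-paired sub-blocks, each of which continues to satisfy $T^{\star}(\text{sub-block}) = \pi_{t+1}(\text{sub-block})$, closing the induction. After $\kappa$ refinements each block has size one, so $\pi_{\kappa}$ is a point-to-point bijection that agrees with $T^{\star}$ everywhere; its cost equals $\langle \mathbf{C}, \mathbf{P}^{\star} \rangle_{F}$ where $\mathbf{P}^{\star} = (\mathrm{id} \times T^{\star})_{\sharp}\mu$, giving optimality. When $r_{t+1} = 2$ the hard-partition hypothesis of Proposition~\ref{prop:low_rank_monge_main} is automatic by Lemma~\ref{lem:supp_reduction_lemma}, which is what makes the dyadic schedule the cleanest regime in which to run the induction.

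For the existence claim, the strategy would be constructive: given $n$, select a factorization $n = r_{1} r_{2} \cdots r_{\kappa}$ for which separability holds at each scale. The simplest candidate is a prime factorization of $n$ (or, after padding $\mathsf{X}, \mathsf{Y}$ with infinitely-distant dummy points so that $n = 2^{\kappa}$, the all-twos schedule). The content of the claim is that separability is a property one can enforce level-by-level: at level $0$ the assumption is imposed on the full cost $\mathbf{C}$, and at level $t+1$ separability of the cost submatrix on each co-cluster $B \times \pi_{t}(B)$ must be verified. I would argue this recursively, using that separability of a cost matrix passes to the Monge-aligned diagonal blocks of its permuted form $\mathbf{C}^{\dagger}$ (the object constructed in the proof sketch of Proposition~\ref{prop:low_rank_monge_main}), because the diagonal blocks are precisely the restrictions of $\mathbf{C}^{\dagger}$ to the co-clusters produced at the previous scale.

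The main obstacle, in my view, is exactly this propagation of strict Monge separability through the hierarchy. Separability is a joint condition on the geometry of the data and the rank at that scale, and while it is structurally inherited by the Monge-aligned diagonal sub-blocks, the quantitative margin of separability may shrink as the sub-problems shrink, so one needs a rank-schedule that is adaptive enough to preserve it. I would expect the honest proof to either (i) choose the rank-schedule greedily — at each scale picking $r_{t+1}$ to be the largest divisor of $n/\rho_{t}$ for which every current sub-cost remains separable — and then argue termination with $\rho_{\kappa} = n$; or (ii) retreat to the universally-safe dyadic schedule on a padded dataset, invoking Lemma~\ref{lem:supp_reduction_lemma} to remove the hard-partition assumption at each step. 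The optimality half of the proposition is then an essentially formal consequence of the inductive argument above.
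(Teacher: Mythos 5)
Your induction for the optimality half is essentially the paper's argument: maintain the invariant that each co-cluster $(\mathsf{X}_q^{(t)}, \mathsf{Y}_q^{(t)})$ satisfies $\mathsf{Y}_q^{(t)} = T^\star(\mathsf{X}_q^{(t)})$, apply Proposition~\ref{prop:low_rank_monge_main} blockwise under the assumed strict $r_{t+1}$-Monge separability, and conclude with singletons at level $\kappa$. Your added observation that $T^\star|_B$ is itself optimal for the sub-assignment problem (by splicing a cheaper sub-map into the global one) is a legitimate and in fact useful detail that the paper leaves implicit.

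The existence half is where you have a genuine gap. You correctly identify that propagating strict Monge separability through a nontrivial multi-level schedule is the hard part, but then you only speculate about how to resolve it, and the two candidates you offer do not work as stated. The "universally-safe dyadic schedule" is not safe: Lemma~\ref{lem:supp_reduction_lemma} guarantees only that optimal rank-$2$ factors are \emph{hard partitions}; it says nothing about $2$-Monge separability of the cost, which is a separate geometric condition (Definition~\ref{def:r_Monge_sep}) that can simply fail for a given dataset, padded or not. Likewise the greedy largest-admissible-divisor scheme has no termination or correctness argument. The paper avoids all of this: existence is witnessed by the degenerate single-level schedule $(r_1) = (n)$ with $\kappa = 1$. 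For that schedule the blocks $I_k^\star$ are singletons, and $n$-Monge separability of $\mathbf{C}$ reduces exactly to $c$-cyclic monotonicity of the support of the optimal assignment, $\sum_i \mathbf{C}_{ii} \leq \sum_i \mathbf{C}_{i\pi(i)}$, which holds automatically whenever a Monge map exists. So the existence claim is a one-line observation, not a construction; the substantive content of the proposition is the conditional statement "for \emph{any} separability-satisfying schedule, the output is the Monge map," which is the part you did prove.
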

\begin{proof}
Existence follows from the trivial $(r_{1})=(n)$ rank-schedule. For any schedule $(r_{1}, \cdots, r_{\kappa})$ satisfying Monge separability, applying the invariant of Proposition~\ref{prop:low_rank_monge_main} inductively on $t$ to level $\kappa$ yields $n$ tuples $\{(\mathbf{x}, T^\star(\mathbf{x}))\}$ containing each $\mathbf{x} \in \mathsf{X}$ and its image $T^\star (\mathbf{x})$ under the Monge map.
\end{proof}
If the black-box subroutine $\mathrm{LROT}$ in Algorithm~\ref{alg:hr_ot} solves \eqref{eq:lora_wass_variant} optimally, then \ourmethodFullCap\ is guaranteed to recover a Monge map. In practice, we implement $\mathrm{LROT}$ using the low-rank solver \cite{FRLC} and enforce that inner marginal $\mathbf{g}$ is uniform.

\begin{algorithm}[htb]
\caption{Hierarchical Refinement}
\label{alg:hr_ot}
\begin{algorithmic}[1]
\Require 
  \textbf{Data} \( \mathsf{X} \,,  \mathsf{Y} \); 
  \textbf{Low-rank OT solver} \(\mathrm{LROT}(\cdot)\);
  \textbf{Rank schedule} \((r_1, r_2, \ldots, r_\kappa)\);
  \textbf{Base rank} \(r_{\text{base}} \) (=1).
\Statex

\noindent \textbf{Initialize:}
\State \(t \gets 0\), \(\Gamma_0 \gets \{\,( \mathsf{X}, \mathsf{Y})\}\)
\While{$\exists\,( \mathsf{X}_{q}^{(t)}, \mathsf{Y}_{q}^{(t)})\in \Gamma_t$ \textbf{ such that } \\
        $\qquad \qquad \qquad \min\{|\mathsf{X}_{q}^{(t)}|, |\mathsf{Y}_{q}^{(t)}|\} > r_{\text{base}}$}
    \State \(\Gamma_{t+1} \gets \varnothing\)
    \For{$(\mathsf{X}_q^{(t)}, \mathsf{Y}_q^{(t)}) \in \Gamma_t $}
        \If{ $\min\{ | \mathsf{X}_q^{(t)}|, |\mathsf{Y}_q^{(t)}| \} \leq r_{\text{base}} $ }
            \State \(\Gamma_{t+1} \gets \Gamma_{t+1} \cup \{(\mathsf{X}_q^{(t)}, \mathsf{Y}_q^{(t)})\}\)
        \Else
            \State \(\mu_{\mathsf{X}_q^{(t)}} = \frac{1}{|\mathsf{X}_q^{(t)}|} \, \sum_{\mathbf{x} \in \mathsf{X}_q^{(t)}} \delta_{\mathbf{x}}\)
            \State \(\mu_{\mathsf{Y}_q^{(t)}} = \frac{1}{|\mathsf{Y}_q^{(t)}|} \, \sum_{\mathbf{y} \in \mathsf{Y}_q^{(t)}} \delta_{\mathbf{y}}.\)
            \State \(\mathbf{g}_{t+1} \gets (1/r_{t+1}) \mathbf{1}_{r_{t+1}}\)
            \State \((\mathbf{Q}, \mathbf{R}) \gets 
              \mathrm{LROT}(\mu_{\mathsf{X}_q^{(t)}}, \mu_{\mathsf{Y}_q^{(t)}}, \mathbf{g}_{t+1})\)
            \For{$z = 1 \to r_{t+1}$}
                \State \(\mathsf{X}^{(t+1)}_z \gets \text{Assign}(\mathsf{X}^{(t)}, \mathbf{Q}, z) \)
                \State \(\mathsf{Y}^{(t+1)}_z \gets \text{Assign}(\mathsf{Y}^{(t)}, \mathbf{R}, z)\)
                \State \(\Gamma_{t+1} \gets \Gamma_{t+1} \cup 
                  \{\,(\mathsf{X}^{(t+1)}_z,\; \mathsf{Y}^{(t+1)}_z)\}\)
            \EndFor
            \State \Comment{\scriptsize\(\text{Assign}(\mathsf{S}, \mathbf{M}, z) = \{s \in \mathsf{S} \mid \arg\max_{z'} \mathbf{M}_{s z'} = z\}\)\normalsize}
            \EndIf
    \EndFor
    \State \(t \gets t + 1\)
\EndWhile

\State \textbf{Output:} 
  \(\Gamma_\kappa = \{ (\mathbf{x}_i, T(\mathbf{x}_i))_{i=1}^{n}\}\) 
  \quad \Comment{Mapped pairs.}
\end{algorithmic}
\end{algorithm}


Let $\Gamma_{t,q}$ denote the $q$-th co-cluster at scale $t$ generated by \ourmethodFullCap{}:
\begin{align}
\label{eq:gamma_t_q_def}
    \Gamma_{t,q} := \left\{ ( \mathbf{x}, \mathbf{y} ) : \mathbf{x} \in \mathsf{X}_q^{(t)}, \, 
    \mathbf{y} \in \mathsf{Y}_q^{(t)} 
    \right\} \,,
\end{align}
where $\mathsf{X}^{(t)} = \{ \mathsf{X}_q^{(t)} \}_{q=1}^{\rho_t}$, $\mathsf{Y}^{(t)} = \{ \mathsf{Y}_q^{(t)} \}_{q=1}^{\rho_t}$, and define the co-clustering $\Gamma_t$ at scale $t$ by:
\begin{align*}
    \Gamma_{t} := \left\{ (\mathsf{X}_q^{(t)}, \mathsf{Y}_q^{(t)}) \right\}_{q=1}^{\rho_t}  \, .
\end{align*}
At scale $t \in [\kappa]$, \ourmethodFullCap\ refines $\Gamma_{t}$ to $\Gamma_{t+1}$ by running a rank $r_{t+1}$ low-rank optimal transport problem between uniform $\mathbf{g}_{t+1} =(1/r_{t+1}) \mathbf{1}_{r_{t+1}}$ and measures supported on each pair $(\mathsf{X}_q^{(t)}, \mathsf{Y}_q^{(t)})$ in $\Gamma_t$ for $q \in [\rho_t]$, yielding factors specific to this $q \in [\rho_t]$:
\begin{align}
\label{eq:QR_gets}
(\mathbf{Q}, \mathbf{R}) \gets 
              \mathrm{LROT}(\mu_{\mathsf{X}_q^{(t)}}, \mu_{\mathsf{Y}_q^{(t)}}, \mathbf{g}_{t+1})\,. 
\end{align}
For each $q \in [\rho_t]$ we use the $\mathbf{Q}, \mathbf{R}$ from \eqref{eq:QR_gets} to 
co-cluster $\mathsf{X}_q^{(t)}$ with $\mathsf{Y}_q^{(t)}$ using $r_{t+1}$ labels. Within this pair, each $\mathbf{x}_i \in \mathsf{X}_q^{(t)}$ is assigned a label $z \in [r_{t+1}]$ by taking the argmax over the $i$-th row of $\mathbf{Q}$, and likewise each $\mathbf{y}_j \in \mathsf{Y}_q^{(t)}$ is assigned the argmax over the $j$-th row of $\mathbf{R}$. This corresponds to the $\mathrm{Assign}$ step in Algorithm~\ref{alg:hr_ot}, and coincides with the hard assignment of $\mathsf{q}^\star$ and $\mathsf{r}^\star$ for an optimal $(\mathbf{Q}^{*}, \mathbf{R}^{*})$ (Lemma~\ref{lem:supp_reduction_lemma}).

The uniform constraint $\mathbf{g} = \mathbf{1}_{r_{t+1}} / r_{t+1}$ in \eqref{eq:lora_wass_variant} enforces an even split of the dataset, which by Lemma~\ref{lem:supp_reduction_lemma} ensures a partition at optimality (for $r_{t}=2$).
Repeating for all $q \in [\rho_t]$, one obtains a co-clustering with $r_{t+1}$ components within each co-cluster at the previous scale, leading to a total of $\rho_{t+1} = r_{t+1} \rho_t$ co-clusters at scale $t+1$ (Fig.~\ref{fig:fig1}). If the base-case rank $r_{\mathrm{base}}$ is one, Algorithm~\ref{alg:hr_ot} returns a bijection between $\mathsf{X}$ and $\mathsf{Y}$ as a collection of $n$ tuples. 

Note that \ourmethodFullCap\ defines an implicit hierarchy of block-couplings at each scale $t$.
\begin{definition}[Hierarchical block-coupling]
\label{def:hbc}
    For each scale $t \in [\kappa]$, given the \ourmethodFullCap\ co-cluster partition $\Gamma_{t}$, the \emph{hierarchical block-coupling} at scale $t$ is defined by the matrix
\vspace{-4mm}
\begin{equation}
\label{eq:hbc}
    \mathbf{P}^{(t)}_{ij}
:=
\frac{\rho_t }{ n^2 } \sum_{q=1}^{\rho_{t}} \delta_{(\mathbf{x}_{i}, \mathbf{y}_{j}) \in \Gamma_{t,q}},
\end{equation}
\end{definition}
Without loss of generality, $\mathbf{P}^{(t)}$ may be block diagonalized into $\rho_{t}$ square blocks, as discussed in Appendix~\ref{sec:supp_pfs} (see Equation~\eqref{eq:supp_P_corresp_t}). By Proposition~\ref{prop:low_rank_monge_main}, for any rank-schedule $(r_{j} )_{j=1}^{\kappa}$ satisfying Monge separability, the final $\mathbf{P}^{(\kappa)}$ corresponds to an optimal coupling supported on the graph of the Monge map $T^{\star}$, $\mathbf{P}^{(\kappa)} := (\mathrm{id} \times T^{\star} )_{\sharp} \mu_{\mathsf{X}}$. While these intermediate couplings are never instantiated, one can still use them to define a transport cost $\langle \mathbf{C}, \mathbf{P}^{(t)} \rangle$ at each scale. In Appendix~\ref{prop:one_step_bound}, we show the following bounds on the cost difference across scales. 

\begin{proposition}
\label{prop:one_step_bound_main}
Let $c(\cdot, \cdot)$ be a strictly-convex and Lipschitz cost function, let $(r_{1}, r_{2}, \cdots, r_{\kappa})$ be a rank-schedule, 
and let $\mathbf{P}^{(t)}$ denote the coupling defined in \eqref{eq:hbc}, obtained from step $t$ of Algorithm~\ref{alg:hr_ot}. Define $\Delta_{t,t+1} = \langle \mathbf{C}, \mathbf{P}^{(t)}\rangle_F - \langle \mathbf{C}, \mathbf{P}^{(t + 1)}\rangle_F$. Then, 
\vspace{-4mm}
\begin{align}
    0 \leq \Delta_{t,t+1} \leq \lVert \nabla c \rVert_{\infty} \frac{1}{\rho_t} \sum_{q=1}^{\rho_t} \mathrm{diam}\bigl( \Gamma_{t,q}\bigr) \,, 
\end{align}
where $q$ indexes co-clusters $\Gamma_{t,q}$ at scale $t$, defined in \eqref{eq:gamma_t_q_def}.
\end{proposition}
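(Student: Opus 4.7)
The plan is to decompose $\Delta_{t,t+1}$ co-cluster by co-cluster and handle each side separately: non-negativity will come from the optimality of $\mathrm{LROT}$, and the upper bound from a Lipschitz argument exploiting the fact that both restricted couplings live inside the same small set $\Gamma_{t,q}$. The key structural observation is that, within every $\Gamma_{t,q}$ at scale $t$, the coupling $\mathbf{P}^{(t+1)}$ is supported on the sub-co-clusters $\Gamma_{t+1,q'}$ for $q'$ a child of $q$; consequently both $\mathbf{P}^{(t)}|_{\Gamma_{t,q}}$ and $\mathbf{P}^{(t+1)}|_{\Gamma_{t,q}}$ have total mass $1/\rho_t$, are supported in $\Gamma_{t,q}$, and together account for $\Delta_{t,t+1}$ when summed over $q \in [\rho_t]$.

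For the lower bound $\Delta_{t,t+1} \geq 0$, I rescale each restricted measure by $\rho_t$ to obtain a probability measure on $\Gamma_{t,q}$. A direct check on entry sizes and row sums yields $\rho_t\,\mathbf{P}^{(t)}|_{\Gamma_{t,q}} = \mathbf{a}_q \mathbf{b}_q^\top$, the product coupling of the uniform marginals $\mu_{\mathsf{X}_q^{(t)}}, \mu_{\mathsf{Y}_q^{(t)}}$; this product is feasible in the parametrization of \eqref{eq:lora_wass_variant} via $\mathbf{Q} = \mathbf{a}_q \mathbf{g}_{t+1}^\top$ and $\mathbf{R} = \mathbf{b}_q \mathbf{g}_{t+1}^\top$. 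Meanwhile, $\rho_t\,\mathbf{P}^{(t+1)}|_{\Gamma_{t,q}}$ is precisely the hard-partitioned low-rank coupling returned by the call $\mathrm{LROT}(\mu_{\mathsf{X}_q^{(t)}}, \mu_{\mathsf{Y}_q^{(t)}}, \mathbf{g}_{t+1})$ in Algorithm~\ref{alg:hr_ot}, which, by Lemma~\ref{lem:supp_reduction_lemma} and the subsequent $\mathrm{Assign}$ step, is block diagonal in the argmax assignment. Since $\mathrm{LROT}$ minimizes $\langle \mathbf{C}, \mathbf{Q}\,\mathrm{diag}(1/\mathbf{g}_{t+1})\mathbf{R}^\top\rangle_F$ over the feasible set, the refined coupling has cost no larger than the product coupling, and summing this inequality over $q$ gives $\Delta_{t,t+1} \geq 0$.

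For the upper bound, the two restrictions $\mathbf{P}^{(t)}|_{\Gamma_{t,q}}$ and $\mathbf{P}^{(t+1)}|_{\Gamma_{t,q}}$ have the same mass $1/\rho_t$ and share support in $\Gamma_{t,q}$. For any coupling $\pi$ between the two $\rho_t$-rescaled probability measures, the Lipschitz property of $c$ gives $|c(\mathbf{x}_1, \mathbf{y}_1) - c(\mathbf{x}_2, \mathbf{y}_2)| \leq \lVert \nabla c \rVert_\infty\,\mathrm{diam}(\Gamma_{t,q})$ for any two pairs in $\Gamma_{t,q}$. Averaging this over $\pi$ bounds the difference of expected costs by $\lVert \nabla c \rVert_\infty\,\mathrm{diam}(\Gamma_{t,q})$; multiplying by the mass $1/\rho_t$ and summing over $q \in [\rho_t]$ yields the stated upper bound.

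The main obstacle is the normalization bookkeeping that identifies $\rho_t \mathbf{P}^{(t)}|_{\Gamma_{t,q}}$ with the product coupling $\mathbf{a}_q \mathbf{b}_q^\top$ and $\rho_t \mathbf{P}^{(t+1)}|_{\Gamma_{t,q}}$ with the hard-partition $\mathrm{LROT}$ output; these identifications rely on the exact form of the factorization $\mathbf{Q}\,\mathrm{diag}(1/\mathbf{g}_{t+1})\mathbf{R}^\top$ with uniform inner marginal $\mathbf{g}_{t+1}$, together with Lemma~\ref{lem:supp_reduction_lemma}. Once these identifications are in place, both bounds follow cleanly from the optimality of $\mathrm{LROT}$ and a one-line Lipschitz estimate; strict convexity of $c$ enters only implicitly through the well-posedness of the low-rank problem \eqref{eq:lora_wass_variant}.
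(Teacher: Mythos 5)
Your proof is correct, and its overall skeleton matches the paper's: both decompose $\Delta_{t,t+1}$ block-by-block over the co-clusters $\Gamma_{t,q}$, and both obtain the lower bound by observing that the refined (block-diagonal) coupling within $\Gamma_{t,q}$ is the output of a minimization for which the scale-$t$ restriction --- which is exactly the product coupling $\mathbf{a}_q\mathbf{b}_q^\top$ after rescaling by $\rho_t$ --- serves as a competitor. Your normalization bookkeeping identifying $\rho_t\,\mathbf{P}^{(t)}|_{\Gamma_{t,q}}$ with the product coupling and $\rho_t\,\mathbf{P}^{(t+1)}|_{\Gamma_{t,q}}$ with the $\mathrm{LROT}$ output checks out. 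Where you genuinely diverge is the upper bound: the paper lower-bounds the refined cost via Jensen's inequality applied at the barycenters $\bar{\mathbf{x}}^{(t)},\bar{\mathbf{y}}^{(t)}$ (this is where strict convexity of $h$ is actually used) and then applies the Lipschitz estimate to the deviation from the barycenter, whereas you bypass Jensen entirely by noting that the two restricted couplings are probability measures (after rescaling) on the same set $\Gamma_{t,q}$, so their expected costs differ by at most the oscillation of $c$ on that set, which is at most $\lVert \nabla c\rVert_\infty\,\mathrm{diam}(\Gamma_{t,q})$. Your route is shorter and, as you note, does not use convexity at all for the bound itself --- arguably a mild strengthening. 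One point both you and the paper gloss over: the product coupling $\mathbf{Q}=\mathbf{a}_q\mathbf{g}_{t+1}^\top$ has all soft rows, so it is not literally in the feasible set $\Pi_\bullet$ over which $\mathrm{LROT}$ is assumed optimal; the comparison still goes through either via Lemma~\ref{lem:supp_reduction_lemma} (for $r=2$ the hard-partition minimum coincides with the minimum over the full polytope, which the product coupling upper-bounds) or by averaging the cost over uniformly random balanced partition pairs, whose expectation is exactly the product-coupling cost. Since the paper's own justification ("a minimization over a larger set of couplings") is no more precise, this is not a gap relative to the paper, but it is worth a sentence if you write this up.
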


\vspace{-2mm}
Thus, the lower-bound implies that each step of refinement improves the coarse partition, and the upper-bound implies that the difference in solution value is bounded above by a factor depending on the Lipschitz constant and the mean diameter of the coarse partitions at each level $t$. The proof of Proposition~\ref{prop:one_step_bound_main} roughly follows that of Proposition~1 of  \cite{gerber2017multiscale}. In Remark~\ref{rmk:conditional_ub}, we discuss how Proposition~\ref{prop:one_step_bound_main} compares, noting that our result makes fewer geometric assumptions on our multiscale partitions $(\mathsf{X}^{(t)})_{t=1}^\kappa$ and $(\mathsf{Y}^{(t)})_{t=1}^\kappa$ and therefore does not quantify the rate of decay of $\mathrm{diam}\bigl( \Gamma_{t,q}\bigr)$.


\subsection{On the Rank-Annealing Schedule}\label{sec:rank_anneal}

As observed by \cite{forrow19a, Scetbon2021LowRankSF}, rank behaves like a temperature parameter,
inverse to the strength $\epsilon$
of entropy regularization. 
The correspondence between small $\epsilon$ and large rank implies that
annealing in the parameter $\epsilon$ 
is, from the perspective of rank, analogous 
to 
initializing the optimization at a low-rank coupling,
and then gradually increasing the rank constraint 
from low to full. 
In \ourmethodFullCap{}, this gradual rank increase is accomplished
implicitly. At each scale $t = 1, \dots, \kappa$ the implicit coupling $\mathbf{P}^{(t)}$ is 
made explicit in the hierarchical block coupling defined in equation~\eqref{eq:hbc}. 
A rank-annealing schedule $(r_1, \dots, r_\kappa)$ describes the sequence of 
multiplicative factors by which the rank of this 
explicit coupling will increase at successive scales.
The partial products of these, denoted 
$(\rho_1, \dots, \rho_\kappa)$, are the ranks of the couplings $\mathbf{P}^{(1)}, \dots, \mathbf{P}^{(\kappa)}$. Note that small values of $r_{i}$ generate coarse partitions of the points at the next scale, while large values of $r_{i}$ generate finer partitions at the next scale.

We now turn to the question of how to efficiently choose such a schedule
under given memory constraints. For an integer $n$, 
Algorithm~\ref{alg:hr_ot} has log-linear complexity for depth $\kappa = \log_{r}{n}$ (Section~\ref{sec:complexity}). However, the large constants required by low-rank OT in practice encourage minimizing the number of calls to $\mathrm{LROT}$ as a subroutine, so that if memory permits, it may be advantageous to decrease the depth by storing couplings of higher rank. If desired, memory constraints can be enforced by imposing a maximum rank $r_{\max} \geq r_t$ for all $t \in [\kappa]$ to ensure \ourmethodFullCap{} only requires $O(n r_{\max})$ space at each step. Thus, we seek factorizations with \emph{minimal} partial sums of ranks while remaining below a desired memory-capacity: 
\vspace{-3mm}
\begin{align}
\label{eq:rank_opt}
\min_{(r_{i})_{i=1}^{\kappa}} \sum_{j=1}^{\kappa} \rho_{j}  
    \quad  \mathrm{s.t. } \quad \rho_{\kappa} = n, \quad r_{i} \leq r_{\max} \, . 
\end{align}
\vspace{-3mm}

The above optimization assumes a base-rank $r_{\mathrm{base}}$ of $1$; we describe how to handle the general case in Appendix~\ref{sect:rank_schedule}. 
Importantly, the recursive structure $\min_{(r_{i})_{i=1}^{\kappa}}  \sum_{j=1}^{\kappa} \rho_{j}  = \min_{(r_{i})_{i=1}^{\kappa}} \left(r_{1} + r_{1} \sum_{j=2}^{\kappa} \prod_{i=2}^{j} r_{i}
    \right)$
enables a dynamic programming approach to \eqref{eq:rank_opt}, storing a table of factors up to $r_{\max}$ to optimize \eqref{eq:rank_opt} in $O( r_{\max} \kappa n)$ time. 
Assuming $\kappa, r_{\max}$ are small constants chosen to ensure that all matrices can fit within memory, determining the optimal rank-schedule with respect to $\kappa, n, r_{\max}$ is a simple linear-time procedure. 

\begin{figure}[!t]
\centerline{\includegraphics[width=\linewidth]{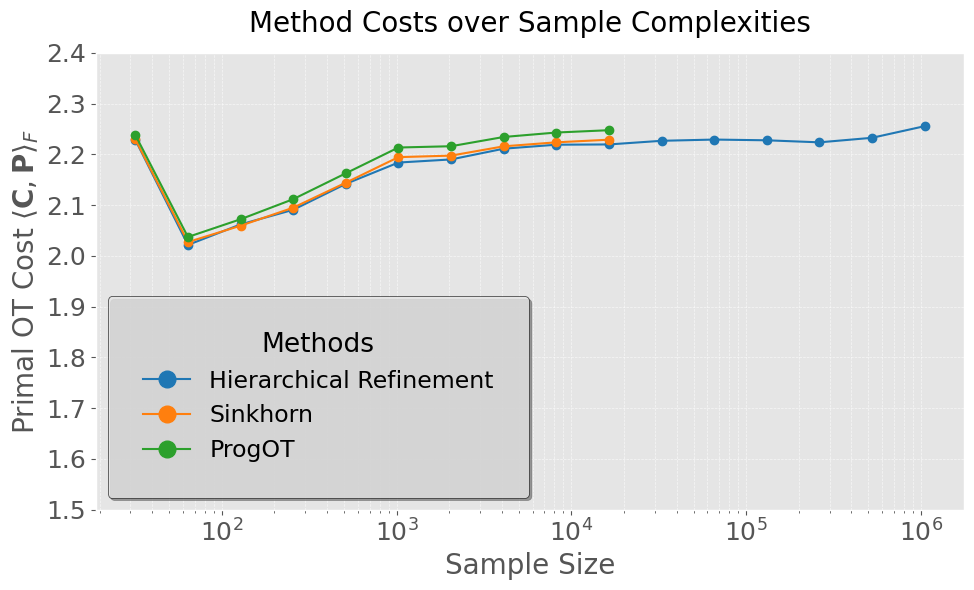}}
\caption{Primal OT cost for varying sample size on the synthetic half-moon S-curve dataset of \cite{buzun2024expectile} for \ourmethod, Sinkhorn, and ProgOT}
\label{fig:sample_complexity}
\end{figure}

\subsection{Complexity and Scaling of Hierarchical Refinement}\label{sec:complexity}
For two datasets $\mathsf{X}, \mathsf{Y}$ of size $n$, the space complexity of \ourmethodFullCap{} is $\Theta(n)$, since at each level, one must store $\Gamma_{t}$ which is a set of subsets of $\mathsf{X}$ and $\mathsf{Y}$. To derive the time-complexity of \ourmethodFullCap{}, note that if $n = r^{k}$, a rank-$r$ schedule at each layer requires $\frac{n}{r}$ instances of $\mathrm{LROT}$ over rapidly decaying dataset sizes. The complexity of low-rank OT \cite{Scetbon2021LowRankSF, scetbon2022linear, FRLC} is linear  ($K n$) for a constant $K = O(B L r d)$ dependent on $B$ the number of inner Sinkhorn \cite{FRLC} or Dykstra \cite{Scetbon2021LowRankSF} iterations, $L$ the number of mirror-descent steps, $r$ the rank of the coupling, and $d$ the rank of the factorization of the cost matrix $\mathbf{C}$. In this setting, for $n$ a power of $r$, the runtime of Algorithm~\ref{alg:hr_ot} is given by the sum $r^0 \Theta(n) + r^1 \Theta(\frac{n}{r}) + ... + r^{i-1} \Theta(\frac{n}{r^{i-1}}) = \Theta(n d r  \log_{r}{n})$ for $i = \log_{r}n$, achieving \emph{linear} space with \emph{log-linear} time for constant ranks $r, \,d$.


In cases where the cost matrix does not admit a low-rank factorization $\mathbf{C} = \mathbf{U} {\mathbf{ V}}^{\top}$, i.e., when $d = O(n)$, one requires $\Theta(n^{2})$ space to store the cost matrix and \ourmethodFullCap{} exhibits time complexity $\Tilde{O}( n^{2} )$, as in Sinkhorn. For kernel costs such as squared Euclidean cost, as noted in \cite{Scetbon2021LowRankSF}, one may efficiently compute a $(d+2)$ dimensional factorization where $d$ is the ambient dimension, to achieve log-linear scaling with exact distances. We also use the sample-linear algorithm of \cite{pmlr-v99-indyk19a} to compute approximate factorizations for distances $c(\cdot, \cdot)$ satisfying metric properties such as the triangle inequality (e.g. Euclidean distance, see Appendix~\ref{sec:low_rank_dist}). At each level, pairing such sample-linear approximations with each low-rank step only requires $O(n\log_{d}n)$ time. We observe this scaling empirically, as reported in Fig.~\ref{fig:runtime_scaling}.

\section{Experiments}

We benchmark \ourmethodFullCap{} (\ourmethod{}) against the full-rank OT methods Sinkhorn \cite{sinkhorn},
ProgOT \cite{kassraie2024progressive}, and mini-batch OT \cite{genevay18learning, fatras2020learning, fatras2021minibatch}. We additionally benchmark against the low-rank OT methods LOT \cite{Scetbon2021LowRankSF} and FRLC \cite{FRLC}. We use the default implementations of Sinkhorn, ProgOT, and LOT in the high-performance \texttt{ott-jax} library \cite{cuturi2022optimal}. In particular, Sinkhorn is run with the default entropy regularization parameter of $\epsilon = 0.05$. We also benchmark against the multiscale method MOP \cite{gerber2017multiscale}, which requires multiscale partitions of the input datasets -- akin to a family of dyadic cubes across scales -- to compute alignments. This leads to a transport cost that depends on the choice of this partition. For simplicity, we choose the default partitions of MOP which are computed from the GMRA (Geometric Multi-Resolution Analysis) R package.

\begin{figure*}[!t]
\centerline{\includegraphics[width=.6\linewidth]{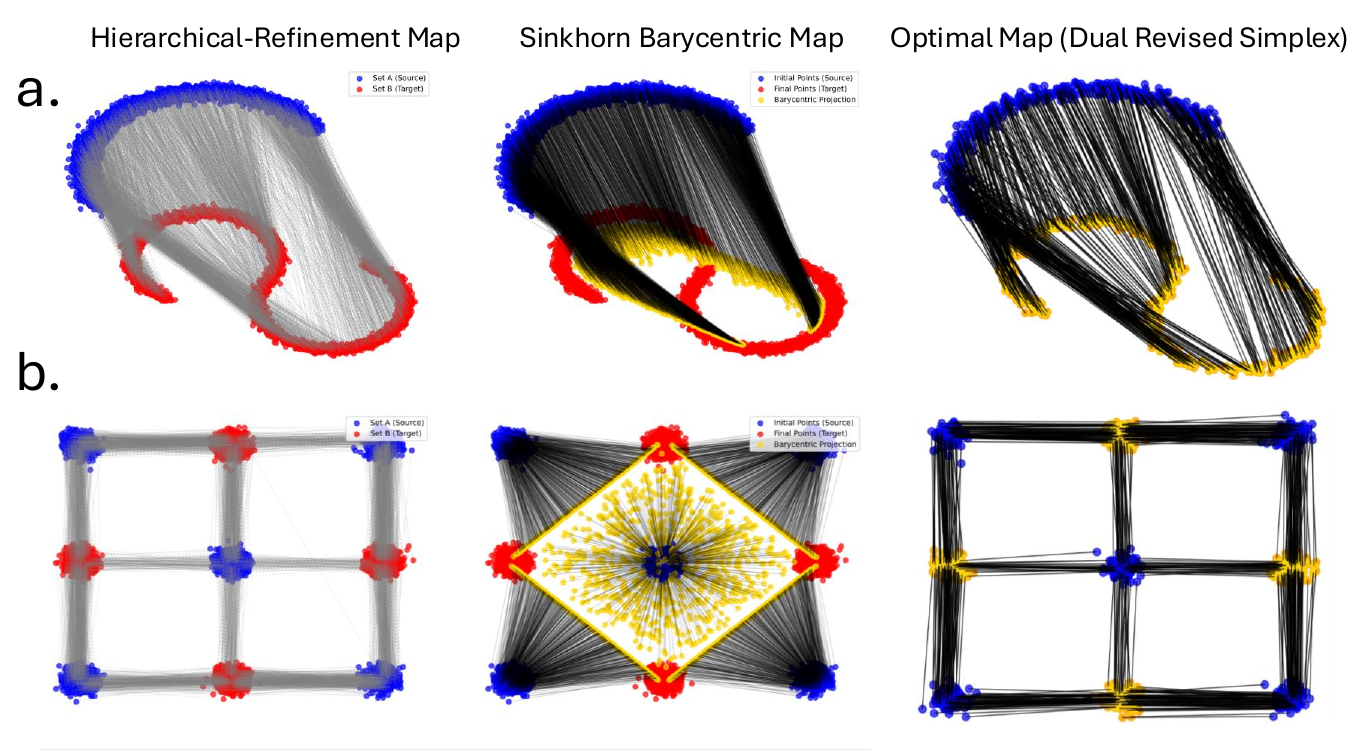}}
\caption{Comparison of the \ourmethodFullCap\ Mapping, the Sinkhorn Barycentric Map, and an optimal map computing using dual revised simplex for the \textbf{a.} Half-moon and S-curve dataset \cite{buzun2024expectile} of 4096 points (512 points for dual revised simplex) and \textbf{b.} Checkerboard dataset \cite{pmlr-v119-makkuva20a}.}
\label{fig:synth_ex}
\end{figure*}

\subsection{Evaluation on Synthetic Datasets.}\label{sec:EvalSynthetic}

We first evaluate the performance of \ourmethodFullCap\ against optimal transport methods returning primal couplings, namely Sinkhorn \cite{sinkhorn} (as implemented in \texttt{ott-jax} \cite{cuturi2022optimal}) and ProgOT \cite{kassraie2024progressive}. We evaluate the methods with respect to the Wasserstein-1 and Wasserstein-2 distance on an alignment of 1024 pairs of samples on the Checkerboard \cite{pmlr-v119-makkuva20a}, MAFMoons and Rings \cite{buzun2024expectile}, and Half-Moon and S-Curve \cite{buzun2024expectile} synthetic datasets (Fig.~\ref{fig:synth_ex}, Table~\ref{tab:cost_values_supp}). 

All methods are similarly effective at minimizing the primal OT cost $\langle \mathbf{C}, \mathbf{P} \rangle_{F}$, with small absolute difference in cost between the final couplings. \ourmethodFullCap\ achieves slightly lower primal cost on 4 out of the 6 evaluations. Notably, there is a massive difference in the number of non-zero entries (defined as entries $\mathbf{P}_{ij} > 10^{-8}$) in the couplings output by \ourmethod{}, 
Sinkhorn, and ProgOT (Table~\ref{tab:entropy_nonzero_combined}). 
Specifically, across the experiments 
\ourmethod{} outputs a bijection with exactly 1024 non-zero elements in the coupling matrix, equal to the number of aligned samples.  In constrast,
Sinkhorn and ProgOT output couplings with 624733 to 678720 and 271087 to 337258 non-zero entries. 


We evaluate the scalability of \ourmethodFullCap\ relative to other full-rank solvers on varying numbers of samples from the Half Moon \& S-Curve \cite{buzun2024expectile} synthetic dataset. We vary the rank from $2^{5} = 32$ ($64$ points aligned) up to $2^{20}$ = 1048576 points (2097152 points aligned) in $\mathbb{R}^{2}$, 
the latter dataset of a size that is beyond the capabilities of
current optimal transport solvers. We observe that Sinkhorn \cite{sinkhorn} and ProgOT -- methods which produce dense mappings -- require  a coupling matrix with $O(n^{2})$ non-zero entries and thus run only up to 16384 points. 
\ourmethod{} yields solutions with 
comparable primal cost to ProgOT and Sinkhorn  on the sample sizes where all methods run. 

We also find that \ourmethod\ achieves an OT cost that is competitive with the dual revised simplex solver \cite{Huangfu2017}, a solver which only scales up to $512$ points (Table~\ref{tab:comparison_512}). This solver computes an \emph{optimal} coupling, unlike ProgOT and Sinkhorn which rely on entropic regularization. While we benchmark Sinkhorn in place of mini-batch OT on the synthetic datasets due to their limited complexity, we also evaluate the multi-scale method MOP on the 512 point instance (Table~\ref{tab:comparison_512}). Although MOP outputs a fast approximation to optimal transport, its primal cost on the Checkerboard \cite{pmlr-v119-makkuva20a} dataset is twice as high as that of the other methods, and it performs significantly worse on the MAF Moons \& Rings and Half Moon \& S-Curve datasets \cite{buzun2024expectile}.

Lastly, we observe that \ourmethodFullCap\ scales to over a million points, two orders of magnitude greater than ProgOT and Sinkhorn,  two full-rank OT methods that compute global alignments. We find \ourmethod\ scales linearly with the size of the problem instance (Fig.~\ref{fig:runtime_scaling}{a}) in contrast to the quadratic scaling in time complexity of Sinkhorn (Fig.~\ref{fig:runtime_scaling}{b}).

\subsection{Large-scale Matching Problems and Transcriptomics}

Recently, optimal transport has been applied to single-cell and spatial transcriptomics datasets to compute couplings between cells taken from different timepoints from developmental processes or perturbations  \cite{schiebinger2019optimal,lavenant2021towards, bunne2022proximal, Huizing2024, destot, klein2023moscot}.
However, the size of current datasets \cite{chen2022spatiotemporal} ($>$100k cells) has exceeded the capacity of existing full-rank solvers, requiring low-rank approximations of the coupling \cite{scetbon2023unbalanced,klein2023moscot,HMOT} to produce alignments. 

We evaluate whether the full-rank solver of \ourmethodFullCap\ exhibits competitive alignments for such datasets. Specifically, we analyze the mouse organogenesis spatiotemporal transcriptomic atlas (MOSTA) datasets, which include spatial transcriptomics data from mouse embryos at successive 1-day time-intervals with increasing number $n$ of cells at each stage: E9.5 ($n=5913$), E10.5 ($n=18408$), E11.5 ($n=30124$), E12.5 ($n=51365$), E13.5 ($n=77369$), E14.5 ($n=102519$), E15.5 ($n=113350$), and E16.5 ($n=121767$).
For the cost we use the Euclidean distance $\mathbf{C}_{ij} = \lVert \mathbf{x}_{i} - \mathbf{y}_{j} \rVert_{2}$ in $60$-dimensional PCA space of expression vectors, so $\mathbf{x}_{i}, \mathbf{y}_{j} \in \mathbb{R}^{60}$.

Sinkhorn and ProgOT are unable to produce alignments for the stages beyond E10.5 ($n=18408$ cells), whereas \ourmethod, the low-rank solvers, and mini-batch OT (batch-sizes $B=128$ to $B=2048$) are able to continue scaling to $>10^{5}$ (Table~\ref{tab:cost_values_late}, Table~\ref{tab:cost_values_supp}). We observe that the Kantorovich cost of \ourmethod\ is consistently lower than all other methods for all timepoints (Table~\ref{tab:cost_values_late}, Table~\ref{tab:cost_values_supp}).

\ourmethod{} achieves a substantially lower cost than the low-rank solvers FRLC and LOT for rank $r=40$, even though \ourmethod{} relies on low-rank optimal transport (FRLC) as a subroutine. This result underscores the empirical trend observed in Fig.~\ref{fig:samples}, where the refinement step of \ourmethod{} progressively decreases the primal cost of coarser low-rank couplings (Proposition~\ref{prop:one_step_bound_main}). While the mini-batch solvers exhibit competitive scaling up to the last pair, the primal cost of mini-batch is higher for all tested batch-sizes (Table~\ref{tab:cost_values_supp}). Unlike \ourmethod, mini-batch OT does not compute a global alignment and exhibits batch-size dependent error.

\begin{table}[t]
    \centering
    \caption{Cost Values $\langle \mathbf{C}, \mathbf{P} \rangle_{F}$ Across Later Embryonic Stages}
    \label{tab:cost_values_late}
    \small 
    \begin{tabular}{lcccc}
        \toprule
        \textbf{Method} & \textbf{E12-13.5} & \textbf{E13-14.5} & \textbf{E14-15.5} & \textbf{E15-16.5} \\
        \midrule
        \ourmethod{}        & \textbf{14.35} & \textbf{13.78} & \textbf{14.29} & \textbf{12.79} \\
        Sinkhorn        & -            & -            & -            & -           \\
        MB $128$ & 14.86
        & 14.14
        & 14.75 & 13.32 \\
        MB $1024$ & 14.45
        & 13.86
        & 14.43 & 12.91
        \\
        FRLC   & 15.47          & 14.64          & 15.51          & 14.00          \\
        \bottomrule
    \end{tabular}
\end{table}

\subsection{MERFISH Brain Atlas Alignment}

\begin{figure*}[tbp]
\centerline{\includegraphics[width=.9\linewidth]{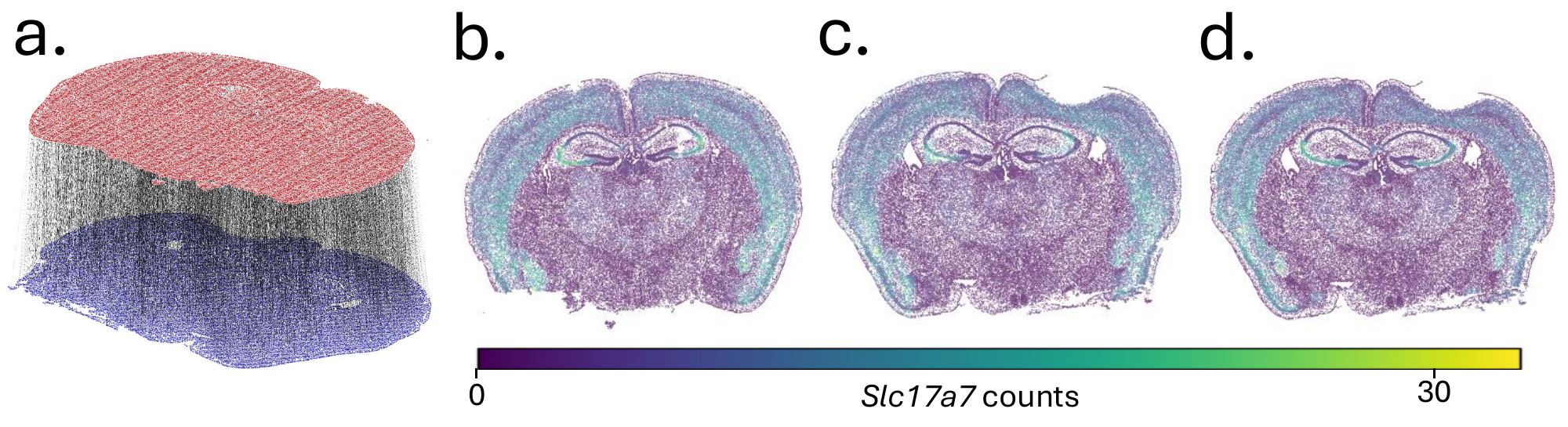}}
\caption{
\textbf{a.}
\ourmethodFullCap{} alignment on MERFISH mouse brain data, using only spatial coordinates.
\textbf{b.} Abundance $\mathbf{v}^1$ of gene \emph{Slc17a7} in the source slice. \textbf{c.} Predicted \emph{Slc17a7} abundance $\hat{\mathbf{v}}$ from the source slice to the target slice, through the \ourmethod{} coupling.
\textbf{d.} Abundance $\mathbf{v}^2$ of the same gene in the target slice. 
Transferred abundances $\hat{\mathbf{v}}$ have cosine similarity $0.8098$ with true abundances $\mathbf{v}^2$ in the target.
}
\label{fig:merfish}
\end{figure*}

We ran \ourmethod{} on two slices of MERFISH Mouse Brain Receptor Map data from \href{https://info.vizgen.com/mouse-brain-map}{Vizgen} to test whether \ourmethod{} can produce biologically valid alignments using the \emph{only} spatial densities of each tissue. These spatial transcriptomics data consist of spatial and gene expression measurements at individual spots in three full coronal slices across three biological replicates. Our ``source'' dataset $(\mathbf{X}^1, \mathbf{S}^1)$ is replicate 3 of slice 2, while our ``target'' dataset $(\mathbf{X}^2, \mathbf{S}^2)$ is replicate 2 of slice 2, following the expression transfer task described \cite{clifton2023stalign} between these two slices. Each dataset has roughly 84k spots, where memory constraints prohibit instantiation a full-rank alignment as a matrix. Thus, solvers such as Sinkhorn \cite{sinkhorn} and ProgOT \cite{kassraie2024progressive} are unable to run on the dataset. 

We use only spatial information when building a map between the two slices, using the spatial Euclidean cost $\mathbf{C}_{ij} := \| \mathbf{s}_i^1 - \mathbf{s}_j^2 \|_2$, after registering spatial coordinates $\mathbf{S}^1 = \{ \mathbf{s}_i^1\}_{i=1}^n$ and $\mathbf{S}^2 = \{ \mathbf{s}_i^2 \}_{i=1}^n$ with an affine transformation. We gauged the quality of the \ourmethod{ }alignment (Fig.~\ref{fig:merfish}{a}), using gene expression abundances of five ``spatially-varying'' genes. Specifically, we observe that expression vector $\mathbf{v}^1$ of gene \emph{Slc17a7} in the source slice ( Fig.~\ref{fig:merfish}{b}) when transferred to target slice through the bijective mapping output by \ourmethod{}, denoted as $\hat{\mathbf{v}}$ (Fig.~\ref{fig:merfish}{c}), closely matches the observed expression vector $\mathbf{v}^2$ of \emph{Slc17a7} in the target slice
(Fig.~\ref{fig:merfish}{d}) with cosine similarity equal to $0.8098$.
For genes \emph{Slc17a7}, \emph{Grm4}, \emph{Olig1}, \emph{Gad1}, \emph{Peg10}, the corresponding cosine similarities between the transferred and observed expression vectors are $0.8098$, $0.7959$, $0.7526$, $0.4932$, $0.6015$, respectively.

For comparison, we also ran the low-rank methods FRLC \cite{FRLC} and LOT \cite{Scetbon2021LowRankSF} with and without subsampling, reporting their best scores, as discussed in Section~\ref{supp:exp_merfish}. For the gene \emph{Slc17a7}, FRLC's cosine similarity was $0.2373$, while LOT's cosine similarity was $0.3390$. For all five genes \emph{Slc17a7}, \emph{Grm4}, \emph{Olig1}, \emph{Gad1}, \emph{Peg10}, FRLC's scores were ($0.2373$, $0.2124$, $0.1929$, $0.0963$, $0.1550$, respectively, while LOT's scores were $0.3390$, $0.2712$, $0.3186$, $0.1666$, $0.1080$. 
Across all five genes \ourmethod{}'s scores were at least twice those of FRLC or LOT (Table~\ref{tab:merfish}) with gene abundances shown in Fig.~\ref{fig:merfish_supp}. On the same task, we compared against MOP, the method of \cite{gerber2017multiscale}, whose scores for the five genes were: ($0.5211$, $0.4714$, $0.5972$, 
$0.3571$, $0.2719$). Finally, we also benchmarked against mini-batch OT
using batch sizes ranging from 128 to 2048 in powers of two, whose best scores 
($0.7434$, 
$0.7822$,
$0.7056$,
$0.4912$,
$0.5683$)
were 
more comparable to that of the performance of \ourmethod{}. Across all methods and genes compared in 
Table~\ref{tab:merfish}, \ourmethod{} had greatest cosine similarity scores in the expression transfer task, while also having lowest transport cost.
Further experimental details are in Section~\ref{supp:exp_merfish}.

\subsection{ImageNet Alignment}
We demonstrate the scalability of \ourmethodFullCap{} on a large-scale and high-dimensional dataset by aligning $2048$-dimensional embeddings of $1.281$ million images from the ImageNet ILSVRC dataset \cite{deng2009imagenet, ILSVRC15}. 
Each image is embedded using using the ResNet50 architecture \cite{he2016deep}, and we construct two datasets, $\mathsf{X}$ and $\mathsf{Y}$, by taking a random 50:50 split of the embedded images. We align $\mathsf{X}$ and $\mathsf{Y}$ using \ourmethod, FRLC, and mini-batch OT with batch-sizes ranging from $B=128$ to $B=1024$. ProgOT, Sinkhorn, and LOT could not be run on the datasets due to memory constraints. \ourmethod\ yielded a primal OT cost of 18.974, while FRLC \cite{FRLC} solution had a primal OT cost of 24.119 for rank $r=40$ and mini-batch OT has costs of $21.89$ ($B=128$) to $19.58$ ($B=1024$) (Table~\ref{tab:cost_values_imagenet_main}).

\begin{table}[t]
    \centering
    \caption{Cost Values $\langle \mathbf{C}, \mathbf{P} \rangle_{F}$ for ImageNet \cite{deng2009imagenet, ILSVRC15} Alignment Task.}
    \label{tab:cost_values_imagenet_main}
    \scriptsize 
    \setlength{\tabcolsep}{4pt} 
    \begin{tabular}{lcccccc}
        \toprule
        \textbf{Method} & \ourmethod\ & MB $128$ & MB $256$ & MB $512$ & MB $1024$ & FRLC \\
        \midrule
        \textbf{OT Cost} & \textbf{18.97} & 21.89 & 21.11 & 20.34 & 19.58 & 24.12 \\
        \bottomrule
    \end{tabular}
\end{table}


\section{Discussion}

\ourmethodFullCap
computes the Monge map between large-scale datasets in linear space, but has several limitations. 
First, we currently assume that the datasets 
$\mathsf{X}$ and $\mathsf{Y}$ have the same number of samples. 
In many machine learning applications, this is not a limiting factor, as 
one generally seeks to pair an equal number of source points $\mathbf{x}$ to target points $\mathbf{y}$. 
Second, while \ourmethodFullCap{} scales linearly in space and log-linearly in time, it still involves a constant dependent on the low-rank OT sub-procedure used -- this underscores the need to accelerate and stabilize low-rank OT solvers further \cite{scetbon2022lowrank, FRLC}. Finally, while \ourmethodFullCap  guarantees an optimal solution given an optimal black-box low-rank solver (Proposition~\ref{prop:low_rank_monge_main}), the low-rank solvers \cite{scetbon2022linear, FRLC} used in practice are not necessarily optimal, owing to the non-convexity of low-rank problems. 

Optimal transport has been successfully applied in deep learning frameworks, such as OT flow-matching \cite{tong2024improving}, computer vision and point cloud registration, \cite{yu2021cofinet, qin2022geometric}, among many others. The mini-batch procedure used to train many of these methods involves sampling two datasets $\mathsf{X}_{B} \sim \mu$ and $\mathsf{Y}_{B} \sim \nu$ with batch-size $B$ and aligning them with Sinkhorn at every training iteration. \ourmethod\ suggests an alternative approach: one can precompute millions of \emph{globally aligned} pairs and then sample $\mathsf{X}_{B} \sim \mu$ and the optimal mapping $T(\mathsf{X}_{B}) \sim \nu$ by indexing into these precomputed pairs. This approach applies to any loss function dependent on an OT alignment.

\ourmethodFullCap may also be useful in neural OT approaches which learn 
a continuous Monge map between the densities of two datasets.
For example, \cite{seguy2018large} minimize a 
loss $\min_{\theta}\frac{1}{2}\mathbb{E}_{\mu} \lVert T_{\theta}(\mathbf{x}_{i}) -   T(\mathbf{x}_{i}) \rVert_{2}^{2}$ between a neural network $T_{\theta}$ with parameters $\theta$ and a Monge map $T$ over samples $\mathbf{x}_{i} \sim \mu$
(Remark~\ref{remark:Monge_regression}). Thus, the procedure outlined above may be used to directly regress a neural network $T_{\theta}$ on the Monge map $T$ without the bias of mini-batching or entropy.





\section{Conclusion}

We introduce \ourmethodFullCap (\ourmethod), an algorithm to solve optimal transport with linear complexity in the number of points, making sparse, full-rank optimal transport feasible for large-scale datasets. Our algorithm leverages that low-rank optimal transport 
co-clusters points with their image under the  Monge map, refining bijections between partitions of each dataset across a hierarchy of scales, down to a bijective Monge map between the datasets at the finest scale. \ourmethodFullCap{} couplings achieve comparable primal cost to couplings obtained through full-rank entropic solvers, and scales to datasets with over a million points, 
opening the door to applications previously infeasible for optimal transport.

\section*{Acknowledgements}  We thank Henri Schmidt for many helpful conversations. This research was supported by NIH/NCI grant U24CA248453 to B.J.R. J.G. is supported by the Schmidt DataX Fund at Princeton University made possible through a major gift from the Schmidt Futures Foundation. 

\section*{Impact Statement}

Optimal transport has emerged as a powerful tool in generative modeling, yet its widespread use has been limited by scalability constraints. \ourmethod{} overcomes this limitation by enabling the application of OT to datasets with millions of points. This advancement paves the way for integrating OT into large-scale deep generative models and modern vision and language tasks.

As with any computational tool which may advance large-scale generative modeling, there are potential issues with bias in training datasets and a possibility of misuse. Use of  \ourmethod{} in applications should be careful and transparent about these risks and utilize appropriate mitigation strategies.

\section*{Code Availability}

Our implementation of \ourmethodFullCap\ is available at \href{https://github.com/raphael-group/HiRef}{https://github.com/raphael-group/HiRef}.

\bibliographystyle{icml2025}
\bibliography{bibliography}

\begin{thebibliography}{84}
\providecommand{\natexlab}[1]{#1}
\providecommand{\url}[1]{\texttt{#1}}
\expandafter\ifx\csname urlstyle\endcsname\relax
  \providecommand{\doi}[1]{doi: #1}\else
  \providecommand{\doi}{doi: \begingroup \urlstyle{rm}\Url}\fi

\bibitem[Aurenhammer et~al.(1998)Aurenhammer, Hoffmann, and Aronov]{aurenhammer1998minkowski}
Aurenhammer, F., Hoffmann, F., and Aronov, B.
\newblock Minkowski-type theorems and least-squares clustering.
\newblock \emph{Algorithmica}, 20:\penalty0 61--76, 1998.

\bibitem[Birkhoff(1946)]{birkhoff1946tres}
Birkhoff, G.
\newblock Tres observaciones sobre el algebra lineal.
\newblock \emph{Univ. Nac. Tucuman, Ser. A}, 5:\penalty0 147--154, 1946.

\bibitem[Bonneel et~al.(2015)Bonneel, Rabin, Peyr{\'e}, and Pfister]{bonneel2015sliced}
Bonneel, N., Rabin, J., Peyr{\'e}, G., and Pfister, H.
\newblock Sliced and {R}adon {W}asserstein barycenters of measures.
\newblock \emph{Journal of Mathematical Imaging and Vision}, 51:\penalty0 22--45, 2015.

\bibitem[Brenier(1991)]{brenier1991polar}
Brenier, Y.
\newblock Polar factorization and monotone rearrangement of vector-valued functions.
\newblock \emph{Communications on pure and applied mathematics}, 44\penalty0 (4):\penalty0 375--417, 1991.

\bibitem[Bunne et~al.(2022)Bunne, Papaxanthos, Krause, and Cuturi]{bunne2022proximal}
Bunne, C., Papaxanthos, L., Krause, A., and Cuturi, M.
\newblock Proximal optimal transport modeling of population dynamics.
\newblock In \emph{International Conference on Artificial Intelligence and Statistics}, volume~25, pp.\  6511--6528. PMLR, 2022.

\bibitem[Bunne et~al.(2023)Bunne, Stark, Gut, del Castillo, Levesque, Lehmann, Pelkmans, Krause, and R{\"a}tsch]{Bunne_2023}
Bunne, C., Stark, S.~G., Gut, G., del Castillo, J.~S., Levesque, M., Lehmann, K.-V., Pelkmans, L., Krause, A., and R{\"a}tsch, G.
\newblock Learning single-cell perturbation responses using neural optimal transport.
\newblock \emph{Nature Methods}, 20\penalty0 (11):\penalty0 1759--1768, 2023.

\bibitem[Buzun et~al.(2024)Buzun, Bobrin, and Dylov]{buzun2024expectile}
Buzun, N., Bobrin, M., and Dylov, D.~V.
\newblock Expectile regularization for fast and accurate training of neural optimal transport.
\newblock In \emph{Advances in Neural Information Processing Systems}, volume~37, pp.\  119811--119837, 2024.
\newblock URL \url{https://openreview.net/forum?id=4DA5vaPHFb}.

\bibitem[Chen et~al.(2022)Chen, Liao, Cheng, Ma, Wu, Lai, Qiu, Yang, Xu, Hao, et~al.]{chen2022spatiotemporal}
Chen, A., Liao, S., Cheng, M., Ma, K., Wu, L., Lai, Y., Qiu, X., Yang, J., Xu, J., Hao, S., et~al.
\newblock Spatiotemporal transcriptomic atlas of mouse organogenesis using {DNA} nanoball-patterned arrays.
\newblock \emph{Cell}, 185\penalty0 (10):\penalty0 1777--1792, 2022.

\bibitem[Chen et~al.(2023)Chen, Chen, Liu, Peng, and Ramaswami]{Chen2023}
Chen, J., Chen, L., Liu, Y.~P., Peng, R., and Ramaswami, A.
\newblock Exponential convergence of {S}inkhorn under regularization scheduling.
\newblock In \emph{SIAM Conference on Applied and Computational Discrete Algorithms}, pp.\  180--188. SIAM, 2023.

\bibitem[Chen \& Price(2017)Chen and Price]{chen2017condition}
Chen, X. and Price, E.
\newblock Condition number-free query and active learning of linear families.
\newblock \emph{CoRR, abs/1711.10051}, 24, 2017.

\bibitem[Chen et~al.(2018)Chen, Georgiou, and Tannenbaum]{chen2018optimal}
Chen, Y., Georgiou, T.~T., and Tannenbaum, A.
\newblock Optimal transport for {G}aussian mixture models.
\newblock \emph{IEEE Access}, 7:\penalty0 6269--6278, 2018.

\bibitem[Clifton et~al.(2023)Clifton, Anant, Aihara, Atta, Aimiuwu, Kebschull, Miller, Tward, and Fan]{clifton2023stalign}
Clifton, K., Anant, M., Aihara, G., Atta, L., Aimiuwu, O.~K., Kebschull, J.~M., Miller, M.~I., Tward, D., and Fan, J.
\newblock {ST}align: Alignment of spatial transcriptomics data using diffeomorphic metric mapping.
\newblock \emph{Nature Communications}, 14\penalty0 (1):\penalty0 8123, 2023.

\bibitem[Cohen \& Rothblum(1993)Cohen and Rothblum]{cohen1993nonnegative}
Cohen, J.~E. and Rothblum, U.~G.
\newblock Nonnegative ranks, decompositions, and factorizations of nonnegative matrices.
\newblock \emph{Linear Algebra and its Applications}, 190:\penalty0 149--168, 1993.

\bibitem[Courty et~al.(2014)Courty, Flamary, and Tuia]{courty2014domain}
Courty, N., Flamary, R., and Tuia, D.
\newblock Domain adaptation with regularized optimal transport.
\newblock In \emph{European Conference on Machine Learning and Knowledge Discovery in Databases}, pp.\  274--289. Springer, 2014.

\bibitem[Cuturi(2013)]{sinkhorn}
Cuturi, M.
\newblock Sinkhorn distances: Lightspeed computation of optimal transport.
\newblock \emph{Advances in Neural Information Processing Systems}, 26:\penalty0 2292--2300, 2013.

\bibitem[Cuturi et~al.(2022)Cuturi, Meng-Papaxanthos, Tian, Bunne, Davis, and Teboul]{cuturi2022optimal}
Cuturi, M., Meng-Papaxanthos, L., Tian, Y., Bunne, C., Davis, G., and Teboul, O.
\newblock Optimal {T}ransport {T}ools ({OTT}): A {JAX} {T}oolbox for all things {W}asserstein.
\newblock \emph{arXiv preprint arXiv:2201.12324}, 2022.

\bibitem[De~Bortoli et~al.(2024)De~Bortoli, Korshunova, Mnih, and Doucet]{bortoli2024schrodinger}
De~Bortoli, V., Korshunova, I., Mnih, A., and Doucet, A.
\newblock {S}chr{\"o}dinger bridge flow for unpaired data translation.
\newblock \emph{Advances in Neural Information Processing Systems}, 37:\penalty0 103384--103441, 2024.
\newblock URL \url{https://openreview.net/forum?id=1F32iCJFfa}.

\bibitem[{De Loera} \& Kim(2013){De Loera} and Kim]{Loera2013CombinatoricsAG}
{De Loera}, J.~A. and Kim, E.~D.
\newblock Combinatorics and geometry of transportation polytopes: An update.
\newblock \emph{Discrete Geometry and Algebraic Combinatorics}, 625:\penalty0 37--76, 2013.

\bibitem[Delon \& Desolneux(2020)Delon and Desolneux]{delon2020wasserstein}
Delon, J. and Desolneux, A.
\newblock A {W}asserstein-type distance in the space of {G}aussian mixture models.
\newblock \emph{SIAM Journal on Imaging Sciences}, 13\penalty0 (2):\penalty0 936--970, 2020.

\bibitem[Deng et~al.(2009)Deng, Dong, Socher, Li, Li, and Fei-Fei]{deng2009imagenet}
Deng, J., Dong, W., Socher, R., Li, L.-J., Li, K., and Fei-Fei, L.
\newblock {I}mage{N}et: A large-scale hierarchical image database.
\newblock In \emph{IEEE Conference on Computer Vision and Pattern Recognition}, pp.\  248--255. IEEE, 2009.

\bibitem[El~Hamri et~al.(2022)El~Hamri, Bennani, and Falih]{el2022hierarchical}
El~Hamri, M., Bennani, Y., and Falih, I.
\newblock Hierarchical optimal transport for unsupervised domain adaptation.
\newblock \emph{Machine Learning}, 111\penalty0 (11):\penalty0 4159--4182, 2022.

\bibitem[Fan et~al.(2023)Fan, Liu, Ma, Zhou, and Chen]{fan2023neural}
Fan, J., Liu, S., Ma, S., Zhou, H.-M., and Chen, Y.
\newblock Neural {M}onge map estimation and its applications.
\newblock \emph{Transactions on Machine Learning Research}, 2023.
\newblock URL \url{https://openreview.net/forum?id=2mZSlQscj3}.

\bibitem[Fatras et~al.(2020)Fatras, Zine, Flamary, Gribonval, and Courty]{fatras2020learning}
Fatras, K., Zine, Y., Flamary, R., Gribonval, R., and Courty, N.
\newblock Learning with minibatch {W}asserstein: asymptotic and gradient properties.
\newblock In \emph{International Conference on Artificial Intelligence and Statistics}, volume 108, pp.\  2131--2141. PMLR, 2020.
\newblock URL \url{http://proceedings.mlr.press/v108/fatras20a.html}.

\bibitem[Fatras et~al.(2021{\natexlab{a}})Fatras, S{\'e}journ{\'e}, Flamary, and Courty]{fatras2021jumbot}
Fatras, K., S{\'e}journ{\'e}, T., Flamary, R., and Courty, N.
\newblock Unbalanced minibatch optimal transport; applications to domain adaptation.
\newblock In \emph{International Conference on Machine Learning}, volume 139, pp.\  3186--3197. PMLR, 2021{\natexlab{a}}.
\newblock URL \url{http://proceedings.mlr.press/v139/fatras21a.html}.

\bibitem[Fatras et~al.(2021{\natexlab{b}})Fatras, Zine, Majewski, Flamary, Gribonval, and Courty]{fatras2021minibatch}
Fatras, K., Zine, Y., Majewski, S., Flamary, R., Gribonval, R., and Courty, N.
\newblock Minibatch optimal transport distances; analysis and applications.
\newblock \emph{arXiv preprint arXiv:2101.01792}, 2021{\natexlab{b}}.

\bibitem[Finlay et~al.(2020)Finlay, Jacobsen, Nurbekyan, and Oberman]{finlay2020trainneuralodeworld}
Finlay, C., Jacobsen, J.-H., Nurbekyan, L., and Oberman, A.
\newblock How to train your neural {ODE}: the world of {J}acobian and kinetic regularization.
\newblock In \emph{International Conference on Machine Learning}, pp.\  3154--3164. PMLR, 2020.

\bibitem[Forrow et~al.(2019)Forrow, H{\"u}tter, Nitzan, Rigollet, Schiebinger, and Weed]{forrow19a}
Forrow, A., H{\"u}tter, J.-C., Nitzan, M., Rigollet, P., Schiebinger, G., and Weed, J.
\newblock Statistical optimal transport via factored couplings.
\newblock In \emph{International Conference on Artificial Intelligence and Statistics}, volume~89, pp.\  2454--2465. PMLR, 2019.
\newblock URL \url{https://proceedings.mlr.press/v89/forrow19a.html}.

\bibitem[Frieze et~al.(2004)Frieze, Kannan, and Vempala]{10.1145/1039488.1039494}
Frieze, A., Kannan, R., and Vempala, S.
\newblock Fast {M}onte-{C}arlo {A}lgorithms for {F}inding {L}ow-rank {A}pproximations.
\newblock \emph{J. ACM}, 51\penalty0 (6):\penalty0 1025–1041, nov 2004.
\newblock ISSN 0004-5411.
\newblock \doi{10.1145/1039488.1039494}.
\newblock URL \url{https://doi.org/10.1145/1039488.1039494}.

\bibitem[Genevay et~al.(2018)Genevay, Peyr{\'e}, and Cuturi]{genevay18learning}
Genevay, A., Peyr{\'e}, G., and Cuturi, M.
\newblock Learning generative models with {S}inkhorn divergences.
\newblock In \emph{International Conference on Artificial Intelligence and Statistics}, volume~84, pp.\  1608--1617. PMLR, 2018.
\newblock URL \url{https://proceedings.mlr.press/v84/genevay18a.html}.

\bibitem[Gerber \& Maggioni(2017)Gerber and Maggioni]{gerber2017multiscale}
Gerber, S. and Maggioni, M.
\newblock Multiscale strategies for computing optimal transport.
\newblock \emph{Journal of Machine Learning Research}, 18\penalty0 (72):\penalty0 1--32, 2017.

\bibitem[Geshkovski et~al.(2023)Geshkovski, Letrouit, Polyanskiy, and Rigollet]{geshkovski2023mathematical}
Geshkovski, B., Letrouit, C., Polyanskiy, Y., and Rigollet, P.
\newblock A mathematical perspective on {T}ransformers.
\newblock \emph{arXiv preprint arXiv:2312.10794}, 2023.

\bibitem[Glimm \& Henscheid(2013)Glimm and Henscheid]{glimm2013iterative}
Glimm, T. and Henscheid, N.
\newblock Iterative scheme for solving optimal transportation problems arising in reflector design.
\newblock \emph{International Scholarly Research Notices}, 2013\penalty0 (1):\penalty0 635263, 2013.

\bibitem[Halmos et~al.(2024)Halmos, Liu, Gold, and Raphael]{FRLC}
Halmos, P., Liu, X., Gold, J., and Raphael, B.
\newblock {L}ow-{R}ank {O}ptimal {T}ransport through {F}actor {R}elaxation with {L}atent {C}oupling.
\newblock In \emph{The Thirty-eighth Annual Conference on Neural Information Processing Systems}, 2024.
\newblock URL \url{https://openreview.net/forum?id=hGgkdFF2hR}.

\bibitem[Halmos et~al.(2025{\natexlab{a}})Halmos, Gold, Liu, and Raphael]{HMOT}
Halmos, P., Gold, J., Liu, X., and Raphael, B.~J.
\newblock Learning latent trajectories in developmental time series with {H}idden-{M}arkov optimal transport.
\newblock In \emph{International Conference on Research in Computational Molecular Biology}, pp.\  367--370. Springer, 2025{\natexlab{a}}.

\bibitem[Halmos et~al.(2025{\natexlab{b}})Halmos, Liu, Gold, Chen, Ding, and Raphael]{destot}
Halmos, P., Liu, X., Gold, J., Chen, F., Ding, L., and Raphael, B.~J.
\newblock {D}e{ST}-{OT}: Alignment of spatiotemporal transcriptomics data.
\newblock \emph{Cell Systems}, 16\penalty0 (2), 2025{\natexlab{b}}.

\bibitem[He et~al.(2016)He, Zhang, Ren, and Sun]{he2016deep}
He, K., Zhang, X., Ren, S., and Sun, J.
\newblock Deep residual learning for image recognition.
\newblock In \emph{Proceedings of the IEEE conference on computer vision and pattern recognition}, pp.\  770--778, 2016.

\bibitem[Huangfu \& Hall(2018)Huangfu and Hall]{Huangfu2017}
Huangfu, Q. and Hall, J. A.~J.
\newblock Parallelizing the dual revised simplex method.
\newblock \emph{Mathematical Programming Computation}, 10\penalty0 (1):\penalty0 119--142, 2018.

\bibitem[Huizing et~al.(2024)Huizing, Peyr{\'e}, and Cantini]{Huizing2024}
Huizing, G.-J., Peyr{\'e}, G., and Cantini, L.
\newblock Learning cell fate landscapes from spatial transcriptomics using {F}used {G}romov-{W}asserstein.
\newblock \emph{bioRxiv preprint bioRxiv:2024.07.26.605241}, 2024.

\bibitem[Indyk et~al.(2019)Indyk, Vakilian, Wagner, and Woodruff]{pmlr-v99-indyk19a}
Indyk, P., Vakilian, A., Wagner, T., and Woodruff, D.~P.
\newblock Sample-optimal low-rank approximation of distance matrices.
\newblock In \emph{Conference on Learning Theory}, volume~99, pp.\  1723--1751. PMLR, 2019.

\bibitem[Kantorovich(1942)]{kantorovich1942transfer}
Kantorovich, L.
\newblock On the translocation of masses.
\newblock \emph{Doklady Akademii Nauk SSSR}, 37\penalty0 (7-8):\penalty0 227--229, 1942.

\bibitem[Kassraie et~al.(2024)Kassraie, Pooladian, Klein, Thornton, Niles-Weed, and Cuturi]{kassraie2024progressive}
Kassraie, P., Pooladian, A.-A., Klein, M., Thornton, J., Niles-Weed, J., and Cuturi, M.
\newblock Progressive entropic optimal transport solvers.
\newblock \emph{Advances in Neural Information Processing Systems}, 37:\penalty0 19561--19590, 2024.

\bibitem[Klein et~al.(2024)Klein, Uscidda, Theis, and Cuturi]{klein2024generative}
Klein, D., Uscidda, T., Theis, F.~J., and Cuturi, M.
\newblock Generative entropic neural optimal transport to map within and across space, 2024.
\newblock URL \url{https://openreview.net/forum?id=gBLEHzKOfF}.

\bibitem[Klein et~al.(2025)Klein, Palla, Lange, Klein, Piran, Gander, Meng-Papaxanthos, Sterr, Saber, Jing, Bastidas-Ponce, Cota, Tarquis-Medina, Parikh, Gold, Lickert, Bakhti, Nitzan, Cuturi, and Theis]{klein2023moscot}
Klein, D., Palla, G., Lange, M., Klein, M., Piran, Z., Gander, M., Meng-Papaxanthos, L., Sterr, M., Saber, L., Jing, C., Bastidas-Ponce, A., Cota, P., Tarquis-Medina, M., Parikh, S., Gold, I., Lickert, H., Bakhti, M., Nitzan, M., Cuturi, M., and Theis, F.~J.
\newblock Mapping cells through time and space with {moscot}.
\newblock \emph{Nature}, pp.\  1--11, 2025.

\bibitem[Kornilov et~al.(2024)Kornilov, Mokrov, Gasnikov, and Korotin]{kornilov2024optimal}
Kornilov, N., Mokrov, P., Gasnikov, A., and Korotin, A.
\newblock Optimal flow matching: Learning straight trajectories in just one step.
\newblock \emph{Advances in Neural Information Processing Systems}, 37:\penalty0 104180--104204, 2024.
\newblock URL \url{https://openreview.net/forum?id=kqmucDKVcU}.

\bibitem[Korotin et~al.(2021)Korotin, Li, Genevay, Solomon, Filippov, and Burnaev]{KorotinLGSFB21}
Korotin, A., Li, L., Genevay, A., Solomon, J.~M., Filippov, A., and Burnaev, E.
\newblock Do neural optimal transport solvers work? {A} continuous {Wasserstein}-2 benchmark.
\newblock \emph{Advances in Neural Information Processing Systems}, 34:\penalty0 14593--14605, 2021.

\bibitem[Korotin et~al.(2023)Korotin, Selikhanovych, and Burnaev]{korotin2023neural}
Korotin, A., Selikhanovych, D., and Burnaev, E.
\newblock Neural optimal transport.
\newblock \emph{International Conference on Learning Representations}, 2023.
\newblock URL \url{https://openreview.net/forum?id=d8CBRlWNkqH}.

\bibitem[Kuhn(1955)]{Kuhn1955Hungarian}
Kuhn, H.~W.
\newblock The {Hungarian} method for the assignment problem.
\newblock \emph{Naval Research Logistics Quarterly}, 2\penalty0 (1--2):\penalty0 83--97, 1955.

\bibitem[Lavenant et~al.(2024)Lavenant, Zhang, Kim, and Schiebinger]{lavenant2021towards}
Lavenant, H., Zhang, S., Kim, Y.-H., and Schiebinger, G.
\newblock Toward a mathematical theory of trajectory inference.
\newblock \emph{The Annals of Applied Probability}, 34\penalty0 (1A):\penalty0 428--500, 2024.

\bibitem[Li et~al.(2024)Li, Chen, Chai, and Xiong]{li2024gilot}
Li, X., Chen, J., Chai, Y., and Xiong, H.
\newblock {GiLOT}: Interpreting generative language models via optimal transport.
\newblock \emph{International Conference on Machine Learning}, 2024.
\newblock URL \url{https://openreview.net/forum?id=qKL25sGjxL}.

\bibitem[Lin et~al.(2021)Lin, Azabou, and Dyer]{lin2021making}
Lin, C.-H., Azabou, M., and Dyer, E.~L.
\newblock Making transport more robust and interpretable by moving data through a small number of anchor points.
\newblock \emph{International Conference on Machine Learning}, 139:\penalty0 6631, 2021.

\bibitem[Luo et~al.(2023)Luo, Yang, and Wei]{luo2023improved}
Luo, J., Yang, D., and Wei, K.
\newblock Improved complexity analysis of the sinkhorn and greenkhorn algorithms for optimal transport.
\newblock \emph{arXiv preprint arXiv:2305.14939}, 2023.

\bibitem[Makkuva et~al.(2020)Makkuva, Taghvaei, Oh, and Lee]{pmlr-v119-makkuva20a}
Makkuva, A., Taghvaei, A., Oh, S., and Lee, J.
\newblock Optimal transport mapping via input convex neural networks.
\newblock \emph{International Conference on Machine Learning}, 119:\penalty0 6672--6681, 2020.

\bibitem[Melnyk et~al.(2024)Melnyk, Mroueh, Belgodere, Rigotti, Nitsure, Yurochkin, Greenewald, Navratil, and Ross]{melnyk2024distributional}
Melnyk, I., Mroueh, Y., Belgodere, B., Rigotti, M., Nitsure, A., Yurochkin, M., Greenewald, K., Navratil, J., and Ross, J.
\newblock Distributional preference alignment of {LLMs} via optimal transport.
\newblock \emph{Advances in Neural Information Processing Systems}, 2024.
\newblock URL \url{https://openreview.net/forum?id=2LctgfN6Ty}.

\bibitem[M{\'e}rigot(2011)]{merigot2011multiscale}
M{\'e}rigot, Q.
\newblock A multiscale approach to optimal transport.
\newblock \emph{Computer Graphics Forum}, 30\penalty0 (5):\penalty0 1583--1592, 2011.

\bibitem[Monge(1781)]{monge1781memoire}
Monge, G.
\newblock M{\'e}moire sur la th{\'e}orie des d{\'e}blais et des remblais.
\newblock \emph{Mem. Math. Phys. Acad. Royale Sci.}, pp.\  666--704, 1781.

\bibitem[Nguyen et~al.(2022{\natexlab{a}})Nguyen, Nguyen, Pham, and Ho]{nguyen2021improving}
Nguyen, K., Nguyen, D., Pham, T., and Ho, N.
\newblock Improving mini-batch optimal transport via partial transportation.
\newblock In \emph{Proceedings of the 39th International Conference on Machine Learning}, 2022{\natexlab{a}}.

\bibitem[Nguyen et~al.(2022{\natexlab{b}})Nguyen, Ren, Nguyen, Rout, Nguyen, and Ho]{nguyen2022hierarchical}
Nguyen, K., Ren, T., Nguyen, H., Rout, L., Nguyen, T.~M., and Ho, N.
\newblock Hierarchical sliced {W}asserstein distance.
\newblock \emph{International Conference on Learning Representations}, 2022{\natexlab{b}}.

\bibitem[Oberman \& Ruan(2015)Oberman and Ruan]{oberman2015efficient}
Oberman, A.~M. and Ruan, Y.
\newblock An efficient linear programming method for optimal transportation.
\newblock \emph{arXiv preprint arXiv:1509.03668}, 2015.

\bibitem[Orlin(1997)]{Orlin1997}
Orlin, J.~B.
\newblock A polynomial time primal network simplex algorithm for minimum cost flows.
\newblock \emph{Mathematical Programming}, 78\penalty0 (2):\penalty0 109--129, 1997.

\bibitem[Peyr{\'e} \& Cuturi(2019)Peyr{\'e} and Cuturi]{peyre2019computational}
Peyr{\'e}, G. and Cuturi, M.
\newblock Computational optimal transport: With applications to data science.
\newblock \emph{Foundations and Trends in Machine Learning}, 11\penalty0 (5--6):\penalty0 355--607, 2019.

\bibitem[Qin et~al.(2022)Qin, Yu, Wang, Guo, Peng, and Xu]{qin2022geometric}
Qin, Z., Yu, H., Wang, C., Guo, Y., Peng, Y., and Xu, K.
\newblock Geometric transformer for fast and robust point cloud registration.
\newblock In \emph{IEEE/CVF Conference on Computer Vision and Pattern Recognition}, pp.\  11143--11152, 2022.

\bibitem[Ramesh et~al.(2021)Ramesh, Pavlov, Goh, Gray, Voss, Radford, Chen, and Sutskever]{dalle}
Ramesh, A., Pavlov, M., Goh, G., Gray, S., Voss, C., Radford, A., Chen, M., and Sutskever, I.
\newblock Zero-shot text-to-image generation.
\newblock \emph{International Conference on Machine Learning}, 139:\penalty0 8821--8831, 2021.

\bibitem[Russakovsky et~al.(2015)Russakovsky, Deng, Su, Krause, Satheesh, Ma, Huang, Karpathy, Khosla, Bernstein, Berg, and Fei-Fei]{ILSVRC15}
Russakovsky, O., Deng, J., Su, H., Krause, J., Satheesh, S., Ma, S., Huang, Z., Karpathy, A., Khosla, A., Bernstein, M., Berg, A.~C., and Fei-Fei, L.
\newblock {ImageNet} large scale visual recognition challenge.
\newblock \emph{International Journal of Computer Vision}, 115\penalty0 (3):\penalty0 211--252, 2015.

\bibitem[Sander et~al.(2022)Sander, Ablin, Blondel, and Peyr{\'e}]{sander22a}
Sander, M.~E., Ablin, P., Blondel, M., and Peyr{\'e}, G.
\newblock Sinkformers: Transformers with doubly stochastic attention.
\newblock In \emph{International Conference on Artificial Intelligence and Statistics}, pp.\  3515--3530. PMLR, 2022.

\bibitem[Scetbon \& Cuturi(2022)Scetbon and Cuturi]{scetbon2022lowrank}
Scetbon, M. and Cuturi, M.
\newblock Low-rank optimal transport: Approximation, statistics and debiasing.
\newblock \emph{Advances in Neural Information Processing Systems}, 35:\penalty0 6802--6814, 2022.

\bibitem[Scetbon et~al.(2021)Scetbon, Cuturi, and Peyr{\'e}]{Scetbon2021LowRankSF}
Scetbon, M., Cuturi, M., and Peyr{\'e}, G.
\newblock Low-rank {Sinkhorn} factorization.
\newblock \emph{International Conference on Machine Learning}, pp.\  9344--9354, 2021.

\bibitem[Scetbon et~al.(2022)Scetbon, Peyr{\'e}, and Cuturi]{scetbon2022linear}
Scetbon, M., Peyr{\'e}, G., and Cuturi, M.
\newblock Linear-time {Gromov} {Wasserstein} distances using low rank couplings and costs.
\newblock \emph{International Conference on Machine Learning}, pp.\  19347--19365, 2022.

\bibitem[Scetbon et~al.(2023)Scetbon, Klein, Palla, and Cuturi]{scetbon2023unbalanced}
Scetbon, M., Klein, M., Palla, G., and Cuturi, M.
\newblock Unbalanced low-rank optimal transport solvers.
\newblock \emph{Advances in Neural Information Processing Systems}, 36:\penalty0 52312--52325, 2023.

\bibitem[Schiebinger et~al.(2019)Schiebinger, Shu, Tabaka, Cleary, Subramanian, Solomon, Gould, Liu, Lin, and Berube]{schiebinger2019optimal}
Schiebinger, G., Shu, J., Tabaka, M., Cleary, B., Subramanian, V., Solomon, A., Gould, J., Liu, S., Lin, S., and Berube, P.
\newblock Optimal-transport analysis of single-cell gene expression identifies developmental trajectories in reprogramming.
\newblock \emph{Cell}, 176\penalty0 (4):\penalty0 928--943, 2019.

\bibitem[Schmitzer(2016)]{schmitzer2016sparse}
Schmitzer, B.
\newblock A sparse multiscale algorithm for dense optimal transport.
\newblock \emph{Journal of Mathematical Imaging and Vision}, 56:\penalty0 238--259, 2016.

\bibitem[Schmitzer \& Schn{\"o}rr(2013)Schmitzer and Schn{\"o}rr]{schmitzer2013hierarchical}
Schmitzer, B. and Schn{\"o}rr, C.
\newblock A hierarchical approach to optimal transport.
\newblock In \emph{International Conference on Scale Space and Variational Methods in Computer Vision}, pp.\  452--464. Springer, 2013.

\bibitem[Seguy et~al.(2018)Seguy, Damodaran, Flamary, Courty, Rolet, and Blondel]{seguy2018large}
Seguy, V., Damodaran, B.~B., Flamary, R., Courty, N., Rolet, A., and Blondel, M.
\newblock Large-scale optimal transport and mapping estimation.
\newblock \emph{International Conference on Learning Representations}, 2018.

\bibitem[Solomon et~al.(2015)Solomon, De~Goes, Peyr{\'e}, Cuturi, Butscher, Nguyen, Du, and Guibas]{solomon2015convolutional}
Solomon, J., De~Goes, F., Peyr{\'e}, G., Cuturi, M., Butscher, A., Nguyen, A., Du, T., and Guibas, L.
\newblock Convolutional {Wasserstein} distances: Efficient optimal transportation on geometric domains.
\newblock \emph{ACM Transactions on Graphics}, 34\penalty0 (4):\penalty0 1--11, 2015.

\bibitem[Sommerfeld et~al.(2019)Sommerfeld, Schrieber, Zemel, and Munk]{JMLR:v20:18-079}
Sommerfeld, M., Schrieber, J., Zemel, Y., and Munk, A.
\newblock Optimal transport: Fast probabilistic approximation with exact solvers.
\newblock \emph{Journal of Machine Learning Research}, 20\penalty0 (105):\penalty0 1--23, 2019.

\bibitem[St{\aa}hl et~al.(2016)St{\aa}hl, Salm{\'e}n, Vickovic, Lundmark, Navarro, Magnusson, Giacomello, Asp, Westholm, and Huss]{staahl2016visualization}
St{\aa}hl, P.~L., Salm{\'e}n, F., Vickovic, S., Lundmark, A., Navarro, J.~F., Magnusson, J., Giacomello, S., Asp, M., Westholm, J.~O., and Huss, M.
\newblock Visualization and analysis of gene expression in tissue sections by spatial transcriptomics.
\newblock \emph{Science}, 353\penalty0 (6294):\penalty0 78--82, 2016.

\bibitem[Tarjan(1997)]{Tarjan1997}
Tarjan, R.~E.
\newblock Dynamic trees as search trees via {Euler} tours, applied to the network simplex algorithm.
\newblock \emph{Mathematical Programming}, 78\penalty0 (2):\penalty0 169--177, 1997.

\bibitem[Tay et~al.(2020)Tay, Bahri, Yang, Metzler, and Juan]{tay20a}
Tay, Y., Bahri, D., Yang, L., Metzler, D., and Juan, D.-C.
\newblock Sparse {Sinkhorn} attention.
\newblock \emph{International Conference on Machine Learning}, 119:\penalty0 9438--9447, 2020.

\bibitem[Thorpe(2018)]{Thorpe2017IntroductionTO}
Thorpe, M.
\newblock Introduction to optimal transport.
\newblock \emph{Notes of Course at University of Cambridge}, 2018.

\bibitem[Tong et~al.(2024)Tong, Fatras, Malkin, Huguet, Zhang, Rector-Brooks, Wolf, and Bengio]{tong2024improving}
Tong, A., Fatras, K., Malkin, N., Huguet, G., Zhang, Y., Rector-Brooks, J., Wolf, G., and Bengio, Y.
\newblock Improving and generalizing flow-based generative models with minibatch optimal transport.
\newblock \emph{Transactions on Machine Learning Research}, 2024.
\newblock URL \url{https://openreview.net/forum?id=CD9Snc73AW}.

\bibitem[Wolf et~al.(2018)Wolf, Angerer, and Theis]{wolf2018scanpy}
Wolf, F.~A., Angerer, P., and Theis, F.~J.
\newblock {SCANPY}: Large-scale single-cell gene expression data analysis.
\newblock \emph{Genome Biology}, 19:\penalty0 1--5, 2018.

\bibitem[Yang et~al.(2020)Yang, Damodaran, Venkatachalapathy, Soylemezoglu, Shivashankar, and Uhler]{yang2020predicting}
Yang, K.~D., Damodaran, K., Venkatachalapathy, S., Soylemezoglu, A.~C., Shivashankar, G.~V., and Uhler, C.
\newblock Predicting cell lineages using autoencoders and optimal transport.
\newblock \emph{PLoS Computational Biology}, 16\penalty0 (4):\penalty0 e1007828, 2020.

\bibitem[Yu et~al.(2021)Yu, Li, Saleh, Busam, and Ilic]{yu2021cofinet}
Yu, H., Li, F., Saleh, M., Busam, B., and Ilic, S.
\newblock {CoFiNet}: Reliable coarse-to-fine correspondences for robust pointcloud registration.
\newblock \emph{Advances in Neural Information Processing Systems}, 34:\penalty0 23872--23884, 2021.

\bibitem[Yurochkin et~al.(2019)Yurochkin, Claici, Chien, Mirzazadeh, and Solomon]{yurochkin2019hierarchical}
Yurochkin, M., Claici, S., Chien, E., Mirzazadeh, F., and Solomon, J.~M.
\newblock Hierarchical optimal transport for document representation.
\newblock \emph{Advances in Neural Information Processing Systems}, 32, 2019.

\bibitem[Zeira et~al.(2022)Zeira, Land, Strzalkowski, and Raphael]{zeira2022PASTE}
Zeira, R., Land, M., Strzalkowski, A., and Raphael, B.~J.
\newblock Alignment and integration of spatial transcriptomics data.
\newblock \emph{Nature Methods}, 19\penalty0 (5):\penalty0 567--575, 2022.

\end{thebibliography}

\nocite{langley00}


\newpage
\appendix
\onecolumn

\setcounter{figure}{0} 
\setcounter{table}{0} 

\renewcommand{\thefigure}{S\arabic{figure}}
\renewcommand{\thetable}{S\arabic{table}}

\setcounter{equation}{0}
\renewcommand{\theequation}{S\arabic{equation}}

\newpage

\newpage

\section{Hierarchical-Refinement Algorithm}

\begin{algorithm}[htb]
\caption{Hierarchical Refinement for Full-Rank OT}
\label{alg:hr_ot_supp}
\begin{algorithmic}[1]
\Require 
  \textbf{Datasets} \( \mathsf{X} = \{\mathbf{x}_i\}_{i=1}^n,\, \quad  \mathsf{Y} = \{\mathbf{y}_i\}_{i=1}^n\); 
  \textbf{Low-rank OT solver} \(\mathrm{LROT}(\cdot)\);
  \textbf{Rank schedule} \((r_1, r_2, \ldots, r_\kappa)\);
  \textbf{Base rank} \(r_{\text{base}} = \frac{n}{\prod_{t=1}^\kappa r_t}\) (e.g. 1).
\Statex

\noindent \textbf{Initialize:}
\State \(t \gets 0\), \(\Gamma_0 \gets \{\,( \mathsf{X}, \mathsf{Y})\}\)
\While{$\exists\,( \mathsf{X}^{(t)}, \mathsf{Y}^{(t)})\in \Gamma_t$ \textbf{ such that } \\
        $\qquad \qquad \qquad \min\{|\mathsf{X}^{(t)}|, |\mathsf{Y}^{(t)}|\} > r_{\text{base}}$}
    \State \(\Gamma_{t+1} \gets \varnothing\)
    \For{$(\mathsf{X}_q^{(t)}, \mathsf{Y}_q^{(t)}) \in \Gamma_t $}
        \If{ $\min\{ | \mathsf{X}_q^{(t)}|, |\mathsf{Y}_q^{(t)}| \} \leq r_{\text{base}} $ }
            \State \(\Gamma_{t+1} \gets \Gamma_{t+1} \cup \{(\mathsf{X}_q^{(t)}, \mathsf{Y}_q^{(t)})\}\)
        \Else
            \State \(\mu_{\mathsf{X}_q^{(t)}} = \frac{1}{|\mathsf{X}_q^{(t)}|} \sum_{\mathbf{x} \in \mathsf{X}_q^{(t)}} \delta_{\mathbf{x}}\)
            \State \(\mu_{\mathsf{Y}_q^{(t)}} = \frac{1}{|\mathsf{Y}_q^{(t)}|} \sum_{\mathbf{y} \in \mathsf{Y}_q^{(t)}} \delta_{\mathbf{y}}.\)
            \State \(\mathbf{g}_{t+1} \gets (1/r_{t+1}) \mathbf{1}_{r_{t+1}}\)
            \State \((\mathbf{Q}, \mathbf{R}) \gets 
              \mathrm{LROT}(\mu_{\mathsf{X}_q^{(t)}}, \mu_{\mathsf{Y}_q^{(t)}}, \mathbf{g}_{t+1})\)
            \For{$z = 1 \to r_{t+1}$}
                \State \(\mathsf{X}^{(t+1)}_z \gets \text{Assign}(\mathsf{X}^{(t)}, \mathbf{Q}, z) \)
                \State \(\mathsf{Y}^{(t+1)}_z \gets \text{Assign}(\mathsf{Y}^{(t)}, \mathbf{R}, z)\)
                \State \(\Gamma_{t+1} \gets \Gamma_{t+1} \cup 
                  \{\,(\mathsf{X}^{(t+1)}_z,\; \mathsf{Y}^{(t+1)}_z)\}\)
            \EndFor
            \State \Comment{\scriptsize\(\text{Assign}(\mathsf{S}, \mathbf{M}, z) = \{s \in \mathsf{S} \mid \arg\max_{z'} \mathbf{M}_{s z'} = z\}\)\normalsize}
            \EndIf
    \EndFor
    \State \(t \gets t + 1\)
\EndWhile

\State \textbf{Output:} 
  \(\Gamma_\kappa = \{ (\mathbf{x}_i, T(\mathbf{x}_i))\}\) 
  \quad \Comment{Set of refined pairs.}
\end{algorithmic}
\end{algorithm}

\section{Proofs}
\label{sec:supp_pfs}

Datasets $\mathsf{X}$ and $\mathsf{Y}$ are represented as discretely supported probability measures $\mu = \sum_{i=1}^{n} \mathbf{a}_{i} \delta_{ \mathbf{x}_{i}}$ and $\nu = \sum_{j=1}^{n} \mathbf{b}_{j} \delta_{ \mathbf{y}_{j}}$ for probability vectors $\mathbf{a}, \mathbf{b} \in \Delta_{n}$, which we assume to be uniform: $\mb{a} = \mb{b} = \bm{u}_{n} = (1/n) \mathbf{1}_{n} \in \Delta_{n}$. We form the cost matrix $\mathbf{C}$ defined by 
\begin{align}
\label{eq:supp_cost}
    \mathbf{C}_{ij} := c( \mathbf{x}_i, \mathbf{y}_j ). 
\end{align}
 In all cases below, we are concerned with the assignment problem \eqref{eq:monge_prob} for this cost matrix.

Let $\mb{perm}(n) = \{ \Tilde{\mb{P}} \in \mathbb{R}^{n \times n} : \Tilde{\mb{P}}\bm{1}_n=\Tilde{\mb{P}}^{\top}\bm{1}_n=(1/n)\,\bm{1}_n\}$ denote the set of (scaled) $n \times n$ permutation matrices. By the Birkhoff-von Neumann theorem \cite{birkhoff1946tres}, an optimal solution to the $n \times n$ assignment problem is attained at a permutation matrix in $\mb{perm}(n)$.

\begin{definition}
\label{def:monge_reg}
    Say that cost matrix $\mb{C} \in \R^{n \times n}$ is
    \emph{Monge rotated} if the identity matrix $\mb{I}$ is a solution to the assignment problem associated to $\mb{C}$, i.e.
    \begin{align*}
        \mb{I}
        \in \argmin_{
        \mb{P} \in \mb{perm}(n)
        }
        \la \mb{C}, \mb{P} \ra .
    \end{align*}
\end{definition}
For arbitrary cost matrix $\mb{C} \in \mathbb{R}^{n \times n}$, let $\mb{P}^{\dagger} \in  \arg\min_{\,\,\mb{P} \in \mb{perm}(n) } \left\langle \mb{C}, \mb{P} \right\rangle_{F}$,
and note that the column-permuted cost matrix 
$\mb{C}^\dagger := \mb{C}\mb{P}^{\dagger, \top}$ is Monge rotated by construction. This is a consequence of the following identity, which holds for any permutation $\tilde{\mb{P}} \in \mb{perm}(n)$.
\begin{equation}
\label{eq:monge_reg_id}
\begin{split}
    \la \mb{C}, \mb{P} \ra_F
    &= \Tr ( \mb{C}^\top \mb{P} )  \\
    &= \Tr (\tilde{\mb{P}}^{-1} \tilde{\mb{P}}\mb{C}^\top 
     \mb{P} ) \\
    &=
    \Tr ( \tilde{\mb{P}} \mb{C}^\top 
    \mb{P} \tilde{\mb{P}}^\top )
    =
    \la \mb{C} \tilde{\mb{P}}^\top 
    ,\mb{P}  \tilde{\mb{P}}^\top  \ra_F. 
\end{split}
\end{equation}
Let $\Pi(\bm{u}_n, \bm{u}_r) \equiv \Pi_{\bm{u}_n, \bm{u}_2}$ denote the transport polytope between two uniform measures. For $\mb{Q} \in \Pi(\bm{u}_n, \bm{u}_r)$, say that a row of $\mb{Q}$ is \emph{soft} if at least two of its entries are positive, and call the row \emph{hard} otherwise. 
For rank $r \ll n$ such that $r$ divides $n$, 
let $\Pi_\bullet( \bm{u}_n, \bm{u}_r)$ be the subset of $\Pi(\bm{u}_n, \bm{u}_r)$ consisting of transport plans $\mb{Q}$ with only hard rows. 
Below, we consider two low-rank OT problems associated to $\mb{C}^\dagger$. 
The first low-rank problem considered is 
\begin{align}
\label{prob:general_LROT}
\min_{\mb{Q},\mb{R} \in \Pi_\bullet(\bm{u}_{n}, \bm{u_{r}})} \,\,\langle \mb{C}^\dagger, \mb{Q} \mb{R}^{\top} \rangle_{F}.
\end{align}
while the second low-rank problem considered is restricted to symmetric couplings:
\begin{align}
\label{eq:lr_sym}
\min_{
\mb{Q} \in \Pi_\bullet(\bm{u}_n, \bm{u}_r) 
}
\la
\mb{C}^\dagger, 
\mb{Q}\mb{Q}^\top \ra_F. 
\end{align}
In either case, we have omitted the constant factor of $r$ coming from $\mathrm{diag}(1/\bm{u}_r)$. 
We next introduce a technical condition on $\mb{C}$. Let $\mb{C} \in \R^{n \times n}$ be a cost matrix and let $\mb{P}^\dagger \in \argmin_{\mb{P}\in \mb{perm}(n)} \la \mb{C}, \mb{P}\ra$ corresponding to permutation $\sig^\dagger : [n] \to [n], \sig^\dagger \in \mathfrak{S}_n$. Given partitions $\mathcal{I} = \{ I_1, \dots, I_r\}$ and 
$\mathcal{J} = \{ J_1, \dots, J_r\}$ of $[n]$ and $a, b \in [r]$, define the cost between two sets $I_a$, $J_b$ to be
\begin{align}
\label{eq:C_block}
    \mb{C}_{I_a, J_b} := 
    \sum_{i \in I_a, j \in J_b}
    \mb{C}_{i \sig^\dagger(j)}.
\end{align}
We call partition $\mathcal{I}$ \emph{balanced} if each block $I_a$ of $\mathcal{I}$ has the same number of elements, $|\, I_a\, | = (n/r)$.

\begin{definition} 
\label{def:r_Monge_sep}
Cost matrix $\mb{C} \in \R^{n \times n}$ is 
\emph{$r$-Monge separable} if there exists a balanced partition $\mathcal{I}^\star = \{ I_k^\star \}_{k=1}^r$, such that for any two permutations $\pi_1, \pi_2 \in \mathfrak{S}_n$, one has
\begin{align}
\label{eq:r_Monge_sep}
\sum_{k=1}^r 
\mb{C}_{I_k^\star, I_k^\star} 
\leq \sum_{k=1}^{r} \, \mb{C}_{
\pi_1(I_k^\star), \pi_2(I_k^\star)
}.
\end{align}
We say that $\mb{C}$ is \emph{strictly $r$-Monge separable} if \eqref{eq:r_Monge_sep} holds with strict inequality $(<)$ for any $\pi_1(I_k^\star) \neq \pi_2(I_k^\star)$. 
\end{definition}

One interesting feature of this definition is that while the sum is over $r \leq n$ terms, where it may occur that $r \ll n$, this inequality must hold over all permutations $\pi_1$ and $\pi_2$ acting on the individual data points, rather than partition blocks. This captures the notion of finding low-rank or low-resolution solutions which are nevertheless compatible with the optimal bijective Monge map.

\begin{remark}
\label{rmk:r_Monge_sep}
    If $\mb{C}$ is $r$-Monge separable, the distinguished partition $\mathcal{I}^\star$ may be represented as $\mb{Q}^\star \in \Pi_\bullet( \bm{u}_n, \bm{u}_r)$ such that $\mb{Q}^\star$ is optimal for \eqref{eq:lr_sym} and the pair
    $(\mb{Q}^\star, \mb{Q}^\star)$ is optimal for \eqref{prob:general_LROT}. After proving the next lemma, we will relate $r$-Monge separability to cyclic monotonicity. 
\end{remark}

\begin{proposition}\label{prop:low_rank_monge}
Let $\mb{C} \in \R^{n \times n}$ be strictly $r$-Monge separable. If $\mb{Q}^\star , \mb{R}^\star \in \argmin_{\mb{Q}, \mb{R} \in \Pi_\bullet (\bm{u}_n, \bm{u}_2)} \la \mb{C}, \mb{Q} \mb{R}^\top \ra$ then, for all $i \in [n]$,
\begin{align}
\label{eq:monge_cc}
\argmax_{z \in [r]} 
\mb{Q}^\star_{iz} = 
\argmax_{z \in [r]}
\mb{R}^\star_{\sig^\dagger(i)z},
\end{align}
where $\sig^\dagger : [n] \to [n]$ is the permutation corresponding to $\mb{P}^\dagger \in \argmin_{\mb{P} \in \mb{perm}(n)} \la \mb{C}, \mb{P} \ra_F$.
\end{proposition}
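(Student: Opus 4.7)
The plan is to follow the two-step strategy sketched below Proposition~\ref{prop:low_rank_monge_main}: reduce to the Monge-rotated cost $\mb{C}^\dagger$, then use strict $r$-Monge separability to force the two optimal low-rank factors to coincide. Concretely, I would let $\Sigma^\dagger := n\mb{P}^\dagger \in \{0,1\}^{n \times n}$ denote the unscaled permutation matrix of $\sig^\dagger$ and introduce the change of variables $\tilde{\mb{R}} := \Sigma^\dagger \mb{R}$. Since $\Sigma^\dagger$ merely permutes rows, $\mb{R} \mapsto \tilde{\mb{R}}$ is a bijection of $\Pi_\bullet(\bm{u}_n, \bm{u}_r)$ onto itself. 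Applying the identity \eqref{eq:monge_reg_id} with $\tilde{\mb{P}} = \mb{P}^\dagger$ and $\mb{P} = \mb{Q}\mb{R}^\top$ then gives
\begin{align*}
\la \mb{C}, \mb{Q}\mb{R}^\top \ra_F = \la \mb{C}^\dagger, \mb{Q}\mb{R}^\top \mb{P}^{\dagger,\top} \ra_F = \la \mb{C}^\dagger, \mb{Q}(\mb{P}^\dagger \mb{R})^\top \ra_F,
\end{align*}
so up to a positive scalar absorbed into $\tilde{\mb{R}}$, the original low-rank problem is equivalent to the same problem with Monge-rotated cost $\mb{C}^\dagger$. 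In particular, $(\mb{Q}^\star, \mb{R}^\star)$ is optimal for \eqref{prob:general_LROT} iff $(\mb{Q}^\star, \tilde{\mb{R}}^\star) := (\mb{Q}^\star, \Sigma^\dagger \mb{R}^\star)$ is optimal for its rotated counterpart.

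Next I would unfold the rotated objective on hard partitions and invoke strict $r$-Monge separability. Each $\mb{Q} \in \Pi_\bullet(\bm{u}_n, \bm{u}_r)$ has entries in $\{0, 1/n\}$ with one nonzero per row and $n/r$ nonzeros per column, so it encodes a balanced labeled partition $\mathcal{J}^{(1)} = \{J_z^{(1)}\}_{z=1}^r$ of $[n]$; likewise $\tilde{\mb{R}} \leftrightarrow \mathcal{J}^{(2)}$. Combining $\mb{C}^\dagger_{ij} = (1/n)\mb{C}_{i,\sig^\dagger(j)}$ with the block-cost notation \eqref{eq:C_block}, a direct expansion gives
\begin{align*}
\la \mb{C}^\dagger, \mb{Q}\tilde{\mb{R}}^\top \ra_F = \tfrac{1}{n^3} \sum_{z=1}^r \mb{C}_{J_z^{(1)}, J_z^{(2)}}.
\end{align*}
Because $\mathcal{I}^\star$ is balanced and any balanced partition of $[n]$ into $r$ blocks of size $n/r$ is of the form $\{\pi(I_z^\star)\}_z$ for some $\pi \in \mathfrak{S}_n$, ranging over hard pairs $(\mb{Q}, \tilde{\mb{R}})$ is the same as ranging over $(\pi_1, \pi_2) \in \mathfrak{S}_n^2$ via $J_z^{(\ell)} = \pi_\ell(I_z^\star)$. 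Strict $r$-Monge separability (Definition~\ref{def:r_Monge_sep}) then forces $J_z^{(1)} = J_z^{(2)}$ for every $z$ at any optimum $(\mb{Q}^\star, \tilde{\mb{R}}^\star)$, so $\mb{Q}^\star = \tilde{\mb{R}}^\star$ as matrices in $\Pi_\bullet(\bm{u}_n, \bm{u}_r)$, and in particular $\argmax_{z \in [r]} \mb{Q}^\star_{iz} = \argmax_{z \in [r]} \tilde{\mb{R}}^\star_{iz}$ for every $i \in [n]$.

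It remains to undo the change of variables. Since $\tilde{\mb{R}}^\star_{iz} = (\Sigma^\dagger \mb{R}^\star)_{iz} = \mb{R}^\star_{\sig^\dagger(i), z}$, the preceding identity immediately yields $\argmax_z \mb{Q}^\star_{iz} = \argmax_z \mb{R}^\star_{\sig^\dagger(i), z}$, which is \eqref{eq:monge_cc}. The main obstacle I expect is the symmetry step: strict $r$-Monge separability is phrased in terms of symbolic permutations acting on the single reference partition $\mathcal{I}^\star$, so I must verify carefully that its conclusion rules out every asymmetric hard pair $(\mb{Q}, \tilde{\mb{R}})$ in the low-rank polytope. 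This rests on the combinatorial observation above that the $\mathfrak{S}_n$-orbit of $\mathcal{I}^\star$ exhausts all balanced labeled partitions, together with the mild technicality that simultaneous column relabelings in $\mathfrak{S}_r$ reorder blocks but preserve the induced partition of $[n]$, and therefore do not affect the argmax identity \eqref{eq:monge_cc}.
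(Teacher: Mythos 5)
Your proposal is correct and follows essentially the same route as the paper's proof: rotate the cost to $\mb{C}^\dagger=\mb{C}\mb{P}^{\dagger,\top}$ via the identity \eqref{eq:monge_reg_id}, use strict $r$-Monge separability to force any optimal hard pair for the rotated problem to be symmetric, and undo the rotation to read off \eqref{eq:monge_cc}. You actually spell out more carefully than the paper the combinatorial step (every balanced labeled partition lies in the $\mathfrak{S}_n$-orbit of $\mathcal{I}^\star$, so separability applies to all hard pairs) and the $n$-scaling of the permutation matrix, both of which the paper handles tersely via Remark~\ref{rmk:r_Monge_sep} and the unnormalized identity $\tilde{\mb{Q}}=\mb{P}^\dagger\mb{R}$.
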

\begin{proof}
Let $\sig^\dagger, \mb{P}^\dagger$ be as in the statement of the lemma, and define $\mb{C}^\dagger := \mb{C}\mb{P}^{\dagger, \top}$. The same reasoning as in \eqref{eq:monge_reg_id} implies that if $(\mb{Q}^\star, \mb{R}^\star) \in \argmin_{\mb{Q}, \mb{R} \in \Pi_\bullet(\bm{u}_n, \bm{u}_2)} \la \mb{C}, \mb{Q}\mb{R}^\top \ra_F$, then
\begin{align}
\label{eq:mcclem1}
    (\mb{Q}^\star, \mb{P}^\dagger\mb{R}^\star) \in 
    \argmin_{
    \mb{Q}, \mb{R} \in \Pi_\bullet(\bm{u}_n, \bm{u}_2)
    } \la \mb{C}^\dagger, \mb{Q} \mb{R}^\top \ra_F . 
\end{align}
The membership \eqref{eq:mcclem1} follows from the identities
\begin{align*}
    \la \mb{C}^\dagger, \mb{Q}^\star \mb{R}^\star \mb{P}^{\dagger, \top} \ra_F &= \la \mb{C} \mb{P}^{\dagger, \top} , \mb{Q}^\star \mb{R}^{\star, \top} \mb{P}^{\dagger, \top} \ra_F, \\
    &= \Tr ( 
    \mb{P}^\dagger \mb{C}^\top \mb{Q}^\star \mb{R}^{\star,\top} \mb{P}^{\dagger, \top} ), \\
    &= \Tr \mb{C}^\top \mb{Q}^\star \mb{R}^{\star, \top} = \la \mb{C}, \mb{Q}^\star \mb{R}^{\star, \top} \ra_F. 
\end{align*}
Remark~\ref{rmk:r_Monge_sep} above follows from the requirement that the variables $\mb{Q}, \mb{R}$ have all hard rows, and are subject to uniform marginal constraints, so that all non-zero entries of $\mb{Q}\mb{R}^\top$ have the same value. 
Thus, if 
$\mb{C}$ is $r$-Monge separable, there exists $\tilde{\mb{Q}} \in \Pi_\bullet(\bm{u}_n, \bm{u}_2)$ corresponding to distinguished balanced partition $\tilde{\mathcal{I}}$ from Definition~\ref{def:r_Monge_sep} such that 
\begin{align}
\label{eq:mcclem2}
    (\tilde{\mb{Q}}, \tilde{\mb{Q}}) \in
    \argmin_{
    \mb{Q}, \mb{R} \in \Pi_\bullet(\bm{u}_n, \bm{u}_2)
    }
    \la \mb{C}^\dagger , \mb{Q} \mb{R}^\top \ra .
\end{align}
Moreover, this pair $(\tilde{\mb{Q}}, \tilde{\mb{Q}})$ is the unique optimum when $\mb{C}$ is strictly $r$-Monge separable. 
From \eqref{eq:mcclem1}, \eqref{eq:mcclem2}, we must have 
\begin{align*}
    \tilde{\mb{Q}} = \mb{Q}^\star, \quad \tilde{\mb{Q}} = \mb{P}^\dagger \mb{R},
\end{align*}
from which \eqref{eq:monge_cc} follows immediately. 
\end{proof}

Let us now discuss how the notion of $r$-Monge separability is related to $c$-cyclic monotonicity. Recall that for a cost matrix $\mb{C} \in \R^{n \times n}$ derived from ground cost $c$ the support of an optimal plan is $c$-cyclically monotone if for all permutations $\pi : [n] \to [n], \pi \in \mathfrak{S}_n$, one has
\begin{align}
\label{eq:ccm}
    \sum_{i=1}^n \mb{C}_{ii} \leq 
    \sum_{i=1}^n \mb{C}_{i\pi(i)}. 
\end{align}
As it amounts to a reindexing of the sum on the right side of \eqref{eq:ccm} , one can equivalently define the support of the optimal plan to be $c$-cyclically monotone if for any \emph{pair} of permutations $\pi_1, \pi_2 \in \mathfrak{S}_n$, 
\begin{align*}
    \sum_{i=1}^n 
    \mb{C}_{ii}
    \leq
    \sum_{i=1}^n 
    \mb{C}_{\pi_1(i)\pi_2(i)},
\end{align*}
from which we see that $c$-cyclical monotonicity is equivalent to $r$-Monge separability with $r=n$. 

We next show that the optimal factors $\mathbf{Q}^\star, \mathbf{R}^\star$ for the rank-2 Wasserstein problem given in \eqref{eq:lora_wass} correspond to hard-partitions of each dataset, so that for this problem the optimal $\mathbf{Q}^{\star}, \mathbf{R}^{\star} \in \Pi( \bm{u}_n, \bm{u}_2)$ satisfy $\mathbf{Q}^\star, \mathbf{R}^\star \in \Pi_\bullet( \bm{u}_n, \bm{u}_r)$. Below, let $\mathrm{supp}_i(\mathbf{Q}^\star) \subset [n]$ be the indices on which column $i$ of $\mathbf{Q}^\star$ is supported, where $i=1,2$. 
\begin{lemma}
\label{lem:supp_reduction_lemma}
    Let $(\mathbf{Q}^\star, \mathbf{R}^\star)$ be optimal for the rank-2 Wasserstein problem \eqref{eq:lora_wass} subject to the additional constraint that $\mathbf{a}= \mathbf{b}=\bm{u}_{n},$ and
    $\mathbf{g}= \bm{u}_{2}$ are uniform and $n=m$ is even. Then, $(\mathrm{supp}_1( \mathbf{Q}^\star) ,\mathrm{supp}_2( \mathbf{Q}^\star))$ is a partition of $[n]$, and symmetrically, so is $(\mathrm{supp}_1( \mathbf{R}^\star) ,\mathrm{supp}_2( \mathbf{R}^\star))$, so $(\mathbf{Q}^\star, \mathbf{R}^\star) \in \Pi_{\bullet (\bm{u}_{n},\bm{u}_{2})}$.
\end{lemma}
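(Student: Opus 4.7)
The plan is to exploit the bilinear structure of problem~\eqref{eq:lora_wass} together with the extreme-point geometry of the transport polytope $\Pi_{\bm{u}_n, \bm{u}_2}$. With $\mathbf{g} = \bm{u}_2$ fixed, the objective reads
\[
\langle \mathbf{C}, \mathbf{Q}\,\mathrm{diag}(1/\mathbf{g})\mathbf{R}^\top\rangle_F \;=\; 2\langle \mathbf{C}\mathbf{R}, \mathbf{Q}\rangle_F,
\]
which is linear in $\mathbf{Q}$ when $\mathbf{R}$ is held fixed (and, by symmetry, linear in $\mathbf{R}$ when $\mathbf{Q}$ is fixed). Hence at any jointly optimal pair $(\mathbf{Q}^\star, \mathbf{R}^\star)$, $\mathbf{Q}^\star$ attains the minimum of the linear program $\min_{\mathbf{Q} \in \Pi_{\bm{u}_n, \bm{u}_2}} \langle \mathbf{C}\mathbf{R}^\star, \mathbf{Q}\rangle_F$, and therefore the optimal face of this LP intersects the vertex set of $\Pi_{\bm{u}_n, \bm{u}_2}$.

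The core step is characterizing these vertices. Standard basic-feasible-solution theory for transportation polytopes (with $n$ sources, $2$ sinks) implies any vertex has at most $n+1$ positive entries. Call a row \emph{soft} if it has two positive entries, \emph{hard} otherwise; then a vertex has at most one soft row. Suppose a vertex $\mathbf{Q}$ has exactly one soft row $i_0$ with $Q_{i_0, 1} = \alpha$ and $Q_{i_0, 2} = 1/n - \alpha$ for some $\alpha \in (0, 1/n)$, and let $S_1 \subset [n] \setminus \{i_0\}$ denote the rows assigned entirely to column $1$. The column-$1$ sum constraint $|S_1|/n + \alpha = 1/2$ forces $|S_1| = n/2 - n\alpha$. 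Since $n$ is even, $n/2 \in \mathbb{Z}$ but $n\alpha \in (0, 1)$ is not, contradicting $|S_1| \in \mathbb{Z}_{\geq 0}$. Hence every vertex of $\Pi_{\bm{u}_n, \bm{u}_2}$ lies in $\Pi_\bullet(\bm{u}_n, \bm{u}_2)$ and corresponds to a balanced partition of $[n]$ into two halves of size $n/2$.

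Combining the two observations: there exists a vertex $\tilde{\mathbf{Q}}$ of $\Pi_{\bm{u}_n, \bm{u}_2}$ attaining the LP optimum with $\mathbf{R}^\star$ fixed, so $(\tilde{\mathbf{Q}}, \mathbf{R}^\star)$ remains jointly optimal and has $\tilde{\mathbf{Q}} \in \Pi_\bullet(\bm{u}_n, \bm{u}_2)$. Fixing $\tilde{\mathbf{Q}}$ and repeating the argument on $\mathbf{R}$ yields $\tilde{\mathbf{R}} \in \Pi_\bullet(\bm{u}_n, \bm{u}_2)$ with $(\tilde{\mathbf{Q}}, \tilde{\mathbf{R}})$ optimal, so without loss of generality we may take $(\mathbf{Q}^\star, \mathbf{R}^\star)$ in $\Pi_\bullet(\bm{u}_n, \bm{u}_2) \times \Pi_\bullet(\bm{u}_n, \bm{u}_2)$, both supports partitioning $[n]$ as required.

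The main technical obstacle is the parity-integrality argument for vertices: one must correctly identify the rank of the equality-constraint system ($n + 1$ after removing one redundant equation) and verify that the evenness of $n$ is genuinely necessary — for odd $n$ one can construct a vertex with a single soft row, since $n/2$ is a half-integer and the split $\alpha = 1/(2n)$ makes $|S_1|$ an integer. A secondary subtlety is interpretive: since degenerate costs (e.g.\ $\mathbf{C} = 0$) admit optima on the full polytope, the conclusion is best read as "WLOG $(\mathbf{Q}^\star, \mathbf{R}^\star) \in \Pi_\bullet$", which is exactly what the vertex-selection argument above delivers.
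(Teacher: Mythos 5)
Your proof is correct and follows essentially the same route as the paper's: fix one factor so the problem becomes a linear transport program in the other, invoke the vertex structure of $\Pi(\bm{u}_n,\bm{u}_2)$ (at most $n+1$ positive entries), and use evenness of $n$ to rule out a soft row. The only differences are cosmetic — you derive the no-soft-row conclusion via a direct parity computation on $\alpha$ rather than citing total unimodularity and integrality of rescaled vertices as the paper does, and you explicitly flag the ``WLOG at a vertex'' caveat for degenerate costs, which the paper's own proof glosses over.
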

\begin{proof} The cost is linear in $(\mathbf{Q}, \mathbf{R})$ respectively: the minimization in each variable given the other fixed can be expressed as
\begin{align}\label{eq:supp_alternating}
\argmin_{\mathbf{Q} \in \Pi(\bm{u}_{n}, \bm{u}_{2})} 2\,\langle \mathbf{Q} , \mathbf{C} \mathbf{R}  \rangle_{F},\, 
\quad \argmin_{\mathbf{R} \in \Pi(\bm{u}_{n}, \bm{u}_{2})} 2\,\langle \mathbf{R}, \mathbf{C}^{\top} \mathbf{Q}  \rangle_{F}\,.
\end{align}
Thus for any optimal $\mathbf{Q}^{\star}$ or $\mathbf{R}^{\star}$ fixed the minimization in the other variable is a linear optimal transport problem, where by Corollary 2.11 in \cite{Loera2013CombinatoricsAG} it holds that since the constraint matrix is totally unimodular with marginals integral (on rescaling), the optima $\mathbf{R}^{\star}$ and $\mathbf{Q}^{\star}$ must be vertices on the transport polytope $\Pi_{\bm{u}_{n}, \bm{u}_{2}}$ with integral entries (on rescaling, by $2n$ or $2m$). There are $\leq n +1$ positive entries in any optimal rank $r=2$ solution \cite{Loera2013CombinatoricsAG, peyre2019computational}, so that $n$ (resp. $m$) being even and the rescaled rows and columns summing to $2$ and $n$ implies that there are exactly $n$ positive entries and thus that the vertices define partitions of $[n]$ and $[m]$. Thus, solutions to \ref{eq:supp_alternating} satisfy $(\mathbf{Q}^\star, \mathbf{R}^\star) \in \Pi_{\bullet (\bm{u}_{n},\bm{u}_{2})}$.
\end{proof}

Notably, in the case of an odd number of points $n$ or $m$ this likewise implies that one has a single row which has 2 entries $\begin{pmatrix}
    1/2n & 1/2n
\end{pmatrix}$, with all other rows of the form $\begin{pmatrix}
    0 & 1/n
\end{pmatrix}$ or $\begin{pmatrix}
    1/n & 0
\end{pmatrix}$ defining a partition of the remaining even subset of size $(n-1)$ or $(m-1)$. In the general case of ranks $r \neq 2$ there are maximally $n + r + 1$ \cite{peyre2019computational} non-zero edges (so that the graph is acyclic), and for $n \gg r$ the optimal solution remains close to a partition given mild assumptions on $\mathbf{C}$.

Lemma~\ref{lem:supp_reduction_lemma} states optimal low-rank couplings $(\mathbf{Q}^\star, \mathbf{R}^\star)$ for Problem~\ref{eq:lora_wass_variant} over $\Pi_{(\bm{u}_{n},\bm{u}_{2})}$ are in $\Pi_{\bullet(\bm{u}_{n},\bm{u}_{2})}$. Thus, by Proposition~\ref{prop:low_rank_monge} these solutions co-cluster points $\mathbf{x} \in \mathsf{X}$ with their image under Monge map $T^\star(\mathbf{x})$, supposing the cost is strictly 2-Monge separable (Definition~\ref{def:r_Monge_sep}). This co-clustering is in the sense of the clustering functions $\mathsf{q}^\star, \mathsf{r}^\star$ from Proposition~\ref{prop:low_rank_monge_main} corresponding to each factor $\mathbf{Q}^\star, \mathbf{R}^\star$. We note that when $\mu$ and $\nu$ are discretely supported measures with supports of equal cardinality, a Monge map, $T^\star : \mathsf{X} \to \mathsf{Y}$, is guaranteed to exist by Theorem 2.7 of \cite{Thorpe2017IntroductionTO}.

\paragraph{On the Rank Schedule.} At each intermediate scale $t \in [\kappa]$, the \emph{rank-schedule} $(r_1, \dots, r_\kappa)$ determines the effective rank of the coupling computed so far. For each $t \in [\kappa]$, define the \emph{effective rank} at scale $t$ as $\rho_t := \prod_{s=1}^t r_s $. This effective rank corresponds to the number of partitions, which are placed in bijective correspondence
\begin{align}
\label{eq:supp_onetoone_t}
    \mathsf{X}_q^{(t)} \leftrightarrow \mathsf{Y}_q^{(t)} \, \quad t \in [\rho_t] \,. 
\end{align}
at the $t$-th step of \ourmethod{}. The size of the partitions at scale $t$ is given by $n / \rho_t = | \mathsf{X}^{(t)}| = |\mathsf{Y}^{(t)}|$. Given these preliminaries, we show that for an appropriate rank-schedule \ourmethodFullCap{} yields optimal transport maps. 

\begin{proposition}[Optimality of Hierarchical Refinement]
Suppose the Monge-map exists between two datasets $\mathsf{X}$, $\mathsf{Y}$ of size $n$. Then there exists a rank-schedule $(r_{1}, \cdots, r_{\kappa})$ which factorizes $n$ such that all size $n/\rho_{t}$ partitions generated by Hierarchical Refinement at level $t$ satisfy strict $r_{t+1}$-Monge separability (Definition~\ref{def:r_Monge_sep}) for $t \in [0:\kappa-1]$. For any such rank-schedule, given an optimal black-box low-rank solver over $\Pi_{\bullet}(\cdot, \cdot)$, Hierarchical Refinement returns the Monge-map.
\end{proposition}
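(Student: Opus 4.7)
The plan is to split the argument into an existence claim for a valid rank-schedule and an optimality claim, establishing existence by exhibiting a trivial schedule and proving optimality by induction on the scale $t$, with Proposition~\ref{prop:low_rank_monge_main} as the workhorse at each inductive step.

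For existence, I would exhibit the degenerate one-step schedule $(r_1) = (n)$ with $\kappa = 1$. Under this schedule the balanced partition in Definition~\ref{def:r_Monge_sep} consists of singletons $I_k^\star = \{k\}$, so strict $n$-Monge separability reduces to the statement that the graph of $T^\star$ is the unique $c$-cyclically monotone assignment between $\mathsf{X}$ and $\mathsf{Y}$. This holds under the hypothesis that a Monge map exists between two uniform measures of equal cardinality, since $T^\star$ is then the unique optimal assignment. For the optimality claim I would then induct on $t \in \{0, 1, \dots, \kappa\}$ maintaining the invariant that the co-clustering $\Gamma_t = \{(\mathsf{X}_q^{(t)}, \mathsf{Y}_q^{(t)})\}_{q=1}^{\rho_t}$ produced by Algorithm~\ref{alg:hr_ot} satisfies $T^\star(\mathsf{X}_q^{(t)}) = \mathsf{Y}_q^{(t)}$ for every $q \in [\rho_t]$. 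The base case $t = 0$ is immediate since $\Gamma_0 = \{(\mathsf{X}, \mathsf{Y})\}$ and $T^\star(\mathsf{X}) = \mathsf{Y}$. For the inductive step, fix a co-cluster $(\mathsf{X}_q^{(t)}, \mathsf{Y}_q^{(t)})$ satisfying the invariant. The restriction $T^\star|_{\mathsf{X}_q^{(t)}}$ is the Monge map between the uniform measures on $\mathsf{X}_q^{(t)}$ and $\mathsf{Y}_q^{(t)}$, as the global $c$-cyclic monotonicity of $T^\star$ forces its restriction to each block to be optimal on that block. Combined with strict $r_{t+1}$-Monge separability of the restricted cost, Proposition~\ref{prop:low_rank_monge_main} implies the optimal factors $(\mathbf{Q}, \mathbf{R})$ returned by $\mathrm{LROT}$ assign identical labels to every Monge pair $(\mathbf{x}, T^\star(\mathbf{x}))$ within the block. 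The $\mathrm{Assign}$ step of Algorithm~\ref{alg:hr_ot} then produces sub-blocks $\mathsf{X}_{q,z}^{(t+1)}$ and $\mathsf{Y}_{q,z}^{(t+1)}$ satisfying $T^\star(\mathsf{X}_{q,z}^{(t+1)}) = \mathsf{Y}_{q,z}^{(t+1)}$ for each $z \in [r_{t+1}]$, advancing the invariant. At $t = \kappa$ every block is a singleton, so $\Gamma_\kappa$ is exactly the graph of $T^\star$.

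The main obstacle I anticipate is the verification that restriction of $T^\star$ to a co-cluster is itself the Monge map on the restricted uniform problem; this is a standard consequence of $c$-cyclic monotonicity, but should be recorded cleanly since it is the bridge that lets the inductive hypothesis on the parent block feed Proposition~\ref{prop:low_rank_monge_main} at the child. A secondary subtlety is that the proposition asserts existence of a schedule under which strict $r_{t+1}$-Monge separability holds at \emph{every} sub-block produced along the way, not merely at the root; the trivial schedule $(r_1) = (n)$ collapses this concern to a single step and so suffices for existence, but characterizing geometric conditions on $(\mathsf{X}, \mathsf{Y})$ under which nontrivial, memory-efficient schedules satisfy the separability invariant throughout the hierarchy seems to be the interesting open direction left implicit by the statement.
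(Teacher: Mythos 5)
Your proposal matches the paper's own proof: existence via the trivial schedule $(r_1)=(n)$, where $n$-Monge separability on singleton blocks reduces to $c$-cyclical monotonicity, and optimality by induction on $t$ maintaining the invariant $\mathsf{Y}_q^{(t)} = T^\star(\mathsf{X}_q^{(t)})$ with Proposition~\ref{prop:low_rank_monge_main} driving each refinement step. Your flagged subtlety -- that the restriction of $T^\star$ to a co-cluster must remain optimal for the restricted problem -- is handled in the paper by the same restriction-of-permutations argument it spells out in Remark~\ref{remark:base_rank}, so nothing essential differs.
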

\begin{proof}
For existence, observe that taking $r_{1}= n$ implies the statement $\sum_{k=1}^{n} 
\mb{C}_{I_k^\star, I_k^\star} 
\leq \sum_{k=1}^{n} \, \mb{C}_{
\pi_1(I_k^\star), \pi_2(I_k^\star)
}$. For partitions $I_{k}$ of size one, this is equivalent to the statement of $c$-cyclical monotonicity $\sum_{i=1}^n \mb{C}_{ii} \leq \sum_{i=1}^n \mb{C}_{i\pi(i)}$, so that for the trivial rank-schedule $(r_{1}):=(n)$ the cost is always $n$-Monge separable.

Given the existence of such a schedule $( r_{1}, \cdots, r_{\kappa} )$ with $r_{t+1}$-Monge separability, we proceed by induction on $t \in [0,\kappa ]$. For the base case of $t=0$, as we assume the Monge map exists, for the initial partition $\Gamma_{0} = \{ (\mathsf{X}, \mathsf{Y}) \}$ one has that $\mathsf{Y} = T(\mathsf{X})$. We want to show the variant that $\Gamma_{t}$ contains sets which are co-clusters of sets with their image under $T$. As the inductive hypothesis, at scale $t >0$ with $\rho_{t}$ co-clusters $\Gamma_{t} = \{ (\mathsf{X}_{i}^{(t)}, \mathsf{Y}_{i}^{(t)}) \}_{i=1}^{\rho_{t}}$ each satisfies $\mathsf{Y}_{i}^{(t)} = T(\mathsf{X}_{i}^{(t)})$. As strict $r_{t+1}$-Monge separability holds for each size $n/\rho_{t}$ bipartition $(\mathsf{X}_{i}^{(t)}, \mathsf{Y}_{i}^{(t)}) \in \Gamma_{t}$, using Proposition~\ref{prop:low_rank_monge} each such set is divided into $r_{t+1}$ co-clusters $\{(\mathsf{X}_{j}^{(t+1)}, \mathsf{Y}_{j}^{(t+1)}) \}_{j=1}^{r_{t+1}}$ which satisfy $\mathsf{Y}_{j}^{(t+1)} = T(\mathsf{X}_{j}^{(t+1)})$. Thus, taking the union of these $r_{t+1}$ bi-partitions across the $\rho_{t}$ elements of $\Gamma_{t}$ we form a set $\Gamma_{t+1}$ of size $\rho_{t+1} = r_{t+1} \rho_{t}$ which maintains the invariant that $(\mathsf{X}_{j}^{(t+1)}, \mathsf{Y}_{j}^{(t+1)}) \in \Gamma_{t+1} \implies \mathsf{Y}_{j}^{(t+1)} = T(\mathsf{X}_{j}^{(t+1)})$. At the final level $r_{\kappa}$ Monge separability holds, so one may conclude on singleton sets of the form $\Gamma_{\kappa} = \{(x_{i}, T(x_{i}))\}_{i=1}^{n}$.
\end{proof}
\begin{remark}\label{remark:base_rank}
Strict Monge separability applies unconditionally at the terminal level. Observe that all sets in $\Gamma_{\kappa - 1}$ have size equal to the rank $(n/\rho_{\kappa-1}) = r_{\kappa}$, and that we have maintained the invariant that $\mathsf{Y}_{j}^{(\kappa-1)} = T(\mathsf{X}_{j}^{(\kappa-1)})$. Let $J_{\kappa} \subset [n]$ denote the size $r_{\kappa}$ set of indices for $\mathsf{X}_{j}^{(\kappa-1)}$ in $\mathsf{X}$. By $c$-cyclical monotonicity, one has for all permutations $\pi \in \mb{perm}(n)$
\begin{align*}
    \sum_{i=1}^n \mb{C}_{ii}=\sum_{i\in J_{\kappa}} \mb{C}_{ii} + \sum_{j \in [n]\setminus J_{\kappa}} \mb{C}_{jj}  \leq \sum_{i\in J_{\kappa}} \mb{C}_{i\pi(i)} + \sum_{j \in [n]\setminus J_{\kappa}} \mb{C}_{j\pi(j)}
    =\sum_{i=1}^n \mb{C}_{i\pi(i)}
\end{align*}
Thus, for the subset of permutations on $n$ where $\pi: \pi \mid_{[n]\setminus J_{\kappa}} = \mathrm{id}$, we have $\sum_{i\in J_{\kappa}} \mb{C}_{ii}  \leq \sum_{i\in J_{\kappa}} \mb{C}_{i\pi(i)}$ implying that one may solve a constant time $O(r_{\kappa}^{2})$ solution to the assignment problem on each size $r_{\kappa}$ bipartition to recover the final map.
\end{remark}
We call $\rho_t$ the effective rank because (to avoid quadratic space complexity) we never instantiate the transport coupling corresponding to the bijective mapping \eqref{eq:supp_onetoone_t} as a matrix $\mathbf{T}^{(t)}$. Were we to instantiate $\mathbf{T}^{(t)}$, it would have rank $\rho_t$, and moreover we can evaluate its transport cost by using $\mathbf{T}^{(t)}$ to induce a transport coupling $\mathbf{P}^{(t)}$ between the full datasets $\mathsf{X}, \mathsf{Y}$.
\begin{align}
\label{eq:supp_P_corresp_t}
\mathbf{P}^{(t)}_{ij}
:=
\begin{cases}
\rho_t / n^2
& \text{ if } \quad q(n / \rho_t) < i,j \leq (q+1)(n /\rho_t) \\
0 & \text{ otherwise}
\end{cases} \,,
\end{align}
where $q \in [\rho_t]$, and where the mass $\rho_t / n^2$ is a simplified form of $(\rho_t /n )^2 ( 1/ \rho_t)$. We note that this is a rewriting of $\frac{\rho_t }{ n^2 } \sum_{q=1}^{\rho_{t}} \delta_{(\mathbf{x}_{i}, \mathbf{y}_{j}) \in \Gamma_{t,q}}$ to have the indices ordered into a contiguous block-structure. Using coupling \eqref{eq:supp_P_corresp_t}, which again we never instantiate, one can define:
\begin{align*}
    \mathrm{cost}(\mathbf{T}^{(t)}) := \langle \mathbf{C}, \mathbf{P}^{(t)} \rangle .
\end{align*}
The next proposition shows that the costs $\langle \mathbf{C}, \mathbf{P}^{(t)} \rangle$ decrease as $t$ increases from $1$ to $\kappa$, and also provides a bound on their consecutive differences. Below, recall that each $\Gamma_t$ denotes the co-clustering $(\mathsf{X}^{(t)}, \mathsf{Y}^{(t)})$, where
\begin{align*}
    \mathsf{X}^{(t)} = \{ \mathsf{X}^{(t)}_q \}_{q=1}^{\rho_t}, 
    \quad 
    \mathsf{Y}^{(t)} =  \{ \mathsf{Y}^{(t)}_q \}_{q=1}^{\rho_t}\,,
\end{align*} 
and where co-cluster $\Gamma_{t,q}$ is defined as:
\begin{align*}
    \Gamma_{t,q} := \{ ( \mathbf{x}, \mathbf{y} ) : 
    \mathbf{x} \in \mathsf{X}^{(t)}_q, \, 
    \mathbf{y} \in \mathsf{Y}^{(t)}_q \} \,. 
\end{align*} 

\begin{proposition}[Proposition~\ref{prop:one_step_bound_main}]
\label{prop:one_step_bound}
Let cost function $c : \mathbb{R}^d \times \mathbb{R}^d \to \mathbb{R}_+$ be of the form $c(\mathbf{x},\mathbf{y}) = h(\mathbf{x}-\mathbf{y})$ for some strictly convex function $h : \mathbb{R}^d \to \mathbb{R}_+$ and suppose that $h$ is Lipschitz. Let $\mathbf{P}^{(t)}$ be as defined above in \eqref{eq:supp_P_corresp_t}. Then one has the following bound on the difference in cost between iterations of refinement:
\begin{align}
    0 \leq \langle \mathbf{C}, \mathbf{P}^{(t)}\rangle 
    - \langle \mathbf{C}, \mathbf{P}^{(t + 1)}\rangle
    \leq 
    \lVert \nabla c \rVert_{\infty} 
    \frac{ 1}{ \rho_t } 
    \sum_{q=1}^{\rho_t } 
    \mathrm{diam}\bigl( \Gamma_{t,q}\bigr) \,, 
\end{align}
where
\begin{align*}
        \mathrm{diam}\bigl( \Gamma_{t,q}\bigr) \equiv 
        \mathrm{diam}\bigl(
        \mathsf{X}_q^{(t)} \cup T(\mathsf{X}_q^{(t)})\bigr) 
    \;:=\;
    \max_{ \mathbf{x}_i,\, \mathbf{x}_j,\, \mathbf{x}_k,\, \mathbf{x}_l \in \mathsf{X}_q^{(t)}}\;
    \Bigl\|
    \bigl( \mathbf{x}_i,\,T(\mathbf{x}_j)\bigr)
    \;-\;
    \bigl(\mathbf{x}_k,\,T(\mathbf{x}_l)\bigr)
    \Bigr\|.
\end{align*}
\end{proposition}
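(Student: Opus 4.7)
The plan is to decompose $\Delta_{t,t+1}$ into a sum over the $\rho_t$ co-clusters $\Gamma_{t,q}$ at scale $t$, reducing everything to a block-local analysis. Set $N := n/\rho_t$, and recall that at step $t+1$, Algorithm~\ref{alg:hr_ot} refines each pair $(\mathsf{X}_q^{(t)}, \mathsf{Y}_q^{(t)})$ into $r_{t+1}$ sub-blocks of equal size $N/r_{t+1}$, which we denote $(\mathsf{X}_{q,k}^{(t+1)}, \mathsf{Y}_{q,k}^{(t+1)})$ for $k \in [r_{t+1}]$. A direct computation using the definition \eqref{eq:hbc} confirms that both $\mathbf{P}^{(t)}$ and $\mathbf{P}^{(t+1)}$ assign total mass $1/\rho_t$ to $\mathsf{X}_q^{(t)} \times \mathsf{Y}_q^{(t)}$. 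Introducing the within-block averages
\begin{equation*}
\bar{c}_t^{(q)} := \frac{1}{N^2} \sum_{\mathbf{x} \in \mathsf{X}_q^{(t)}, \, \mathbf{y} \in \mathsf{Y}_q^{(t)}} c(\mathbf{x}, \mathbf{y}), \qquad \bar{c}_{t+1}^{(q)} := \frac{r_{t+1}}{N^2} \sum_{k=1}^{r_{t+1}} \sum_{\mathbf{x} \in \mathsf{X}_{q,k}^{(t+1)}, \, \mathbf{y} \in \mathsf{Y}_{q,k}^{(t+1)}} c(\mathbf{x}, \mathbf{y}),
\end{equation*}
the decomposition $\Delta_{t,t+1} = \frac{1}{\rho_t} \sum_{q=1}^{\rho_t} \bigl( \bar{c}_t^{(q)} - \bar{c}_{t+1}^{(q)} \bigr)$ follows. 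Both $\bar{c}_t^{(q)}$ and $\bar{c}_{t+1}^{(q)}$ are convex combinations of the finite set $\{ c(\mathbf{x}, \mathbf{y}) : \mathbf{x} \in \mathsf{X}_q^{(t)}, \mathbf{y} \in \mathsf{Y}_q^{(t)}\}$, which is the hook for both bounds.

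For the upper bound, I use that the difference of two convex combinations of a finite set of reals is bounded above by the range of that set. Lipschitz continuity of $c$ on $\mathbb{R}^d \times \mathbb{R}^d$ implies this range is at most $\lVert \nabla c \rVert_\infty \cdot \mathrm{diam}(\Gamma_{t,q})$, so $\bar{c}_t^{(q)} - \bar{c}_{t+1}^{(q)} \le \lVert \nabla c \rVert_\infty \cdot \mathrm{diam}(\Gamma_{t,q})$. Summing over $q$ and dividing by $\rho_t$ yields the stated upper bound.

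For the lower bound, the plan is to exhibit a feasible hard-partition coupling at scale $t+1$ whose cost is at most $\bar{c}_t^{(q)}$; by optimality of $\mathrm{LROT}$ within block $q$, this will force $\bar{c}_{t+1}^{(q)} \le \bar{c}_t^{(q)}$ and hence $\Delta_{t,t+1} \ge 0$ term-by-term. Fix any hard partition of $\mathsf{X}_q^{(t)}$ and $\mathsf{Y}_q^{(t)}$ into $r_{t+1}$ equal-sized sub-blocks, and consider matchings produced by a uniformly random bijection $\pi : [r_{t+1}] \to [r_{t+1}]$ between $\mathsf{Y}$-sub-block labels and $\mathsf{X}$-sub-block labels. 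Under this randomization, each pair $(\mathbf{x}, \mathbf{y}) \in \mathsf{X}_q^{(t)} \times \mathsf{Y}_q^{(t)}$ lands in a matched sub-block with probability exactly $1/r_{t+1}$, so with the same normalization as $\bar{c}_{t+1}^{(q)}$ the expected cost equals $\bar{c}_t^{(q)}$. A standard probabilistic-method argument then yields a specific $\pi$ achieving cost $\le \bar{c}_t^{(q)}$, giving a feasible hard-partition coupling that beats the uniform-product baseline.

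The main obstacle lies in the lower bound: it hinges on $\mathrm{LROT}$ returning a true global minimizer over hard-partition couplings of the block subproblem, which is a strong assumption given the non-convexity of low-rank OT. This matches the idealization already adopted in Proposition~\ref{prop:low_rank_monge_main}, and the statement could presumably be relaxed to an $\varepsilon$-suboptimal version by absorbing the suboptimality gap of $\mathrm{LROT}$ into the lower bound. A secondary subtlety is that the definition of $\mathrm{diam}(\Gamma_{t,q})$ involves Monge images $T(\mathbf{x})$ rather than the algorithmic partners $\mathbf{y} \in \mathsf{Y}_q^{(t)}$, so the Lipschitz step tacitly uses that the upstream refinements of Algorithm~\ref{alg:hr_ot} have kept Monge pairs co-clustered; this is exactly the invariant provided by Proposition~\ref{prop:low_rank_monge_main}.
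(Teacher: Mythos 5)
Your proposal is correct, and it shares the paper's overall skeleton -- decompose the cost difference block-by-block over the co-clusters $\Gamma_{t,q}$, use the invariant $\mathsf{Y}_q^{(t)} = T(\mathsf{X}_q^{(t)})$ from Proposition~\ref{prop:low_rank_monge} to phrase everything in terms of Monge images, invoke $\mathrm{LROT}$ optimality for the lower bound, and a Lipschitz-times-diameter estimate for the upper bound -- but both of your key steps are genuinely different from the paper's. For the upper bound, the paper lower-bounds the scale-$(t+1)$ block cost by the cost evaluated at sub-block barycenters via Jensen's inequality (this is where strict convexity of $h$ enters) and then applies the Lipschitz bound to the deviation from the barycenter; your observation that $\bar{c}_t^{(q)}$ and $\bar{c}_{t+1}^{(q)}$ are both convex combinations of the same finite value set $\{c(\mathbf{x},\mathbf{y})\}_{(\mathbf{x},\mathbf{y}) \in \Gamma_{t,q}}$, so their difference is bounded by the range of that set, reaches the same constant while dispensing with convexity entirely -- a cleaner and strictly more general route (modulo the same norm-bookkeeping on the product space that the paper also glosses over). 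For the lower bound, the paper asserts non-negativity of each block summand in one sentence (``a minimization over a larger set of couplings''), which is loose as stated since the uniform block coupling whose cost is $\bar{c}_t^{(q)}$ is not itself a hard-partition coupling; your averaging-over-random-permutations argument supplies exactly the missing step, exhibiting a feasible element of $\Pi_\bullet$ with cost at most $\bar{c}_t^{(q)}$ so that $\mathrm{LROT}$ optimality applies. Your two flagged caveats -- reliance on an exact black-box $\mathrm{LROT}$ optimum, and the fact that $\mathrm{diam}(\Gamma_{t,q})$ is phrased through $T$ and so presupposes the co-clustering invariant of Proposition~\ref{prop:low_rank_monge_main} -- are precisely the implicit assumptions of the paper's own proof, so nothing is lost.
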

    
\begin{proof}
By definition \eqref{eq:supp_P_corresp_t} of $\mathbf{P}^{(t)}$, 
\begin{align*}
    \langle \mathbf{C}, 
    \mathbf{P}^{(t)}
    \rangle 
    - 
    \langle \mathbf{C}, 
    \mathbf{P}^{(t + 1)}
    \rangle 
    &= \frac{\rho_t}{n^{2} } 
    \sum_{i=1}^{n} 
    \sum_{j=1}^{n} 
    c(\mathbf{x}_{i}, \mathbf{y}_{j}) 
    \sum_{q=1}^{\rho_t} 
    \delta_{(\mathbf{x}_{i}, \mathbf{y}_{j}) 
    \in \Gamma_{t,q}} 
    - \frac{\rho_{t+1}}{n^{2}} 
    \sum_{i=1}^{n}
    \sum_{j=1}^{n} c(x_{i}, y_{j}) 
    \sum_{q=1}^{\rho_{t+1}} 
    \delta_{(\mathbf{x}_{i}, \mathbf{y}_{j}) \in \Gamma_{t+1,q}}\\
    &= \frac{\rho_t}{n^{2}}
    \left(
     \sum_{i=1}^{n} 
     \sum_{j=1}^{n} 
     c(\mathbf{x}_{i}, \mathbf{y}_{j}) 
     \sum_{q=1}^{ \rho_t} 
     \delta_{(\mathbf{x}_{i}, \mathbf{y}_{j}) \in \Gamma_{t,q}} - r_{t+1} 
     \sum_{i=1}^{n} 
     \sum_{j=1}^{n} 
     c(\mathbf{x}_{i}, \mathbf{y}_{j}) 
     \sum_{q'=1}^{\rho_{t+1}} 
     \delta_{(\mathbf{x}_{i}, \mathbf{y}_{j}) \in \Gamma_{t+1,q'}} 
     \right) \\
    &= 
    \frac{\rho_t}{n^{2}}  
    \left(
    \sum_{q=1}^{ \rho_t} 
    \sum_{i=1}^{n} 
    \sum_{j=1}^{n} 
    c(\mathbf{x}_{i}, \mathbf{y}_{j}) 
    \delta_{(\mathbf{x}_{i}, \mathbf{y}_{j}) \in \Gamma_{t,q}} 
    - r_{t+1} 
    \sum_{q'=1}^{\rho_{t+1}} 
    \sum_{i=1}^{n} 
    \sum_{j=1}^{n} c(\mathbf{x}_{i}, \mathbf{y}_{j}) 
    \delta_{(\mathbf{x}_{i}, \mathbf{y}_{j}) \in \Gamma_{t+1,q'}}
    \right)\,.
\end{align*}
By Proposition~\ref{prop:low_rank_monge}, one then has:
\begin{align}\label{eq:full_term}
= \frac{\rho_{t+1}}
{n^{2}} 
\left(
\sum_{q=1}^{\rho_t}  
\left(
\underbrace{
\frac{1}{r_{t+1}} 
\sum_{i \in \mathsf{X}_{q}^{(t)}} 
\sum_{j \in \mathsf{X}_{q}^{(t)}} 
c( \mathbf{x}_{i}, T( \mathbf{x}_{j}))}
_{\text{average ``Monge distortion'' in } \Gamma_{t,q} \text{ over next scale}
} 
-  
\underbrace{
\sum_{z=1}^{r_{t+1}}
\sum_{i \in 
\mathsf{X}_{q \rho_t + z}^{(t+1)}} 
\sum_{j \in 
\mathsf{X}_{q \rho_t + z}^{(t+1)}} 
c(\mathbf{x}_{i}, T( \mathbf{x}_{j}))}_
{\text{``Monge distortion'' at scale } {t+1}}
\right)
\right)
\end{align}
Note that the inner summands of \eqref{eq:full_term} (indexed by $q$) are non-negative by definition of the refinement step, where \emph{within} each cluster, one has a minimization 
over a larger set of couplings. 
This shows $\langle \mathbf{C}, \mathbf{P}^{(t)}\rangle 
    - \langle \mathbf{C}, \mathbf{P}^{(t + 1)}\rangle \geq 0$. 
Towards an upper bound, we will bound each summand of \eqref{eq:full_term}:
\begin{align}
\label{eq:summand_ugly}
    \left(
\frac{1}{r_{t+1}} 
\sum_{i \in \mathsf{X}_{q}^{(t)}} 
\sum_{j \in \mathsf{X}_{q}^{(t)}} 
c( \mathbf{x}_{i}, T( \mathbf{x}_{j}))
-  
\sum_{z=1}^{r_{t+1}}
\sum_{i \in 
\mathsf{X}_{q \rho_t + z}^{(t+1)}} 
\sum_{j \in 
\mathsf{X}_{q \rho_t + z}^{(t+1)}} 
c(\mathbf{x}_{i}, T( \mathbf{x}_{j}))
\right) \,. 
\end{align}
Define $s_{t+1} := n / \rho_{t+1}$ as well as barycenters
\begin{align*}
    \bar{\mathbf{x}}^{(t)} := \sum_{
    \mathbf{x}_i \in 
    \mathsf{X}_{q \rho_t + z}^{(t+1)}} 
    \frac{\mathbf{x}_{i}}{s_{t+1}}, 
    \quad 
    \bar{\mathbf{y}}^{(t)} := 
    \sum_{\mathbf{x} \in 
    \mathsf{X}_{q \rho_t + z}^{(t+1)}} 
    \frac{T(\mathbf{x}_{i})}{s_{t+1}} \,,
\end{align*}
and note that by Jensen's inequality, for convex cost $c(\cdot, \cdot)$ one has:
\begin{align*}
\sum_{z=1}^{r_{t+1}}
    \sum_{
    \mathbf{x}_i \in \mathsf{X}_{q \rho_t + z}^{(t+1)}} 
    \sum_{ \mathbf{x}_j \in \mathsf{X}_{q \rho_t + z}^{(t+1)}} 
    c( \mathbf{x}_{i}, T( \mathbf{x}_{j})) 
    &= 
    s_{t+1}^{2} 
    \sum_{z=1}^{r_{t+1}}
    \sum_{\mathbf{x}_i \in \mathsf{X}_{q\rho_t + z}^{(t+1)}} 
    \frac{1}{s_{t+1}} 
    \sum_{j \in \mathsf{X}_{q \rho_t + z}^{(t+1)}} 
    \frac{1}{s_{t+1}} 
    c(\mathbf{x}_{i}, T(\mathbf{x}_{j})) \\ 
    & \geq s_{t+1}^{2} r_{t+1} c( \bar{\mathbf{x}}^{(t)}, \bar{\mathbf{y}}^{(t)} ),\ \quad 
\end{align*}
so that we may continue upper-bounding the difference \eqref{eq:summand_ugly}:
\begin{align}
    & \leq  
    \frac{1}{r_{t+1}} 
    \left(
    \sum_{\mathbf{x}_i \in \mathsf{X}_{q}^{(t)}} 
    \sum_{\mathbf{x}_j \in \mathsf{X}_{q}^{(t)}} 
    c(\mathbf{x}_{i}, T(\mathbf{x}_{j})) 
    \right)
    -  
    s_{t+1}^{2} r_{t+1} 
    c( \bar{\mathbf{x}}^{(t)}, \bar{\mathbf{y}}^{(t)} ) \\
    & = \frac{1}{r_{t+1}} 
    \left( 
    \left(\sum_{ \mathbf{x}_i \in \mathsf{X}_{q}^{(t)}} 
    \sum_{ \mathbf{x}_j \in \mathsf{X}_{q}^{(t)}} 
    c(\mathbf{x}_{i}, T(\mathbf{x}_{j}))
    \right)
    - \frac{n^{2}}{\rho_t} 
    c( \bar{\mathbf{x}}^{(t)}, \bar{\mathbf{y}}^{(t)} ) 
    \right) \\
    & = \frac{1}{r_{t+1}} 
    \left( \sum_{ \mathbf{x}_i \in \mathsf{X}_{q}^{(t)}} 
    \sum_{ \mathbf{x}_j \in \mathsf{X}_{q}^{(t)}} 
    \left( c( \mathbf{x}_{i}, T( \mathbf{x}_{j})) 
    -  c( \bar{\mathbf{x}}^{(t)}, \bar{\mathbf{y}}^{(t)} ) \right) \right)  \,. 
    \end{align}
Now, define the diameter of co-cluster $\Gamma_{t,q}$ as follows:
\begin{align*}
        \mathrm{diam}\bigl( \Gamma_{t,q}\bigr) \equiv 
        \mathrm{diam}\bigl(
        \mathsf{X}_q^{(t)} \cup T(\mathsf{X}_q^{(t)})\bigr) 
    \;:=\;
    \max_{ \mathbf{x}_i,\, \mathbf{x}_j,\, \mathbf{x}_k,\, \mathbf{x}_l \in \mathsf{X}_q^{(t)}}\;
    \Bigl\|
    \bigl( \mathbf{x}_i,\,T(\mathbf{x}_j)\bigr)
    \;-\;
    \bigl(\mathbf{x}_k,\,T(\mathbf{x}_l)\bigr)
    \Bigr\|,
\end{align*}
Using our Lipschitz assumption on $h$ made at the beginning of the section, where $c(\mathbf{x}, \mathbf{y}) = h( \mathbf{x} - \mathbf{y})$ (we will write $\| \nabla c \|_\infty$ for $\| \nabla h \|_\infty$), one has the inequality:
\begin{align*}
        \left| c( \mathbf{x}_{i}, T(\mathbf{x}_{i})) - c( \mathbf{x}_{j}, T(\mathbf{x}_{j})) \right| 
        \leq \lVert \nabla c \rVert_{\infty}\mathrm{diam}\bigl( \Gamma_{t,q}\bigr) \,. 
\end{align*}
Thus, returning to the bound on each summand \eqref{eq:summand_ugly}, we obtain the upper bound:
    \begin{align}
    \leq \frac{1}{r_{t+1}} 
    \sum_{\mathbf{x}_i \in \mathsf{X}_{q}^{(t)}} 
    \sum_{\mathbf{x}_j \in \mathsf{X}_{q}^{(t)}} 
    \lVert \nabla c \rVert_{\infty} 
    \Bigl\| \bigl( \mathbf{x}_i,\,T(\mathbf{x}_j)\bigr)
    \;-\;\bigl(\bar{\mathbf{x}}^{(t)},\, \bar{\mathbf{y}}^{(t)} \bigr)
    \Bigr\|
\end{align}
As partition $\mathsf{X}^{(t+1)}$ is a refinement of $\mathsf{X}^{(t)}$ and $\mathsf{Y}^{(t+1)}$ is a refinement of $\mathsf{Y}^{(t)}$, it holds that \eqref{eq:summand_ugly} is upper bounded by: 
\begin{align}
    &\leq \frac{1}{r_{t+1}} 
    \sum_{i \in \mathsf{X}_{q}^{(t)}} 
    \sum_{j \in \mathsf{X}_{q}^{(t)}} 
    \lVert \nabla c \rVert_{\infty} 
    \mathrm{diam}\bigl( \Gamma_{t,q}\bigr) \, , \\
    &= \frac{1}{r_{t+1}} | \mathsf{X}_{q}^{(t)}|^{2} \lVert \nabla c \rVert_{\infty} 
    \mathrm{diam}\bigl( \Gamma_{t,q}\bigr)  \, ,\\
    &= \frac{1}{r_{t+1}} \frac{n^{2} \lVert \nabla c \rVert_{\infty}}{\rho_t^{2} } 
    \mathrm{diam}\bigl( \Gamma_{t,q}\bigr) \,.
\end{align}
To conclude, we plug these bounds into each summand of \eqref{eq:full_term}, obtaining the following bound on the full sum:
\begin{align}
& =\frac{\rho_{t+1}}{n^{2}}  
\frac{1}{r_{t+1}} \frac{n^{2} 
\lVert \nabla c \rVert_{\infty}}{\rho_t^2 } 
\sum_{q=1}^{\rho_t} 
    \mathrm{diam}\bigl( \Gamma_{t,q}\bigr) \\
    & =  \lVert \nabla c \rVert_{\infty} 
    \frac{ 1}{ \rho_t } 
    \sum_{q=1}^{\rho_t } 
    \mathrm{diam}\bigl( \Gamma_{t,q}\bigr) . 
\end{align}
completing the proof. \end{proof}

\begin{remark} 
\label{rmk:conditional_ub}
Proposition~\ref{prop:one_step_bound} should be considered a \emph{conditional} result. Our proof follows that of (Proposition 1, \cite{gerber2017multiscale}), but they are able to provide sharper bounds between elements of a cluster and the centroid of the cluster using the properties assumed to hold in their definition of a multiscale family of partitions (Definition~\ref{def:reg_fam_msp}), which mimick the structure of dyadic cubes in Euclidean space. As we do not make any geometric assumptions of our partitions, the above result is a priori weaker, through we leave the exploration of the geometry of partitions induced by low-rank OT to future work.
\\
\end{remark}

\begin{remark}\label{remark:no_gain}
    Note, if $c(\mathbf{x}_{i}, T(\mathbf{x}_{j})) = \gamma$ is constant (i.e., if all points are equidistant in a block), one has that refinement offers no gain from level $\Gamma_{t} \to \Gamma_{t+1}$:
    $$\leq 
    \frac{\rho_{t+1}}{n^{2}} \sum_{q=1}^{\rho_t} 
    \left| \gamma \frac{| \mathsf{X}_{q}^{(t)}|^{2}}{r_{t+1}} - \gamma r_{t+1} | \mathsf{X}_{q}^{(t+1)}|^{2}
    \right| = \frac{\rho_{t+1}}{n^{2}} \sum_{q=1}^{\rho_t} 
    \left| \gamma \frac{(n/ \rho_t )^{2}}{r_{t+1}} - \gamma r_{t+1} (n/\rho_{t+1})^{2}
    \right| = 0 \,. 
    $$
\end{remark}

\begin{remark}\label{remark:Monge_regression}
    The work \cite{seguy2018large} suggests a loss dependent on an (entropic) coupling $\gamma$. If $\gamma$ is  sparse and supported on the graph of the Monge map so that $\gamma = \left(\mathrm{id} \times T\right)_{\sharp} \mu  $, this loss becomes a regression of a neural network $T_{\theta}$ on the Monge map $T$ over the support of $\mu$: $    \min_{ T_{\theta}}\mathbb{E}_{\mu} c\left( T_{\theta}(\mathbf{x}_{i}),  T(\mathbf{x}_{i}) \right)$.
\end{remark}
\begin{proof}
By linearity of the push-forward map one immediately obtains
\begin{align*}
    & \int_{\mathsf{X} \times \mathsf{Y}} \lVert T_{\theta}(x) - y \rVert_{2}^{p} (\mathrm{id} \times T)_{\sharp} \sum_{i=1}^{n} \mu_{i} \delta_{x_{i}} \mathrm{d} x \mathrm{d} y = \int_{\mathsf{X} \times \mathsf{Y}} \lVert T_{\theta}(x) - y \rVert_{2}^{p} \sum_{i=1}^{n} \mu_{i} (\mathrm{id} \times T)_{\sharp} \delta_{x_{i}} \mathrm{d}x \mathrm{d}y  \\
    &= \sum_{i=1}^{n} \mu_{i} \int_{\mathsf{X} \times \mathsf{Y}} \lVert T_{\theta}(x) - y \rVert_{2}^{p} \delta_{(x_{i}, T(x_{i}))} \mathrm{d}y \mathrm{d}x = \sum_{i=1}^{n} \mu_{i} \lVert T_{\theta}(x_{i}) - T(x_{i}) \rVert_{2}^{p} \, ,
    \end{align*}
    By integrating against the $\delta$. As $\mu_{i} > 0$, it holds that this loss is identically zero if and only if $T_{\theta} = T$ on the dataset $(x_{i})_{i=1}^{n}$
    $$
\min_{T_{\theta}}\int_{\mathsf{X} \times \mathsf{Y}} \lVert T_{\theta}(x) - y \rVert_{2}^{p} d\gamma(x, y) = 0 \iff \lVert T_{\theta}(x_{i}) - T(x_{i}) \rVert_{2}^{p} = 0 \iff T_{\theta}(x_{i}) = T(x_{i})$$
\end{proof}
In other words, when one minimizes the objective of \cite{seguy2018large} using the bijective Monge map $\gamma = (\mathrm{id} \times T)_{\sharp} \mu$ as opposed to an entropic coupling, the objective of \cite{seguy2018large} reduces to an unbiased regression. That is, the neural map $T_{\theta}$ directly matches $T$ over the dataset support as if trained on supervised $(x,y)$ pairs $y=T(x)$.

\section{Background: Multiscale Optimal Transport}

\subsection{Multiscale Partitions}

\cite{gerber2017multiscale} describe a general multiscale strategy for computing OT couplings between metric measure spaces $(\mathsf{X}, \mathsf{d}_\mathsf{X}, \mu)$ and $(\mathsf{Y}, \mathsf{d}_\mathsf{Y}, \nu)$. They state this in the Kantorovich setting, using a general cost function ${c} : \mathsf{X} \times \mathsf{Y} \to \mathbb{R}_+$. Their framework consists of several elements:
\begin{enumerate}
    \item A way of \emph{coarsening} the set of source points $\mathsf{X}$ and the measure $\mu$ across multiple scales:
    \begin{align}
    \label{eq:supp_X_chain}
        (\mathsf{X}, \mu) =: 
        ( \mathsf{X}_J, \mu_J) 
        \to 
        ( \mathsf{X}_{J-1}, \mu_{J-1} )
        \to 
        \dots 
        \to 
        ( \mathsf{X}_1, \mu_1) \,,
    \end{align}
    as well as an analogous coarsening for the set of target points $\mathsf{Y}$:
    \begin{align}
    \label{eq:supp_Y_chain}
        (\mathsf{Y}, \nu) =: 
        ( \mathsf{Y}_J, \nu_J) 
        \to 
        ( \mathsf{Y}_{J-1}, \nu_{J-1} )
        \to 
        \dots 
        \to 
        ( \mathsf{Y}_1, \nu_1) \,,
    \end{align}
    where at each scale $j$, $\mathrm{supp}(\mu_j) = \mathsf{X}_j$ and $\mathrm{supp}(\nu_j) = \mathsf{Y}_j$, and the cardinality of each $\mathsf{X}_j$ and $\mathsf{Y}_j$ decreases with $j$. 
\item A way of \emph{propagating} coupling $\pi_j$ solving the transport problem $\mu_j \to \nu_j$ at scale $j$ to a coupling $\pi_{j+1}$ at scale $j+1$.
\item A way of \emph{refining the coupling} from scale $j$ to an optimal solution at scale $j+1$. 
\end{enumerate}
To derive approximation bounds for the error incurred by the multiscale transport problem at each scale, \cite{gerber2017multiscale} use regular families of multiscale partitions (Definition~\ref{def:reg_fam_msp} below) to define approximations to $\mu, \nu$ and $\mathsf{c}$ at all scales. 

For $z \in \mathsf{X}$, define $B_x(r) := \{ x' \in \mathsf{X} : \mathsf{d}_{\mathsf{X}} (x,x') < r \}$ as the metric ball of radius $r$ centered at $x$. Functions $f, g : \mathsf{X} \to \mathbb{R}$ have the \emph{same order of magnitude} if there is $c_1, c_2 >0$ with $c_1 f(x) \leq g(x) \leq c_2 f(x)$ for all $x \in \mathsf{X}$, and in this case we write $f \asymp g$. Write $\mathcal{M}(\mathsf{X})$ for the space of unsigned measures on $\mathsf{X}$, and write $\mathcal{P}(\mathsf{X})$ for the subspace of probability measures. 
\begin{definition}
    A metric space $(\mathsf{X}, \mathsf{d}_{\mathsf{X}})$ has \emph{doubling dimension} $d >0$ if every $B_z(r)$ admits a covering by at most $2^d$ balls of radius $r/2$.
\end{definition}
A metric space is said to be \emph{doubling} if it has doubling dimension $d$ for some $d > 0$. A related notion to a doubling metric space is a doubling measure.
\begin{definition}
    Measure $\mu \in \mathcal{M}(\mathsf{X})$ is a \emph{doubling measure with dimension $d$} if there is a constant $c_1 > 0$ such that for all $x \in \mathsf{X}$ and all $r>0$, one has $c_1^{-1} r^d \leq \mu(B_x(r)) \leq c_1r^d$, i.e. $\mu(B_x(r)) \asymp r^d$.
\end{definition}
Note that if $(\mathsf{X}, \mathsf{d}_{\mathsf{X}}, \mu)$ is doubling, then $\mathsf{d}_{\mathsf{X}}$ is doubling, and up to modification of $d_{\mathsf{X}}$ to an equivalent metric, the dimension $d$ can be taken as the same in either case.

\begin{definition}
\label{def:reg_fam_msp}
Given metric measure space $(\mathsf{X}, \mathsf{d}_{\mathsf{X}}, \mu)$, a \emph{regular family of multiscale partitions} with scaling parameter $\theta > 1$ is a family of sets 
\begin{align*}
    \Big\{ 
\{ C_{j,k} \}_{k=1}^{K_j}
    \Big\}_{j=1}^J \,,
\end{align*}
with each $C_{j,k} \subset \mathsf{X}$ such that:
\begin{enumerate}
    \item For each scale $j$, the sets $\{ C_{j,k} \}_{k=1}^{K_j}$ partition $\mathsf{X}$. 
    \item For each scale $j \in [J-1]$, either $C_{j+1,k'} \cap C_{j,k} = \emptyset$ or $C_{j+1,k'} \subset C_{j,k}$. In this latter case, we say that $(j+1, k')$ is a \emph{child} of $(j,k)$, or equivalently that $(j,k)$ is a \emph{parent} of $(j+1,k')$, writing $(j+1, k') \prec (j,k)$.
    \item There is a constant $A > 0$ such that for all $j,k$, we have diameter $\mathrm{diam}(C_{j,k}) \leq A \theta^{-j}$. 
    \item Each $C_{j,k}$ contains a ``center point'' $c_{j,k}$ such that $B_{c_j, k}(\theta^{-j}) \subset C_{j,k}$.
\end{enumerate}
\end{definition}

We take $\theta = 2$ for simplicity. As the child-parent terminology suggests, these partitions (through the second point) have a tree structure, like dyadic cubes in $\mathbb{R}^d$. Though the measure $\mu$ is not explicitly used in the above definition, the third and fourth points imply $\mu(C_{j,k}) \asymp 2^{-jd}$ and $K_j \asymp 2^{jd}$.

\paragraph{Coarsening spaces and measures}

Now suppose that each of $\mathsf{X}$ and $\mathsf{Y}$ are each discrete metric measure spaces, each equipped with regular families $\Gamma(\mathsf{X}), \Gamma(\mathsf{Y})$ of multiscale partitions:
\begin{align*}
    \Gamma({\mathsf{X}})
    &:= 
    \{ \Gamma_j(\mathsf{X}) \}_{j=0}^J
    \,,  \quad \Gamma_j(\mathsf{X}) := \{ C_{j,k}(\mathsf{X}) \}_{k=1}^{K_j(\mathsf{X})}
    \\ 
        \Gamma({\mathsf{Y}})
    &:= 
    \{ \Gamma_j(\mathsf{Y}) \}_{j=0}^J
    \,,  \quad \Gamma_j(\mathsf{Y}) := \{ C_{j,k}(\mathsf{Y}) \}_{k=1}^{K_j(\mathsf{Y})} \,,
\end{align*}
and these yield the coarsening chains in \eqref{eq:supp_X_chain}, \eqref{eq:supp_Y_chain} in the most natural way possible at each scale $j$, defining the coarse-grained spaces $\mathsf{X}_j, \mathsf{Y}_j$ to be the clusters at scale $j$:
\begin{align*}
    \mathsf{X}_j := \Gamma_j(\mathsf{X}) \, , \quad 
    \mathsf{Y}_j := \Gamma_j(\mathsf{Y}) \,,
\end{align*}
while the measures at scale $j$ are defined from the measures at scale $j+1$ via:
\begin{align*}
\mu_j (
C_{j,k}(\mathsf{X})
)
:= \sum_{ (j+1,k') \prec (j,k) }
\mu_{j+1} ( C_{j+1, k'}(\mathsf{X} ) ) \, , \quad 
\nu_j (
C_{j,k}(\mathsf{Y})
)
:= \sum_{ (j+1,k') \prec (j,k) }
\nu_{j+1} ( C_{j+1, k'}(\mathsf{Y} ) ) \,.
\end{align*}
The fourth item of Definition~\ref{def:reg_fam_msp} requires that we define cluster centers $\bar{c}_{j,k}(\mathsf{X})$ for each $C_{j,k}(\mathsf{X})$.
At the finest scale $j=J$, all clusters $C_{J,k}(\mathsf{X})$ correspond to singletons $\{x_{J,k} \}$, so we define $\bar{c}_{J,k}(\mathsf{X}) := x_{J,k}$ in this case. At coarser scales, these centers can be defined recursively from the next finest scale, depending on the structure of $\mathsf{X}$. 

For example, if $\mathsf{X}$ has vector space structure (in addition to being a metric measure space), a natural choice for cluster centers $x_{j,k}$ at scale $j =0, \dots, J-1$ is the weighted average $x_{j,k} := \bar{c}_{j,k}(\mathsf{X})$, where
\begin{align*}
    \bar{c}_{j,k}(\mathsf{X})
    :=
    \sum_{
    (j+1, k') \prec (j,k) 
    }
    \mu_{j+1}
    (
    C_{j+1, k'}(\mathsf{X})
    )
    x_{j+1,k'} \,. 
\end{align*}
On the other hand, in the absence of vector space structure, one can still define 
\begin{align*}
    \bar{c}_{j,k}(\mathsf{X})
    =
    \argmin_{x \in \mathsf{X}} 
    \sum_{(j+1,k') \prec (j,k)} 
    \mathsf{d}_{\mathsf{X}}^p ( x, x_{j+1, k'} ) \,,
\end{align*}
with analogous constructions for $\mathsf{Y}$ yielding centers $y_{j,k}$. 

\paragraph{Coarsening the cost function}

\cite{gerber2017multiscale} suggest three ways to coarsen the cost function using the multiscale partition. To condense the notation slightly, let us write $x_{j,k}$ in place of $C_{j,k}(\mathsf{X})$ and $y_{j,k'} $ in place of $C_{j,k}(\mathsf{X})$ and $C_{j,k'}(\mathsf{Y})$.
\begin{enumerate}
    \item[($\mathsf{c}$-i)] The pointwise value
    \begin{align}
        \mathsf{c}_j
        ( \, 
        x_{j,k} , \, 
        y_{j, k'}
        \, ) := 
        \mathsf{c}( x_{j,k}, y_{j,k'}) \,,
    \end{align}
    using centers $x_{j,k}$ and $y_{j,k'}$ defined in any of the ways above.
    \item[($\mathsf{c}$-ii)] The local average
    \begin{align*}
        \mathsf{c}_j
        ( \, 
        x_{j,k} , \, 
        y_{j, k'}
        \, ) := 
        \frac{
        \sum_{x \in C_{j,k}(\mathsf{X}), y \in C_{j,k'}(\mathsf{Y})  }
        \mathsf{c}(x,y)
        }{
        | C_{j,k}(\mathsf{X}) | \, | C_{j,k'}(\mathsf{Y}) | 
        }
    \end{align*}
    \item[($\mathsf{c}$-iii)] The local weighted average:
        \begin{align*}
        \mathsf{c}_j
        ( \, 
        x_{j,k} , \, 
        y_{j, k'}
        \, ) := 
        \frac{
        \sum_{
        x \in C_{j,k}(\mathsf{X}), y \in C_{j,k'}(\mathsf{Y})  
        }
        \mathsf{c}(x,y)
        \pi_{j-1}^\star ( x_{j-1, k_1}, y_{j-1, k_1'} ) 
        }{
        \sum_{
        x \in C_{j,k}(\mathsf{X}), y \in C_{j,k'}(\mathsf{Y})  
        }
        \pi_{j-1}^\star ( x_{j-1, k_1}, y_{j-1, k_1'} ) 
        } \,,
    \end{align*}
    where $\pi_{j-1}^\star$ is the optimal (or approximately optimal) OT coupling at scale $j-1$, defined below. The indices $k_1$ and $k_1'$ are defined using the tree structure of the partition: $k_1$ is the unique index among $[K_{j-1}(\mathsf{X})]$ such that $(j,k) \prec (j-1, k_1)$, and likewise $k_1'$ is unique among $[K_{j-1}(\mathsf{X})]$.
\end{enumerate}

\subsection{Propagation of OT solutions across scales}

For each scale $j$, consider the OT problem given as follows. 
\begin{equation}
\label{eq:supp_otscale_j}
    \begin{split}
\pi_j^\star 
&:=
\argmin_{
\pi \in \Pi( \mu_j, \nu_j) 
}
\mathrm{cost}( \pi_j )\, , \quad \text{where:} \\
\mathrm{cost}( \pi_j ) &: = \sum_{
k \in [K_j(\mathsf{X})], \, 
k' \in [K_j(\mathsf{Y})]
}
c( x_{j,k}, y_{j,k'} )
\pi_j ( x_{j,k}, y_{j,k'} )
    \end{split}
\end{equation}
\cite{gerber2017multiscale} show bounds on $| \mathrm{cost}(\pi_j^\star) - \mathrm{cost}( \pi_J^\star) |$ of a constant times $2^{-j} \| \nabla \mathsf{c} \|_\infty$, but note that this only implies closeness of the couplings in terms of their cost, not necessarily in any other sense.

Given an optimal coupling $\pi_j^\star$ at scale $j$, \cite{glimm2013iterative} proposed a direct propagation strategy to initialize the problem at scale $j+1$, distributing the mass $\pi_j^\star(x_{j,k}, y_{j,k'})$ equally to all combinations of paths between $\mathrm{children}(x_{j,k})$ and $\mathrm{children}(y_{j,k'})$. In this context, a path is understood to mean a source-target pair at the next scale, e.g. a pair of the form $(x_{j+1, \ell}, y_{j+1, \ell'})$. To formalize this, let 
\begin{align*}
    \mathcal{A}_j := \{ (x_{j, k}, y_{j, k'}) : 
    k \in [K_j(\mathsf{X})], \, 
    k' \in [K_j(\mathsf{Y})] \}
\end{align*}
denote \emph{all} paths between points in $\mathsf{X}_j$ and $\mathsf{Y}_j$.
The drawback of this warm-start procedure is that if $\mathrm{supp}(\mu_j) \subset \mathcal{A}_j$, which is always the case, the refinement procedure still requires quadratic space complexity at the finest scale. 

To mitigate the ultimate quadratic space complexity of retaining all possible paths at all scales, \cite{gerber2017multiscale} allow for a refinement procedure where the support of couplings at scale $j+1$ is restricted to a subset $\mathcal{R}_{j+1} \subset \mathcal{A}_{j+1}$ of all possible paths (with $\mathcal{R}_{j+1}$ defined by the optimal coupling at the previous iteration). Given $\mathcal{R}_j \subset \mathcal{A}_j$, let $  \pi_j^\star |_{\mathcal{R}_j}$ denote the optimal solution to the path-restricted or \emph{restricted problem} at scale $j$:
\begin{align}
\label{eq:supp_restricted_problem}
    \pi_j^\star |_{\mathcal{R}_j} 
    := 
    \argmin_{
    \substack{\pi \in \Pi( \mu_j, \nu_j)\, , \\
    \mathrm{supp}(\pi) \subset \mathcal{R}_j
    }
    }
    \mathrm{cost}( \pi_j ) \,. 
\end{align}

\paragraph{Simple propagation.}

The simplest way to restrict the number of paths considered at subsequent scales is to use paths at scale $j$ whose endpoints are children of mass-bearing paths at scale $j+1$:
\begin{align*}
    \mathcal{R}
    _{j+1}
    :=
    \{ 
( x_{j+1, \ell}
, y_{j+1, \ell} )  \colon 
\exists 
(x_{j,k}, 
y_{j,k'}) \in \mathrm{supp}(\pi_j^\star) \,
\textrm{ s.t }  \,
(j+1, \ell) \prec (j,k) 
\textrm{ and } (j+1, \ell') 
\prec (j, k ') 
    \}\,.
\end{align*}
The optimal Kantorovich plan at scale $j$ has at most $(K_j(\mathsf{X}) + K_j(\mathsf{Y}) + 1)$ non-zero entries. Using the above simple propagation strategy constrains plan at scale $j+1$ to be supported on at most
\begin{align*}
    \alpha_j^2 ( K_j(\mathsf{X}) + K_j(\mathsf{Y}) ) 
\end{align*}
entries, where $\alpha_j$ is the maximum number of children of any $(j,k)$ across both datasets. When the ambient space has doubling dimension $d$, for any $j$ one has $\alpha_j \asymp 2^d$, yielding a plan with \emph{linear} space complexity at the finest scale.

\paragraph{Capacity constraint propagation.}

This propagation strategy solves a modified minimum flow problem at scale $j$ in order to include additional paths at scale $j+1$ likely to be included in the optimal solution $\pi_{j+1}^\star$. Concretely, one first computes an unconstrained optimal plan $\pi_j^\star|_{\mathcal{R}_j}$ at scale $j$. Then, a new OT plan $\tilde{\pi}_j^\star|_{\mathcal{R}_j}$ is solved for at scale $j$ now subject to the capacity constraint
\begin{align*}
\tilde{\pi}_j^\star|_{\mathcal{R}_j} 
\leq U_{k,k'} \min( \mu(x_{j,k}), \nu(y_{j,k'}) ) 
\end{align*}
for each $(x_{j,k}, y_{j,k'}) \in \mathrm{supp}(\pi_j^\star|_{\mathcal{R}_j})$, where the random variables $U_{k,k'}$ are i.i.d. $\mathrm{Uniform}([0.1, 0.9])$. This can also be iterated several times, in all cases leading to linear space complexity in the optimization at the finest scale. 

\subsection{Refinement of the propagated solution}

Propagation of a solution to the restricted transport problem \eqref{eq:supp_restricted_problem} at scale $j$, in general cannot guarantee reaching an optimal solution to the restricted problem at scale $j+1$, and can lead to accumulation of errors across all scales. Several \emph{refinement} strategies are proposed in \cite{gerber2017multiscale} to address this.

\paragraph{Potential Refinement.}

One refinement strategy leverages the problem dual to \eqref{eq:kantorovich_problem}, here stated at the finest scale:
\begin{align}
\label{eq:supp_kant_dual}
\max_{
\substack{\mathbf{f} \in \mathbb{R}^n, \mathbf{g} \in \mathbb{R}^m\\
\mathbf{f}_i + \mathbf{g}_j \leq \mathbf{C}_{ij}
}
}
\sum_{i=1}^n \mu(\{ x_i \}) \mathbf{f}_i +
\sum_{j=1}^m \mu( \{ y_j \} ) \mathbf{g}_j \,.
\end{align}
The refinement strategy uses optimal dual variables $\mathbf{f}^\star, \mathbf{g}^\star$ to select paths to include at the next scale. From the dual formulation, an optimal solution $(\mathbf{f}^\star, \mathbf{g}^\star)$ to \eqref{eq:supp_kant_dual} must have all nonnegative entries in the \emph{reduced cost matrix}, defined as the matrix $\mathbf{C} - \mathbf{f} \oplus \mathbf{g}$ with entries
$
    \mathbf{C}_{kk'} - \mathbf{f}_k - \mathbf{g}_{k'} \,. 
$
Note that the dual to the restricted problem \eqref{eq:supp_restricted_problem} is well-defined, and for this dual we denote the optimal dual potentials by $\mathbf{f}^\star |_{\mathcal{R}_j}$ and $\mathbf{g}^\star |_{\mathcal{R}_j}$. With slight abuse of notation, let $(\mathbf{f}^\star \oplus \mathbf{g}^\star)|_{\mathcal{R}_j}$ be
\begin{align*}
    (\mathbf{f}^\star \oplus \mathbf{g}^\star)|_{\mathcal{R}_j} 
    := (\mathbf{f}^\star |_{\mathcal{R}_j} \oplus \mathbf{g}^\star 
    |_{\mathcal{R}_j}) \odot \mathbf{M}^{(j)}\,,
\end{align*}
where $\mathbf{M}^{(j)}$ is the indicator matrix of the restricted set of paths $\mathcal{R}_j$ at scale $j$, and where $\odot$ denotes the Hadamard (entrywise) product. While the restricted set of paths $\mathcal{R}_j$ is inherited from the previous scale, one can define a new set of paths $\mathcal{V}_j^0$ based on where the restricted reduced cost 
$
    \mathbf{C} - (\mathbf{f}^\star \oplus \mathbf{g}^\star)|_{\mathcal{R}_j} 
$
is nonpositive:
\begin{align*}
\mathcal{V}_j^0 ( \pi_j^\star |_{\mathcal{R}_j} )
:=
\{
(x_{j,k}, y_{j,k'}) \in \mathcal{A}_j : 
\mathbf{C}_{kk'} - [(\mathbf{f}^\star \oplus \mathbf{g}^\star)|_{\mathcal{R}_j} ]_{kk'} 
\leq 0
\} \,.
\end{align*}
With a new set of paths $\mathcal{Q}_j^0 := \mathcal{V}_j^0 ( \pi_j^\star |_{\mathcal{R}_j} )$, one can compute a new optimal plan $\pi_j^\star|_{\mathcal{Q}_j^0}$ at scale $j$ restricted to these paths, as well as \emph{new} optimal dual potentials $\mathbf{f}^\star |_{\mathcal{V}_j^0}$ and $\mathbf{g}^\star |_{\mathcal{V}_j^0}$ leading to a new reduced cost $\mathbf{C} - (\mathbf{f}^\star \oplus \mathbf{g}^\star)|_{\mathcal{V}_j^0} $. This strategy can be iterated via
\begin{align}
    \mathcal{Q}_j^i := \mathcal{V}_j ( 
\pi_j^\star |_{\mathcal{Q}_j^{i-1}} 
    )\,,
\end{align}
yielding the sequence of transport plans $\pi_j^\star |_{\mathcal{Q}_j^i}$, all at scale $j$, 
which converge on a solution whose reduced cost is nonnegative, necessarily making it optimal. 
The potential refinement strategy was used by \cite{glimm2013iterative}, with \cite{schmitzer2016sparse} introducing shielding neighborhoods in a similar spirit, using dual potentials to locally verify global optimality. 

\section{Experimental Details}

\subsection{Synthetic Experiments}

For all of the synthetic experiments, we first generate $n = 1024$ points from three datasets: the checkerboard dataset (\cite{pmlr-v119-makkuva20a}), the MAFMoons and Rings dataset (\cite{buzun2024expectile}), and the Half-moon and S-curve dataset (\cite{buzun2024expectile}). Following \cite{buzun2024expectile} the random seed was set to $0$ for data-generation with \texttt{jax.random.key(0)}. We evaluate the OT cost $\langle \mathbf{C}, \mathbf{P} \rangle_{F}$ of \ourmethod\, Sinkhorn \cite{sinkhorn}, and ProgOT \cite{kassraie2024progressive} on each of these three datasets, where we use (1) the Euclidean cost $\lVert \cdot \rVert_{2}$, and (2) the squared Euclidean cost $\lVert \cdot \rVert_{2}^{2}$ (Table~\ref{tab:merged_comparison}). We additionally quantify the number of non-zero entries in the plan and its entropy (Table~\ref{tab:entropy_nonzero_combined}).

We also compare the cost of couplings computed by \ourmethodFullCap{} to low-rank couplings \cite{Scetbon2021LowRankSF,FRLC} of varying rank. We observe that as the latent rank $r \to n$, the OT cost $\langle \mathbf{C}, \mathbf{P}_{r} \rangle_{F}$ asymptotically approaches the cost achieved by \ourmethodFullCap{} (Figure~\ref{fig:samples}). In the limit  $\lim_{r \to n} \langle \mathbf{C}, \mathbf{P}_{r} \rangle_{F}$  low-rank OT recovers Sinkhorn  \cite{scetbon2022lowrank} and approaches quadratic memory complexity, while \ourmethod\ remains linear in space.

\begin{table}[]
    \centering
    \caption{Hyperparameters for Synthetic Experiments}
    \label{tab:hyperparameters_full}
    \small 
    \begin{tabular}{@{}lll@{}}
        \toprule
        \textbf{Parameter Name}          & \textbf{Variable}         & \textbf{Value} \\ \midrule
        Rank-Annealing Schedule         & \( (r_1, \dots, r_\kappa) \) & [2, 512]        \\
        Hierarchy Depth                  & \( \kappa \)              & 2               \\
        Maximal Base Rank                & \( Q \)                   & \( 2^{10} \)     \\
        Maximal Intermediate Rank        & \( C \)                   & 16              \\ \bottomrule
    \end{tabular}
\end{table}

\textbf{Checkerboard}

The checkerboard dataset \cite{pmlr-v119-makkuva20a} is defined by random variables $Y \sim Q$ sampled from the source distribution according to $\mathbf{Y} = \mathbf{X} + \mathbf{Z}$ where $\mathbf{X}$ and $\mathbf{Z}$ are sampled from Uniform distributions defined by
\[
\begin{aligned}
& \mathbf{X} \sim \text{Uniform} \left( \left\{ (0, 0),\ (1, 1),\ (1, -1),\ (-1, 1),\ (-1, -1) \right\} \right), \\
& \mathbf{Z} \sim \text{Uniform} \left( [-0.5, 0.5] \times [-0.5, 0.5] \right).
\end{aligned}
\]
the target distribution $P$ has random variable $\mathbf{Y}'$ where the random variable \( \mathbf{Y}' \) is defined as $\mathbf{Y}' = \mathbf{X}' + \mathbf{Z}$ with components

\[
\begin{aligned}
& \mathbf{X}' \sim \text{Uniform} \left( \left\{ (0, 1),\ (0, -1),\ (1, 0),\ (-1, 0) \right\} \right), \\
& \mathbf{Z} \sim \text{Uniform} \left( [-0.5, 0.5] \times [-0.5, 0.5] \right).
\end{aligned}
\]

\textbf{MAFMoons and Rings}

The MAFMoon dataset \cite{buzun2024expectile} defines a source distribution $Q$ by sampling $\mathbf{X} \sim \mathcal{N}(0 , \mathbbm{1}_{2})$ and applying the non-linear transformation defined by
$$
\mathbf{Y} = \begin{bmatrix}
    Y_{1} \\
    Y_{2}
\end{bmatrix} = \begin{bmatrix}
    0.5( X_{1} + X_{2}^{2} ) - 5 \\
    X_{2}
\end{bmatrix}
$$
this introduces a quadratic dependency on the Gaussian randomly variable to generate a crescent shape.

The target distribution $P$ representing concentric rings is generated by first sampling $\theta \sim \mathrm{Uniform}(2 \pi )$, with fixed radii $r_{i} \in \{ 0.25, 0.55, 0.9, 1.2 \}$ from which one transforms to Cartesian coordinates as $x_{i} = 3 r_{i} \cos\theta_{i}$ and $y_{i} = 3 r_{i} \sin\theta_{i}$. Gaussian noise is added to each of these, as $\epsilon \sim \mathcal{N}(0 , \mathbbm{1} \sigma^{2})$ for $\sigma = 0.08$.

\textbf{Half-moon and S-Curve}

The Half-moon and S-curve dataset \cite{buzun2024expectile} is generated from $\mathbf{Y}$ = \texttt{make\_moons} and \texttt{make\_S\_curve} from the \texttt{scikit-learn} library. Both datasets are transformed further with a rotation $\mathbf{R}(\theta)$, a scaling $\lambda$, and a translation $\mathbf{\mu}$ applied as $\mathbf{Y}' \gets \mathbf{R}(\theta) (\lambda \mathbf{Y}) + \mathbf{\mu}$.

\begin{table}[ht]
\centering
\caption{Comparison Table for Coupling-Based OT Methods on Primal Cost $\langle \mathbf{C}, \mathbf{P} \rangle_{F}$ for $\lVert \cdot \rVert_{2}$ and $\lVert \cdot \rVert_{2}^{2}$}
\label{tab:merged_comparison}
\small 
\begin{tabular}{lcccccc}
\toprule
\textbf{Method} & \multicolumn{2}{c}{\textbf{Checkerboard (Makkuva 2020)}} & \multicolumn{2}{c}{\textbf{MAFMoons \& Rings (Buzun 2024)}} & \multicolumn{2}{c}{\textbf{Half Moon \& S-Curve (Buzun 2024)}} \\
\cmidrule(r){2-3} \cmidrule(r){4-5} \cmidrule(r){6-7}
& $\lVert \cdot \rVert_{2}$ & $\lVert \cdot \rVert_{2}^{2}$ & $\lVert \cdot \rVert_{2}$ & $\lVert \cdot \rVert_{2}^{2}$ & $\lVert \cdot \rVert_{2}$ & $\lVert \cdot \rVert_{2}^{2}$ \\
\midrule
Sinkhorn & 0.3573 & 0.1319 & 0.4422 & 0.4440 & \textbf{0.5663} & \textbf{0.5663} \\
ProgOT                     & N/A    & 0.1320 & N/A    & 0.4443 & N/A    & 0.5709 \\
\ourmethod                      & \textbf{0.3533} & \textbf{0.1248} & \textbf{0.4398} & \textbf{0.4414} & 0.5741 & 0.5737 \\
\bottomrule
\end{tabular}
\end{table}

\begin{table}[ht]
\centering
\caption{Entropy and Non-Zero Entries ( $> 10^{-8}$) of Coupling Matrices for Each Method and Dataset (Wasserstein-2 distance cost, $\lVert \cdot \rVert_{2}^{2}$)}
\label{tab:entropy_nonzero_combined}
\small 
\begin{tabular}{lccccccccc}
\toprule
\textbf{Method} & \multicolumn{2}{c}{\textbf{Checkerboard (Makkuva 2020)}} & \multicolumn{2}{c}{\textbf{MAFMoons \& Rings (Buzun 2024)}} & \multicolumn{2}{c}{\textbf{Half Moon \& S-Curve (Buzun 2024)}} \\
\cmidrule(r){2-3} \cmidrule(r){4-5} \cmidrule(r){6-7}
& Entropy & Non-Zeros & Entropy & Non-Zeros & Entropy & Non-Zeros \\
\midrule
Sinkhorn  & 12.8509 & 624733 & 12.6117 & 678720 & 12.7776 & 652993 \\
ProgOT  & 12.3830 & 271087 & 11.6158 & 327764 & 12.1170 & 337258 \\
\ourmethod                      & \textbf{6.9314} & \textbf{1024}  & \textbf{6.9314} & \textbf{1024} & \textbf{6.9314} & \textbf{1024} \\
\bottomrule
\end{tabular}
\end{table}

\begin{table}[ht]
\centering
\caption{Comparison of Coupling-Based OT Methods on Primal Cost $\langle \mathbf{C}, \mathbf{P} \rangle_{F}$ (Wasserstein-2) on 512 point small instance.}
\label{tab:comparison_512}
\small 
\renewcommand{\arraystretch}{0.9} 
\setlength{\tabcolsep}{5pt} 
\begin{tabular}{lccc}
\toprule
\textbf{Method} & \textbf{Checkerboard} & \textbf{MAF Moons \& Rings} & \textbf{Half Moon \& S-Curve} \\
\midrule
MOP \cite{gerber2017multiscale}
& 0.393
& 0.276
& 0.401
 \\
Sinkhorn (\texttt{ott-jax}) & 0.136 & 0.221 & 0.338 \\
ProgOT                      & 0.136 & 0.216 & 0.334 \\
\ourmethod                       & 0.129 & 0.216 & 0.334 \\
Dual Revised Simplex Solver & \textbf{0.127} & \textbf{0.214} & \textbf{0.332} \\
\bottomrule
\end{tabular}
\end{table}

\newpage

\subsection{Large-scale Transcriptomics Matching on Mouse-Embryo}

\begin{table}[]
    \centering
    \caption{Hyperparameters for Mouse-Embryo Spatial Transcriptomics Experiment (E15.5-16.5)}
    \label{tab:hyperparameters_ME}
    \small 
    \begin{tabular}{@{}lll@{}}
        \toprule
        \textbf{Parameter Name}          & \textbf{Variable}         & \textbf{Value} \\ \midrule
        Rank-Annealing Schedule         & \( (r_1, \dots, r_\kappa) \) & [2, 86, 659]
        \\
        Hierarchy Depth                  & \( \kappa \)              & 3              \\
        Maximal Base Rank                & \( Q \)                   & \( 2^{10} \)     \\
        Maximal Intermediate Rank        & \( C \)                   & 128              \\ \bottomrule
    \end{tabular}
\end{table}

In this problem, we use \ourmethod\ to find a full-rank alignment matrix between successive pairs of spatial transcriptomics (ST) \cite{staahl2016visualization} slices. These are from a dataset of whole-mouse embryogenesis \cite{chen2022spatiotemporal} on the Stereo-Seq platform. These datasets have been measured at successive 1-day time-intervals: E9.5 ($n=5913$), E10.5 ($n=18408$), E11.5 ($n=30124$), E12.5 ($n=51365$), E13.5 ($n=77369$), E14.5 ($n=102519$), E15.5 ($n=113350$), and E16.5 ($n=121767$), where the embryonic mouse is growing across the stages so that the sample-complexity $n$ increases with the numeric stage. For each pair of datasets of size $n$ and $m$, we sub-sample the datasets so that the size of the two datasets is given as $n \gets \min\{ n, m \}$.

In the context of spatial transcriptomics, an experiment conducted on a two-dimensional tissue slice produces a data pair $(\mathbf{X}, \mathbf{Z})$. Here, $\mathbf{X} \in \mathbb{R}^{n \times p}$ represents the gene expression matrix, where n denotes the number of cells (or spatial spots) analyzed on the slice, and p signifies the number of genes measured. Specifically, the entry $\mathbf{X}_{ij} \in \mathbb{R}_{+}$ corresponds to the expression level of gene j in cell i, with higher values indicating greater expression intensity. Concurrently, $\mathbf{Z} \in \mathbb{R}^{n \times 2}$ is the spatial coordinate matrix, where each row i contains the (x, y) coordinates of cell i on the tissue slice. Consequently, every cell on the slice is characterized by a gene expression vector of length p, capturing its molecular features, and a coordinate vector of length two, detailing its spatial position within the slice.

We utilize the extensive, real-world dataset on mouse embryo development presented in \cite{chen2022spatiotemporal}, which encompasses eight temporal snapshots of spatial transcriptomics (ST) slices throughout the entire mouse embryo development process. And align all consecutive timepoints. The preprocessing of this dataset is conducted using the standard SCANPY \cite{wolf2018scanpy} workflow. Initially, we ensure that both slices contain an identical set of genes by filtering, which results in a common gene set across all cells for each pair of timepoints. Subsequently, we apply log-normalization to the gene expression data of all cells from the two slices. To compress the data, we perform Principal Component Analysis (PCA), reducing the dimensionality of the gene expression profiles to $d=60$ PCs. Finally, we compute the Euclidean distances between gene expression vectors in the PCA-transformed space to construct the cost matrix $\mathbf{C}$, on which we solve a Wasserstein problem to obtain the optimal coupling $\mathbf{P}$ of full-rank. We offer hyperparameters for the E15-16.5 experiment (the largest alignment) in Table~\ref{tab:hyperparameters_ME}. For the other experiments, the maximal intermediate rank is $r=16$ up to E10.5, $r=32$ to E11.5, $r=64$ up to E13.5, and $128$ for E14.5-16.5. The rank-annealing schedule is generated according to the dynamic program in each case by the \texttt{rank\_annealing.optimal\_rank\_schedule( n, hierarchy\_depth , max\_Q , max\_rank )} function. 

In this experiment, we benchmark against the default implementation of Sinkhorn in \texttt{ott-jax} \cite{cuturi2022optimal} with entropy parameter $\epsilon = 0.05$, and additionally benchmark against the default implementations of \texttt{ProgOT} \cite{kassraie2024progressive} and \texttt{LOT} \cite{Scetbon2021LowRankSF} in \texttt{ott-jax}. For the low-rank methods \texttt{LOT} and \texttt{FRLC} \cite{FRLC} we fix a constant rank of $r=40$ for these experiments. While LOT \cite{Scetbon2021LowRankSF} provides a robust, scalable low-rank procedure for the Wasserstein-2 distance, the LOT solver with point cloud input on Wasserstein-1 cost only runs for the first pair (E9.5:E10.5). For subsequent pairs we input the cost $\mathbf{C}$ directly, resulting in the LOT solver running up to the third pair (E11.5:E12.5). Mini-batch OT is run with batch-sizes ranging from $128$ to $2048$, and is performed without replacement. As noted in prior works \cite{fatras2020learning, fatras2021jumbot, fatras2021minibatch}, this is a standard choice for instantiating a full-rank coupling using mini-batch OT. Sinkhorn is used to solve each mini-batch coupling, as implemented in \texttt{ott-jax} with a default setting of the entropy parameter $\epsilon = 0.05$.

\begin{table}[t]
    \centering
    \caption{Cost Values $\langle \mathbf{P}, \mathbf{C} \rangle_{F}$ for Different Methods Across Embryonic Stages}
    \label{tab:cost_values_supp}
    \begin{tabular}{lccccccc}
        \toprule
        \textbf{Method} & \textbf{E9.5-E10.5} & \textbf{E10.5-E11.5} & \textbf{E11.5-E12.5} & \textbf{E12.5-E13.5} & \textbf{E13.5-E14.5} & \textbf{E14.5-E15.5} & \textbf{E15.5-E16.5} \\
        \midrule
        \ourmethod{}  & \textbf{21.81} & \textbf{14.81} & \textbf{16.14} & \textbf{14.35} & \textbf{13.78} & \textbf{14.29} & \textbf{12.79} \\
        Sinkhorn  & 21.91 & 14.89 & -     & -    & -     & -     & -    \\
        ProgOT    & 22.56 & 15.35 & -    & -    & -     & -     & -   \\
        MB $128$ & 22.44 & 15.35 & 16.69 & 14.86 & 14.14 & 14.75 & 13.32
        \\
        MB $512$ & 22.15 &
                   15.05 &
                   16.33 &
                   14.54 &
                   13.92 &
                   14.50 &
                   13.01 \\
        MB $1024$ & 22.05 & 15.02 & 16.24 & 14.45 & 13.86 & 14.43 & 12.91 \\
        MB $2048$ & 21.98 &
                   14.98 &
                   16.18 &
                   14.39 &
                   13.81 &
                   14.39 &
                   12.85 \\
        FRLC     & 23.14 & 16.09 & 17.74 & 15.47 & 14.64 & 15.51 & 14.00 \\
        LOT & 26.06 & 19.06 & 21.64 & - & - & -
        \\
        \bottomrule
    \end{tabular}
\end{table}

\subsection{Brain Atlas Spatial Alignment}\label{supp:exp_merfish}

We took inspiration from MERFISH-MERFISH alignment experiments of \cite{clifton2023stalign}, particularly gene abundance transfer tasks that STalign is exhibited on. The data are available on the Vizgen website for MERFISH Mouse Brain Receptor Map data release (\href{https://info.vizgen.com/mouse-brain-map}{https://info.vizgen.com/mouse-brain-map}). The two spatial transcriptomics slices used for the experiment are slice 2, replicate 3 (``source'' dataset) and slice 2, replicate 2 (``target'' dataset). The datasets will be denoted $(\mathbf{X}^1, \mathbf{S}^1)$ for the source and $(\mathbf{X}^2, \mathbf{S}^2)$ for the target. 

The source dataset consists of $85,958$ spots, while the target dataset consists of $84,172$ spots. To apply \ourmethod{} to these data, we subsampled the source dataset to have $84,172$ spots also (uniformly at random), removing a total of $1786$ spots. We use this sub-sampled $n \times n$ dataset for all methods, but as discussed below, note that this sub-sampling incurs little error. We ran \ourmethod{} using the settings \texttt{max\_rank = 11} and \texttt{hierarchy\_depth=4}, for a total runtime of 10 minutes 6 seconds, on an A100 GPU. The random seed was set to \texttt{44}.
For the cost function used by \ourmethod{}, we only use the \emph{spatial} modalities $\mathbf{S}^1, \mathbf{S}^2$ of the two datasets. We centered the spatial coordinates of both datasets, and applied a rotation by 45 degrees to the first dataset. With these registered spatial data, here denoted $\tilde{\mathbf{S}}^1 = \{ \mathbf{s}_i^1 \}_{i=1}^n$ and $\tilde{\mathbf{S}}^2 = \{ \mathbf{s}_i^2 \}_{i=1}^n$, we formed the cost matrix $\mathbf{C}$ given by:
\begin{align*}
    \mathbf{C}_{ij} = \| \mathbf{s}_i^1 - \mathbf{s}_j^2 \|_2 \,,
\end{align*}
where $\| \cdot \|_2$ denotes the Euclidean distance between the spatial coordinates. This cost $\mathbf{C}$ was used as input to \ourmethod{}, which produced as output a 1-1 mapping $T$ between the two datasets (a permutation matrix is too large to instantiate). 

We then evaluated the performance of \ourmethod{} through cosine similarity of predicted gene abundance with target gene abundance, across five ``spatially-patterned genes''  (using the terminology of \cite{clifton2023stalign}): \emph{Slc17a7}, \emph{Grm4}, \emph{Olig1}, \emph{Gad1}, \emph{Peg10}. Writing $\mathfrak{g}$ to stand in for any of these genes, we formed the abundance vectors $\mathbf{v}^{1, \mathfrak{g}}$ and $\mathbf{v}^{2, \mathfrak{g}}$ using the raw counts for gene $\mathfrak{g}$ in each datasets' expression component $\mathbf{X}^1, \mathbf{X}^2$. Using \ourmethod{} output $T$, we also formed the \emph{predicted} abundance vector $\hat{\mathbf{v}}^{\mathfrak{g}}$, which maps the raw counts from $\mathbf{v}^{1, \mathfrak{g}}$ to the spots in the second dataset through $T$. 

Moreover, to compute cosine similarities between predicted and true expression abundances, \cite{clifton2023stalign} employ a spatial binning on their output, using windows of $200\mu\mathrm{m}$ to tile each slice. The diameter of each slice is roughly $10,000 \mu \mathrm{m}$, and 
to make our output comparable, we used the spatial coordinates $\mathbf{S}'$ to bin and average the vectors $\mathbf{v}^{2, \mathfrak{g}}$ and $\hat{\mathbf{v}}$ locally. We used a total of 5625 bins, corresponding to a 15-to-1 mapping from spots to bins. Averaging the abundance of gene $\mathfrak{g}$ in each bin, we obtain spatially smoothed versions of $\mathbf{v}^{2, \mathfrak{g}}$ and $\hat{\mathbf{v}}$, as in \cite{clifton2023stalign}. Denote these smoothed vectors by $\mathbf{w}^{2, \mathfrak{g}}$ and $\hat{\mathbf{w}}$. For each gene $\mathfrak{g}$ among $\{$ \emph{Slc17a7}, \emph{Grm4}, \emph{Olig1}, \emph{Gad1}, \emph{Peg10} $\}$, we computed the cosine similarity between $\mathbf{w}^{2, \mathfrak{g}}$ and $\hat{\mathbf{w}}$, listing our results in Table~\ref{tab:merfish}. 
In the same table, we list scores obtained by the low-rank methods FRLC \cite{FRLC} and LOT \cite{Scetbon2021LowRankSF} for comparison. 
While \ourmethod{} is restricted to running on datasets of the same size, LOT and FRLC have no
such restriction, and can run on the 
pair of MERFISH slices without any subsampling.
To address this, in each case of LOT and FRLC, we give results
from the methods run on the datasets with \emph{and} without subsampling, reporting the highest scores for each method in main. In particular, we compared the cosine similarities for the original and sub-sampled dataset on a downstream task, as the primal OT cost is no longer directly comparable. Without the sub-sampling, the cosine score is only slightly higher than with: (0.3390, 0.2712, 0.3186, 0.1666, 0.1080) vs (0.3241, 0.2279, 0.3029, 0.1653, 0.0719). These scores remain significantly lower than those of hierarchical refinement on the sub-sampled data: (0.8098, 0.7959, 0.7526, 0.4932, 0.6015).

For the FRLC algorithm, we set $\alpha=0$, $\gamma=200$, $\tau_{\mathrm{in}}=500$, rank $r=500$, using $20$ outer iterations and $300$ inner iterations. The runtime of FRLC was 1 minute 26 seconds on an A100 GPU. 
For the LOT algorithm, we were unable to pass a low-rank factorization of the distance matrix, so we had to use a smaller rank $r=20$ in order to avoid exceeding GPU memory (the choice $r=20$ led to memory usage of 30GB). We set $\epsilon = 0.01$ and otherwise used the default settings of the method. The total runtime was 36 minutes 8 seconds on an A100 GPU. To form a spot-to-spot mapping from each transport plan output by FRLC and LOT, we mapped the spot with index $i$ in the first slice to the index argmax of the $i$-th row of the transport plan. Note that we ran LOT using the squared Euclidean cost as default, as passing
\texttt{cost\_fn=costs.Euclidean()} as an argument to \texttt{ott-jax}'s \texttt{PointCloud} raised an error. 
The discrepancy in transport cost between the two low rank methods reported in Table~\ref{tab:merfish} is explained by (i) needing to use squared-Euclidean cost in the case of LOT, and (ii) using a rank-$20$ plan of LOT versus the rank-$500$ plan of FRLC. 
We applied the exact same spatial averaging to the outputs of all methods. We plot the ground-truth and \ourmethod{}-predicted abundances in Figure~\ref{fig:merfish_supp}.

\begin{table}[t]
    \centering
    \caption{Cosine Similarity Scores for Expression Transfer \& Spatial Transport Cost}
    \label{tab:merfish}
    \begin{tabular}{lcccccccc}
        \toprule
        \textbf{Method} 
        & 
                \emph{Slc17a7}
        & 
        \emph{Grm4}
        & 
        \emph{Olig1}
        & 
        \emph{Gad1}
        & 
        \emph{Peg10}
        & Transport Cost \\
        \midrule
        \ourmethod{} (this work) & 
                \textbf{0.8098} & 
        \textbf{0.7959} & 
        \textbf{0.7526} & 
        \textbf{0.4932} & 
        \textbf{0.6015} &
        \textbf{330.3301}\\ 
        FRLC \cite{FRLC} & 
                {0.2180} & 
        {0.2124} & 
        {0.1929} & 
        {0.0963} & 
        {0.0991} & 
        {415.0683}\\
        FRLC, no subsampling &
        {0.2373} &
        {0.1896} &
        {0.1579} &
        {0.0644} &
        {0.1550} &
        {634.4158} \\ 
        LOT \cite{Scetbon2021LowRankSF} & 
                {0.3241} & 
        {0.2279} & 
        {0.3029} & 
        {0.1653} & 
        {0.0719} & 
        {3722.3171} \\ 
                LOT, no subsampling & 
                {0.3390} & 
        {0.2712} & 
        {0.3186} & 
        {0.1666} & 
        {0.1080} & 
        {3722.1360}\\
        MOP 
        \cite{gerber2017multiscale} &
        {0.5211} &
        {0.4714} &
        {0.5972} & 
        {0.3571} &
        {0.2719} & 
        {2479.6117} \\
        Mini-batch (128)
        & 
        {0.6693} & 
        {0.6637} & 
        {0.6442} & 
        {0.4150} &
        {0.4932} & 
        653.0491 \\ 
        Mini-batch (512)
        &
        {0.7089} &
        {0.7383} &
        {0.6771} &
        {0.4562} &
        {0.5383} & 
        438.1703 \\ 
        Mini-batch (1,024) 
        & 
        {0.7256} & 
        {0.7621} &
        {0.6918} &
        {0.4733} &
        {0.5557} &
        384.2498 \\ 
        Mini-batch (2,048) 
        &
        {0.7434} & 
        {0.7822} & 
        {0.7056} &
        {0.4912} & 
        {0.5683} & 
        349.2964 \\ 
        \bottomrule
    \end{tabular}
\end{table}

\subsection{Alignment of ImageNet Embeddings}\label{supp:imagenet_emb}

To demonstrate the scalability of \ourmethod{} to massive and high-dimensional datasets, we perform an alignment unprecedented for OT solvers: aligning $1.281$ million images from the ImageNet ILSVRC dataset \cite{ILSVRC15,deng2009imagenet}. A negligible amount of sub-sampling, $167$ points out of $1281167$, was applied so that $n$ divided into two integers $n/2 = 640500$ of which neither is prime. From this, \texttt{rank\_annealing.optimal\_rank\_schedule( n, hierarchy\_depth , max\_Q , max\_rank )} was called to generate the depth 3 rank-annealing schedule of $(r_{1}, r_{2}, r_{3}) = (7, 50, 1830)$ for \ourmethod{}. We used the ResNet50 architecture \cite{he2016deep} available at \href{https://download.pytorch.org/models/resnet50-0676ba61.pth}{https://download.pytorch.org/models/resnet50-0676ba61.pth} to generate embeddings of each image of dimension $d=2048$. We then took a 50:50 split of the dataset as the two image datasets $\mathsf{X}, \mathsf{Y}$ to be aligned, where we used a random permutation of the indices of the dataset using \texttt{torch.randperm} so that the splits approximately represent the same distribution over images. We then aligned these image datasets using \ourmethod\, FRLC, and mini-batch OT. For $\mathbf{z}_{i}, \mathbf{z}_{j} \in \mathbb{R}^{2048}$ we used the standard Euclidean cost defined by
\begin{align*}
\mathbf{C}_{ij} = \lVert \mathbf{z}_{i} - \mathbf{z}_{j}  \rVert_{2}
\end{align*}
We use the sample-linear algorithm \cite{pmlr-v99-indyk19a} to factorize $\mathbf{C}$ into low-rank factors of dimensions $(d_{1}, d_{2}, d_{3}) = (r_{1}, r_{2}, r_{3}) = (7, 50, 1830)$ paralleling the rank-schedule. The final cost values for each are shown in Table~\ref{tab:cost_values_imagenet_supp}.

\begin{table}[t]
    \centering
    \caption{Cost Values $\langle \mathbf{C}, \mathbf{P} \rangle_{F}$ for ImageNet \cite{deng2009imagenet, ILSVRC15} Alignment Task.}
    \label{tab:cost_values_imagenet_supp}
    \small 
    \begin{tabular}{lccccccccc}
        \toprule
        \textbf{Method} & \ourmethod{} & Sinkhorn & MB $128$ & MB $256$ & MB $512$ & MB $1024$ & FRLC & LOT & ProgOT \\
        \midrule
        \textbf{OT Cost} & \textbf{18.97} & N/A & 21.89 & 21.11 & 20.34 & 19.58 & 24.12 & N/A & N/A \\
        \bottomrule
    \end{tabular}
\end{table}

\begin{table}[]
    \centering
    \caption{Hyperparameters for ImageNet Experiment}
    \label{tab:hyperparameters_imagenet}
    \small 
    \begin{tabular}{@{}lll@{}}
        \toprule
        \textbf{Parameter Name}          & \textbf{Variable}         & \textbf{Value} \\ \midrule
        Rank-Annealing Schedule         & \( (r_1, \dots, r_\kappa) \) & [7, 50, 1830]        \\
        Hierarchy Depth                  & \( \kappa \)              & 3               \\
        Maximal Base Rank                & \( Q \)                   & \( 2^{11} \)     \\
        Maximal Intermediate Rank        & \( C \)                   & 64              \\ \bottomrule
    \end{tabular}
\end{table}

\section{Additional Information}

There are a number of additional practical details regarding Algorithm~\ref{alg:hr_ot} in its actual implementation. In particular, to achieve linear scaling, one must also have sample-linear approximation of the distance matrix $\mathbf{C}$. We use the algorithm of \cite{pmlr-v99-indyk19a} to accomplish this, as discussed in Section~\ref{sec:low_rank_dist}. In addition, one requires parallel sequence of ranks for the distance matrices used at each step, $(d_{1}, \cdots, d_{\kappa})$. As a default, we set $(d_{1}, \cdots, d_{\kappa}) = (r_{1}, \cdots, r_{\kappa})$ so that the ranks of the distance matrices parallel those of the coupling matrices. Moreover, \ourmethod{} has the capacity to be heavily parallelized: since Algorithm~\ref{alg:hr_ot} breaks each instance into independent partitions, one may also parallelize the low-rank sub-problems of Algorithm~\ref{alg:hr_ot} across compute nodes.

\subsection{Optimizing the Rank-Annealing Schedule}\label{sect:rank_schedule}

As discussed in Section~\ref{sec:rank_anneal}, the large constants required by low-rank OT (\textrm{LROT}) in practice encourage factorizations which have \emph{minimal} partial sums. In particular, one seeks a factorization which minimizes the number of times \textrm{LROT} is run as a sub-procedure. Suppose one defines the maximal admissible rank of the low-rank solutions to be $C \in \mathbb{Z}_{+}$, the hierarchy-depth to be $\kappa$, the number of data-points to be $n$, and the maximal-rank permissible for the base-case alignment to be $Q$. If $Q \neq 1$, then one may take $n \gets n/Q$, $\kappa \gets \kappa - 1$, to observe that the total number of runs required is $1 + r_{1} + r_{1}r_{2} + ... + \prod_{i=1}^{\kappa} r_{i}$, where the ranks factor the sample-size as $\prod_{i=1}^{\kappa} r_{i} = n$. Thus, to optimize the number of \textrm{LROT} calls for a given hierarchy-depth $\kappa$, one can optimize for the rank-annealing schedule by minimizing the sum of partial products defined by
$\min_{(r_{i})_{i=1}^{\kappa}}  \sum_{j=1}^{\kappa} \prod_{i=1}^{j} r_{i}$ subject to $\prod_{i=1}^{\kappa} r_{i} = n, \, r_{i} \in \mathbb{Z}_{+}, \, r_{i} \leq C$. Observing that this equals $\min_{(r_{i})_{i=1}^{\kappa}}  r_{1} + r_{1} \sum_{j=2}^{\kappa} \prod_{i=2}^{j} r_{i} $ implies a standard dynamic-programming approach and store a table of factors up to $C$ to optimize this in $O( C \kappa n)$ time for $C, \kappa$ generally pre-fixed constants. 


\paragraph{Low-rank distance matrix $\mathbf{C}$.}\label{sec:low_rank_dist} A key work \cite{pmlr-v99-indyk19a} showed that one may approximately factor a distance matrix $\mathbf{C}$ with linear complexity in the number of points $n$ (Algorithm~\ref{alg:dist_factorization}). For certain costs, e.g. squared Euclidean, this factorization can be given for free \cite{Scetbon2021LowRankSF}. We rely on both of these for low-rank factorizations of the distance matrix, so that both the space of the coupling and pairwise distance matrix scale linearly.

\begin{algorithm}\label{alg:dist_factorization}
\caption{\quad Low-Rank approximation for distance matrix $\mathbf{C}$}
\begin{algorithmic}
\State Input point sets $\{ \mathbf{x}_{i}\}_{i=1}^{n}$, $\{ \mathbf{y}_{j} \}_{j=1}^{M}$ in metric space $\mathcal{X}$ and metric $d$
\State Pick indices $i^{*} \in [n],j^{*} \in [m]$ uniformly at random
\For{$i=1$ to $n$}
\State Update sample probability $p_{i} = d(\mathbf{x}_{i}, \mathbf{y}_{j^{*}})^{2} + d(\mathbf{x}_{i^{*}}, \mathbf{y}_{j^{*}})^{2} + \frac{1}{m}\sum_{j=1}^{m}d(\mathbf{x}_{i^{*}}, \mathbf{y}_{j})^{2}$
\EndFor
\State Sample $O(r/\varepsilon)$ rows $\mathbf{C}_{i,.} \sim Categorical\left(\frac{p_{i}}{\sum_{i} p_{i}}\right)$
\State Compute $\mathbf{U}$ using \cite{10.1145/1039488.1039494}
\State Compute $\mathbf{V}$ using \cite{chen2017condition}
\State return $\mathbf{V}$, $\mathbf{U}$
\end{algorithmic}
\end{algorithm}

\begin{figure}[tbp]
\begin{center}
\centerline{\includegraphics[width=.8\linewidth]{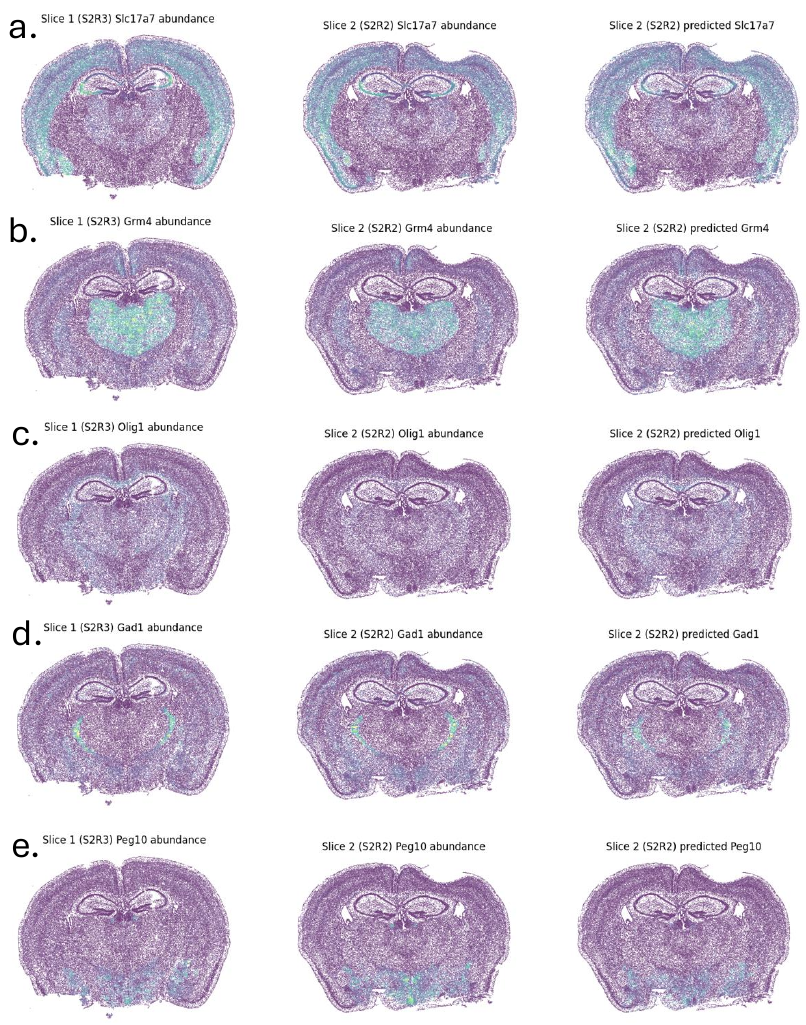}}
\caption{Abundance of 5 genes (\textbf{a.} \emph{Slc17a7}, \textbf{b.} \emph{Grm4}, \textbf{c.} \emph{Olig1}, \textbf{d.} \emph{Gad1}, \textbf{e.} \emph{Peg10}) in Allen Brain Atlas MERFISH dataset \cite{clifton2023stalign}. From left to right are plotted (1) abundance in the first dataset, (2) abundance in the second dataset, and (3) predicted abundance via transfer of the abundances in the first dataset under the mapping of \ourmethod.}
\label{fig:merfish_supp}
\end{center}
\end{figure}

\begin{figure*}[tbp]
\begin{center}
\centerline{\includegraphics[width=\linewidth]{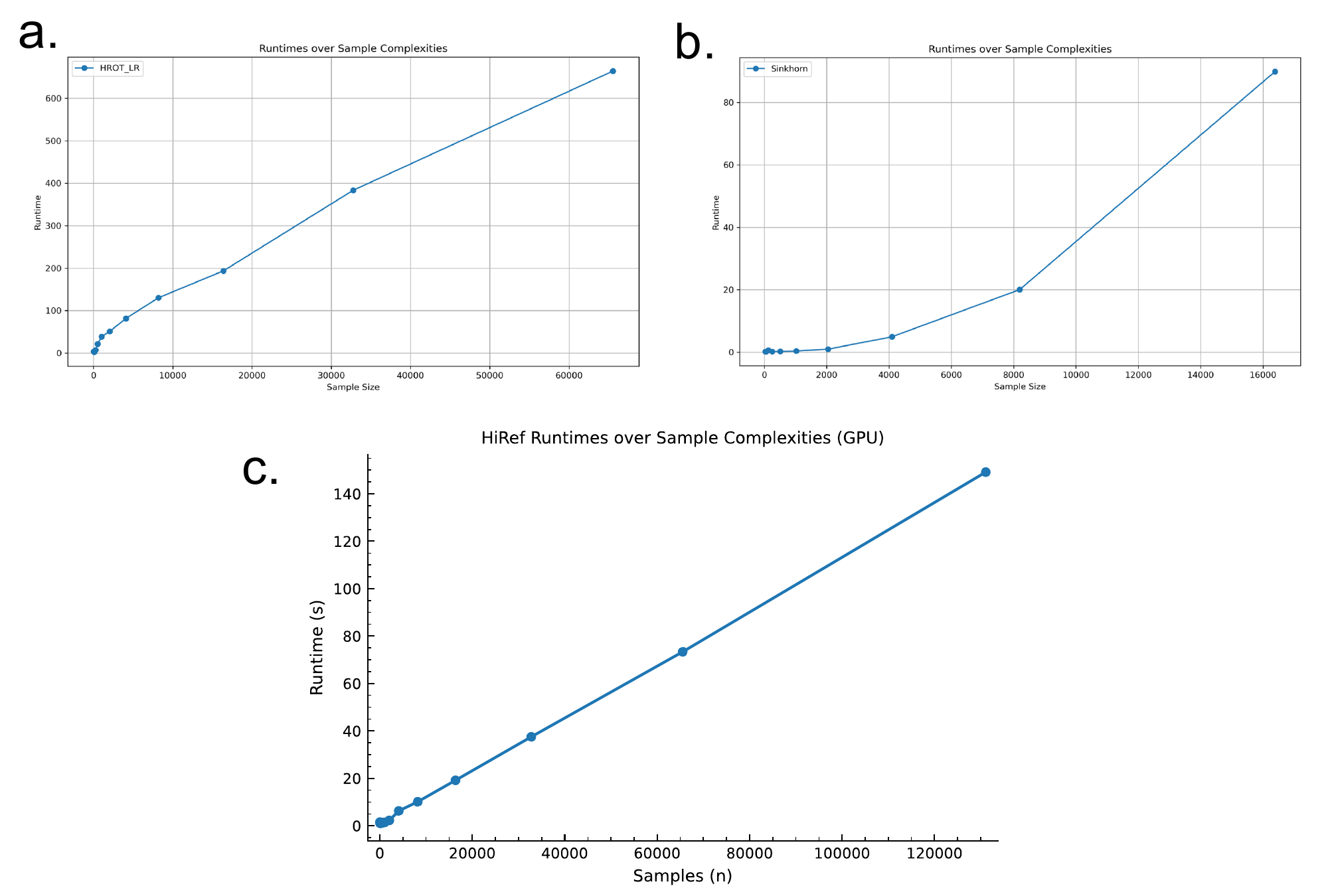}}
\caption{Runtime scaling across sample-complexities $n$ of \textbf{a.} \ourmethodFullCap{} (\ourmethod{}) and \textbf{b.} Sinkhorn for Euclidean cost, $\lVert \cdot \rVert_{2}$ (single CPU core). \ourmethodFullCap{} exhibits linear scaling for increasing $n$, whereas Sinkhorn exhibits quadratic scaling and is unable to run beyond 16k points. \textbf{c.} Runtime scaling of \ourmethod{} (GPU).}
\label{fig:runtime_scaling}
\end{center}
\end{figure*}

\begin{figure*}[tbp]
\begin{center}
\centerline{\includegraphics[width=0.5\linewidth]{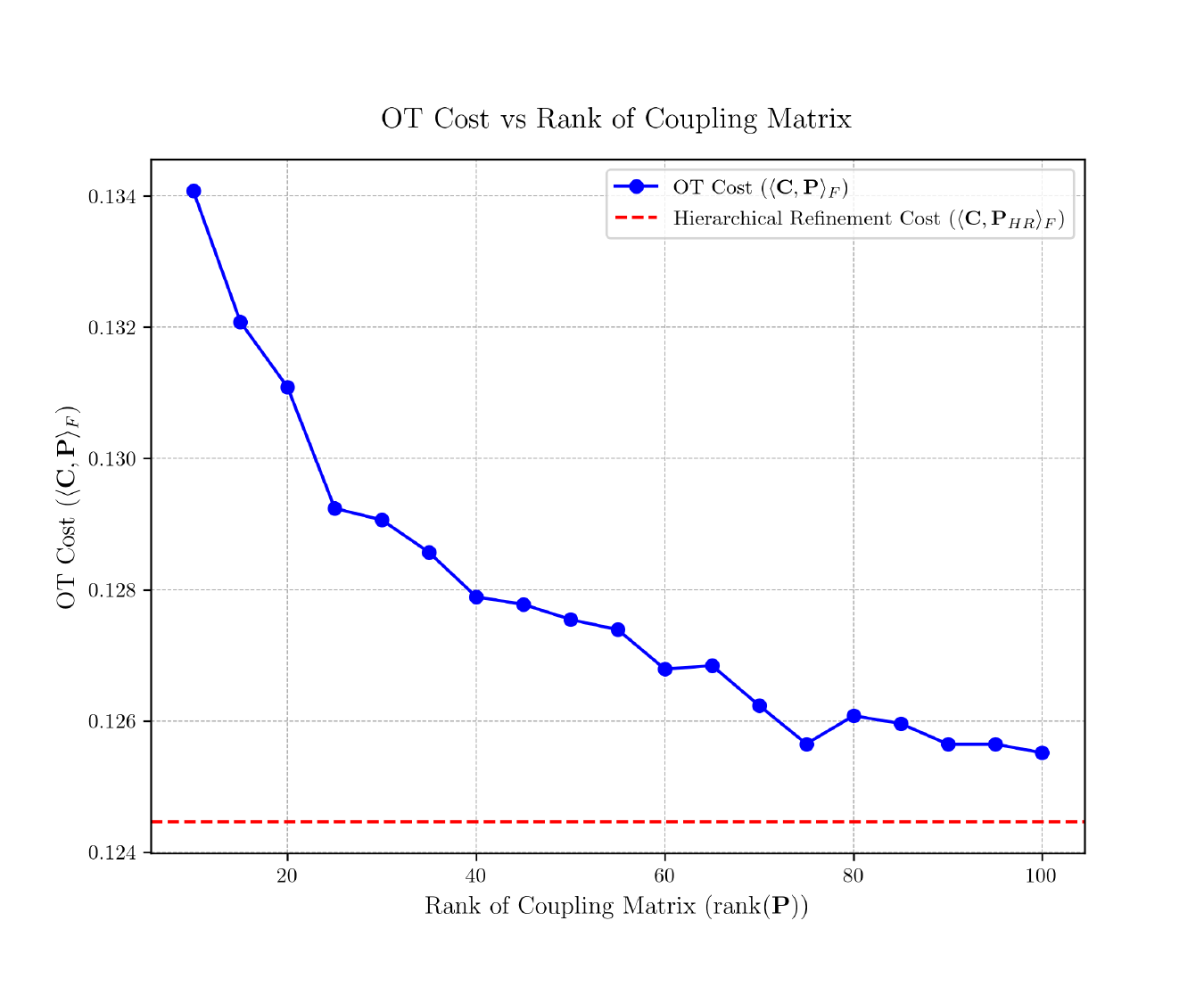}}
\caption{\ourmethod\ cost and the cost of the low-rank OT solution of FRLC \cite{FRLC} across the coupling rank $r \in [5, 100]$. }
\label{fig:samples}
\end{center}
\end{figure*}

\begin{figure*}[tbp]
\begin{center}
\centerline{\includegraphics[width=0.75\linewidth]{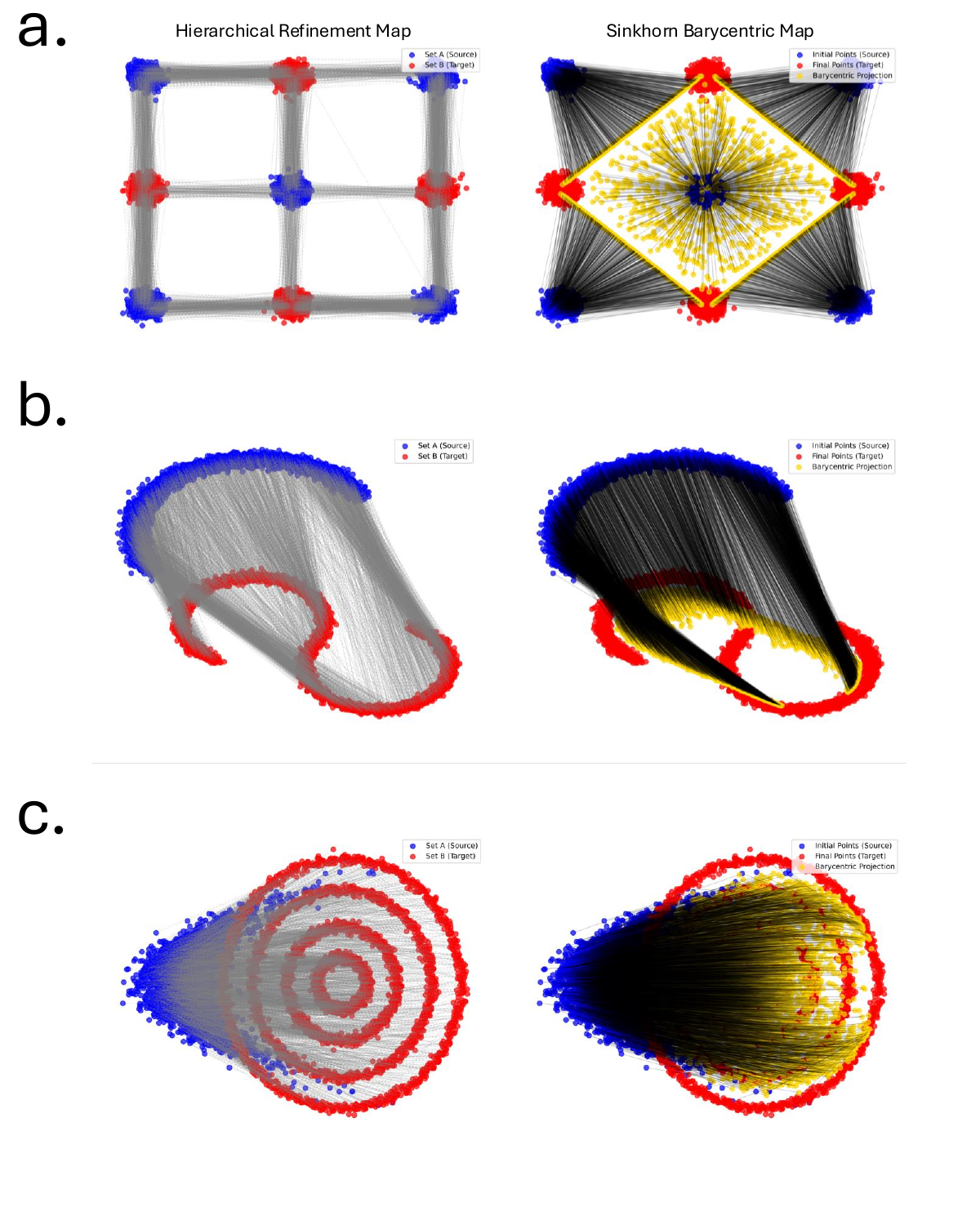}}
\caption{Comparison of optimal transport maps under (1) the \ourmethod\ alignment, and (2) the Sinkhorn \cite{sinkhorn} barycentric projection. \textbf{a.} The checkerboard dataset of \cite{pmlr-v119-makkuva20a}, \textbf{b.} the Half-moon and S-curve dataset of \cite{buzun2024expectile}, and \textbf{c.} the MAF-Moons Rings dataset of \cite{buzun2024expectile}.
}
\label{fig:synth_ex_supp}
\end{center}
\end{figure*}

\begin{figure*}[tbp]
\begin{center}
\centerline{\includegraphics[width=0.7\linewidth]{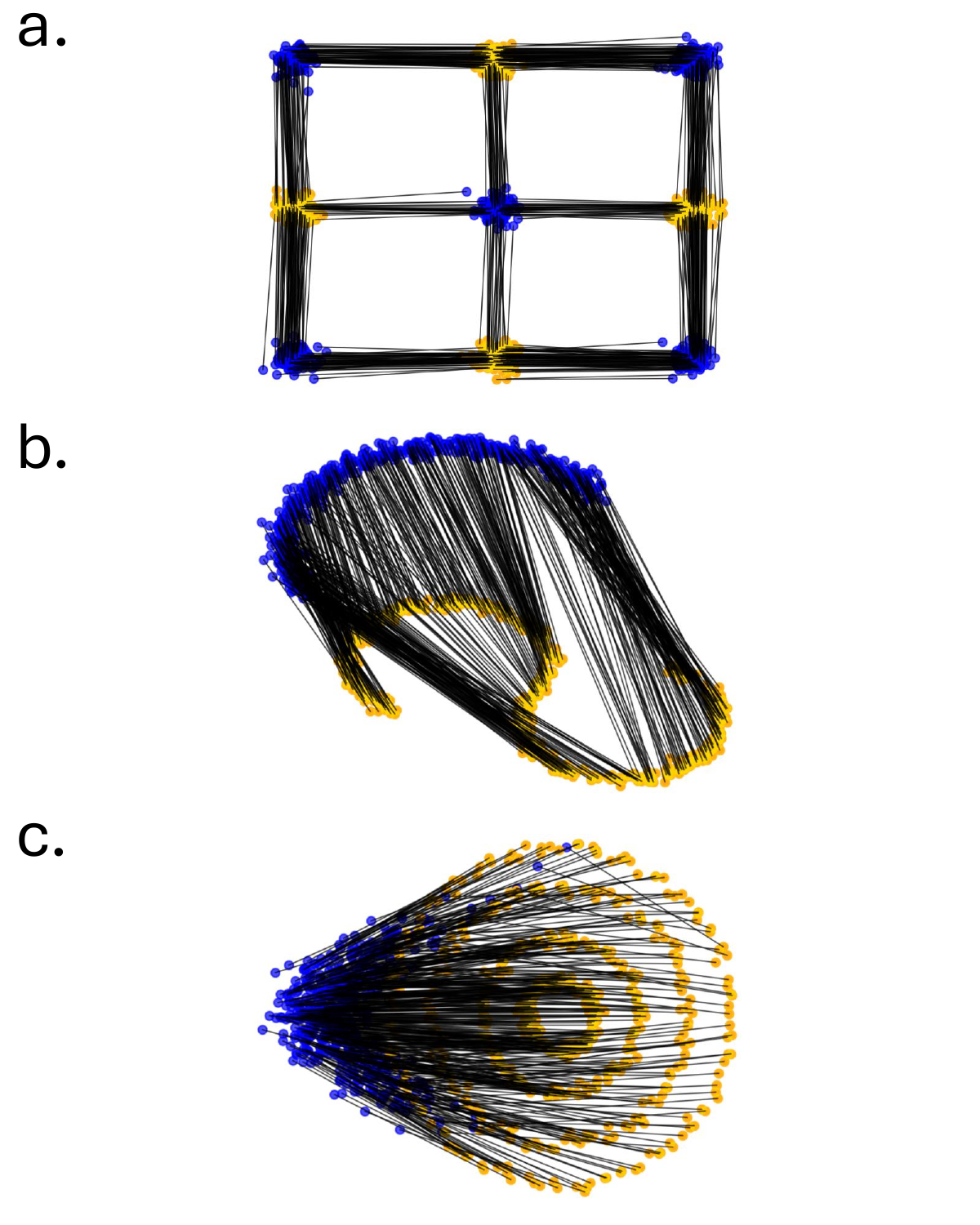}}
\caption{Alignments of the synthetic datasets of \cite{pmlr-v119-makkuva20a, buzun2024expectile} using the optimal dual revised simplex \cite{Huangfu2017} algorithm for small instances (512 points). \textbf{a.} The checkerboard dataset of \cite{pmlr-v119-makkuva20a}, \textbf{b.} the Half-moon and S-curve dataset of \cite{buzun2024expectile}, and \textbf{c.} the MAF-Moons Rings dataset of \cite{buzun2024expectile}.}
\label{fig:synth_ex_supp_2}
\end{center}
\end{figure*}

\end{document}